\newcommand*{\ICLR}{}

\newcommand*{\CAMREADY}{}

\ifdefined\NIPS
	\PassOptionsToPackage{numbers}{natbib}
	\documentclass{article}
	\ifdefined\CAMREADY	
		\usepackage[final]{nips_2016}
	\else
		\usepackage{nips_2016}
	\fi
	\usepackage[utf8]{inputenc} 
	\usepackage[T1]{fontenc}    
	\usepackage{hyperref}       	
	\usepackage{url}            	
	\usepackage{booktabs}       
	\usepackage{amsfonts}       
	\usepackage{nicefrac}       	
	\usepackage{microtype}      
\fi
\ifdefined\CVPR
	\documentclass[10pt,twocolumn,letterpaper]{article}
	\usepackage{cvpr}
	\usepackage{times}
	\usepackage{epsfig}
	\usepackage{graphicx}
	\usepackage{amsmath}
	\usepackage{amssymb}
	\usepackage[square,comma,numbers]{natbib}
\fi
\ifdefined\AISTATS
	\documentclass[twoside]{article}
	\ifdefined\CAMREADY
		\usepackage[accepted]{aistats2015}
	\else
		\usepackage{aistats2015}
	\fi
	\usepackage{url}
	\usepackage[round,authoryear]{natbib}
\fi
\ifdefined\ICML
	\documentclass{article}
	\usepackage{times}
	\usepackage{graphicx}
	\usepackage{subfigure}
	\usepackage{algorithm}
	\usepackage{algorithmic}
	\usepackage{hyperref}
	
	\ifdefined\CAMREADY
		\usepackage[accepted]{icml2016}
	\else
		\usepackage{icml2016}
	\fi
	\usepackage{natbib}
\fi
\ifdefined\ICLR
	\documentclass{article}
	\usepackage{iclr2017_conference,times}
	\usepackage{hyperref}
	\usepackage{url}

	\ifdefined\CAMREADY
		\iclrfinalcopy
	\fi
\fi

\usepackage{color}
\usepackage{amsfonts}
\usepackage{amsmath}
\usepackage{algorithm}
\usepackage{algorithmic}
\usepackage{graphicx}
\usepackage{nicefrac}
\usepackage{bbm}
\usepackage{amsthm}
\usepackage{wrapfig}
\usepackage{tikz}
\usepackage[titletoc,title]{appendix}
\usepackage{stmaryrd}


\newtheorem{theorem}{Theorem}

\newtheorem{claim}{Claim}
\newtheorem{fact}{Fact}

\def\be{\begin{equation}}
\def\ee{\end{equation}}
\def\beas{\begin{eqnarray*}}
\def\eeas{\end{eqnarray*}}
\def\bea{\begin{eqnarray}}
\def\eea{\end{eqnarray}}

\newcommand{\x}{{\mathbf x}}

\newcommand{\uu}{{\mathbf u}}
\newcommand{\vv}{{\mathbf v}}
\newcommand{\w}{{\mathbf w}}

\newcommand{\e}{{\mathbf e}}
\newcommand{\aaa}{{\mathbf a}}

\newcommand{\1}{{\mathbf 1}}
\newcommand{\0}{{\mathbf 0}}
\newcommand{\A}{{\mathcal A}}
\newcommand{\B}{{\mathcal B}}
\newcommand{\CC}{{\mathcal C}}

\newcommand{\E}{{\mathcal E}}

\newcommand{\R}{{\mathbb R}}
\newcommand{\N}{{\mathbb N}}
\newcommand{\alphabf}{{\boldsymbol{\alpha}}}

\newcommand{\abs}[1]{\left\lvert#1 \right\rvert}
\newcommand{\norm}[1]{\left\|#1 \right\|}

\newcommand{\inprod}[2]  {\left\langle{#1},{#2}\right\rangle}

\newcommand{\cupdot}{\mathbin{\mathaccent\cdot\cup}}

\newcommand{\odotuo}[2]  {\underset{#1}{\overset{#2}{\odot}}}
\newcommand\shorteq{\rule[.4ex]{3pt}{0.4pt}\llap{\rule[.8ex]{3pt}{0.4pt}}}
\newcommand{\mat}[1]{\llbracket#1\rrbracket}
\newcommand{\matflex}[1]{\left\llbracket#1\right\rrbracket}

\newcommand{\simnets}{cohen2014simnets}
\newcommand{\deepsimnets}{cohen2015deep}
\newcommand{\cactd}{cohen2015expressive}
\newcommand{\crngtd}{cohen2016convolutional}
\newcommand{\tmm}{sharir2016tensorial}

\ifdefined\CVPR
	\usepackage[breaklinks=true,bookmarks=false]{hyperref}
	\ifdefined\CAMREADY
		\cvprfinalcopy 
	\fi
	
\fi

\ifdefined\ICML
	\newcommand*{\ABBR}{}
\fi
\ifdefined\NIPS
	\newcommand*{\ABBR}{}
\fi
\ifdefined\ICLR
	\newcommand*{\ABBR}{}
\fi
\ifdefined\ABBR
	\newcommand{\eg}{\emph{e.g.}}
	\newcommand{\ie}{\emph{i.e.}}
	
	\newcommand{\etc}{\emph{etc.}}
	
	\newcommand{\wrt}{w.r.t.}

\fi

\begin{document}

\ifdefined\NIPS
	\title{Inductive Bias of Deep Convolutional Networks through Pooling Geometry}
	\author{
	Nadav Cohen \\
	The Hebrew University of Jerusalem \\
	\texttt{cohennadav@cs.huji.ac.il} \\
	\And 
	Amnon Shashua \\
	The Hebrew University of Jerusalem \\
	\texttt{shashua@cs.huji.ac.il} \\
	}
	\maketitle
\fi
\ifdefined\CVPR
	\title{Inductive Bias of Deep Convolutional Networks through Pooling Geometry}
	\author{
	Nadav Cohen \\
	The Hebrew University of Jerusalem \\	
	\texttt{cohennadav@cs.huji.ac.il} \\
	\and
	Amnon Shashua \\
	The Hebrew University of Jerusalem \\
	\texttt{shashua@cs.huji.ac.il} \\
	}
	\maketitle
\fi
\ifdefined\AISTATS
	\twocolumn[
	\aistatstitle{Inductive Bias of Deep Convolutional Networks through Pooling Geometry}
	\ifdefined\CAMREADY
		\aistatsauthor{Nadav Cohen \And Amnon Shashua}
		\aistatsaddress{The Hebrew University of Jerusalem \And The Hebrew University of Jerusalem}
	\else
		\aistatsauthor{Anonymous Author 1 \And Anonymous Author 2}
		\aistatsaddress{Unknown Institution 1 \And Unknown Institution 2}
	\fi
	]	
\fi
\ifdefined\ICML
	\icmltitlerunning{Inductive Bias of Deep Convolutional Networks through Pooling Geometry}
	\twocolumn[
	\icmltitle{Inductive Bias of Deep Convolutional Networks through Pooling Geometry}
	\icmlauthor{Nadav Cohen}{cohennadav@cs.huji.ac.il}
	\icmladdress{The Hebrew University of Jerusalem}
	\icmlauthor{Amnon Shashua}{shashua@cs.huji.ac.il}
	\icmladdress{The Hebrew University of Jerusalem}
	\icmlkeywords{deep learning, convolutional networks, inductive bias, correlations}
	\vskip 0.3in
	]
\fi
\ifdefined\ICLR
	\title{Inductive Bias of Deep Convolutional \\ Networks through Pooling Geometry}
	\author{Nadav Cohen \& Amnon Shashua \\ \texttt{\{cohennadav,shashua\}@cs.huji.ac.il}}
	\maketitle
\fi

\begin{abstract}

Our formal understanding of the inductive bias that drives the success of convolutional networks on computer vision tasks is limited.
In particular, it is unclear what makes hypotheses spaces born from convolution and pooling operations so suitable for natural images.
In this paper we study the ability of convolutional networks to model correlations among regions of their input.
We theoretically analyze convolutional arithmetic circuits, and empirically validate our findings on other types of convolutional networks as well.
Correlations are formalized through the notion of separation rank, which for a given partition of the input, measures how far a function is from being separable.
We show that a polynomially sized deep network supports exponentially high separation ranks for certain input partitions, while being limited to polynomial separation ranks for others.
The network's pooling geometry effectively determines which input partitions are favored, thus serves as a means for controlling the inductive bias.
Contiguous pooling windows as commonly employed in practice favor interleaved partitions over coarse ones, orienting the inductive bias towards the statistics of natural images.
Other pooling schemes lead to different preferences, and this allows tailoring the network to data that departs from the usual domain of natural imagery.
In addition to analyzing deep networks, we show that shallow ones support only linear separation ranks, and by this gain insight into the benefit of functions brought forth by depth -- they are able to efficiently model strong correlation under favored partitions of the input.

\end{abstract}

\section{Introduction} \label{sec:intro}

A central factor in the application of machine learning to a given task is the \emph{inductive bias}, \ie~the choice of hypotheses space from which learned functions are taken.
The restriction posed by the inductive bias is necessary for practical learning, and reflects prior knowledge regarding the task at hand.
Perhaps the most successful exemplar of inductive bias to date manifests itself in the use of convolutional networks (\cite{lecun1995convolutional}) for computer vision tasks.
These hypotheses spaces are delivering unprecedented visual recognition results~(\eg~\cite{Krizhevsky:2012wl,Szegedy:2014tb,simonyan2014very,he2015deep}), largely responsible for the resurgence of deep learning~(\cite{LeCun:2015dt}).
Unfortunately, our formal understanding of the inductive bias behind convolutional networks is limited~--~the assumptions encoded into these models, which seem to form an excellent prior knowledge for imagery data, are for the most part a mystery.

Existing works studying the inductive bias of deep networks (not necessarily convolutional) do so in the context of \emph{depth efficiency}, essentially arguing that for a given amount of resources, more layers result in higher expressiveness.
More precisely, depth efficiency refers to a situation where a function realized by a deep network of polynomial size, requires super-polynomial size in order to be realized (or approximated) by a shallower network.
In recent years, a large body of research was devoted to proving existence of depth efficiency under different types of architectures (see for example~\cite{bengio2011shallow,pascanu2013number,montufar2014number,telgarsky2015representation,eldan2015power,poggio2015theory,mhaskar2016learning}).
Nonetheless, despite the wide attention it is receiving, depth efficiency does not convey the complete story behind the inductive bias of deep networks.
While it does suggest that depth brings forth functions that are otherwise unattainable, it does not explain why these functions are useful.
Loosely speaking, the hypotheses space of a polynomially sized deep network covers a small fraction of the space of all functions.
We would like to understand why this small fraction is so successful in practice.

A specific family of convolutional networks gaining increased attention is that of \emph{convolutional arithmetic circuits}.
These models follow the standard paradigm of locality, weight sharing and pooling, yet differ from the most conventional convolutional networks in that their point-wise activations are linear, with non-linearity originating from product pooling.
Recently, \cite{\cactd}~analyzed the depth efficiency of convolutional arithmetic circuits, showing that besides a negligible (zero measure) set, all functions realizable by a deep network require exponential size in order to be realized (or approximated) by a shallow one.
This result, termed \emph{complete depth efficiency}, stands in contrast to previous depth efficiency results, which merely showed \emph{existence} of functions efficiently realizable by deep networks but not by shallow ones.
Besides their analytic advantage, convolutional arithmetic circuits are also showing promising empirical performance.
In particular, they are equivalent to SimNets~--~a deep learning architecture that excels in computationally constrained settings~(\cite{\simnets,\deepsimnets}), and in addition, have recently been utilized for classification with missing data~(\cite{\tmm}).
Motivated by these theoretical and practical merits, we focus our analysis in this paper on convolutional arithmetic circuits, viewing them as representative of the class of convolutional networks.
We empirically validate our conclusions with both convolutional arithmetic circuits and \emph{convolutional rectifier networks}~--~convolutional networks with rectified linear (ReLU,~\cite{nair2010rectified}) activation and max or average pooling.
Adaptation of the formal analysis to networks of the latter type, similarly to the adaptation of the analysis in~\cite{\cactd} carried out by~\cite{\crngtd}, is left for future work.

\smallskip

Our analysis approaches the study of inductive bias from the direction of function inputs.
Specifically, we study the ability of convolutional arithmetic circuits to model correlation between regions of their input.
To analyze the correlations of a function, we consider different partitions of input regions into disjoint sets, and ask how far the function is from being separable \wrt~these partitions.
Distance from separability is measured through the notion of \emph{separation rank}~(\cite{beylkin2002numerical}), which can be viewed as a surrogate of the $L^2$ distance from the closest separable function.
For a given function and partition of its input, high separation rank implies that the function induces strong correlation between sides of the partition, and vice versa.

We show that a deep network supports exponentially high separation ranks for certain input partitions, while being limited to polynomial or linear (in network size) separation ranks for others.
The network's pooling geometry effectively determines which input partitions are favored in terms of separation rank, \ie~which partitions enjoy the possibility of exponentially high separation rank with polynomial network size, and which require network to be exponentially large.
The standard choice of square contiguous pooling windows favors interleaved (entangled) partitions over coarse ones that divide the input into large distinct areas.
Other choices lead to different preferences, for example pooling windows that join together nodes with their spatial reflections lead to favoring partitions that split the input symmetrically.
We conclude that in terms of modeled correlations, pooling geometry controls the inductive bias, and the particular design commonly employed in practice orients it towards the statistics of natural images (nearby pixels more correlated than ones that are far apart).
Moreover, when processing data that departs from the usual domain of natural imagery, prior knowledge regarding its statistics can be used to derive respective pooling schemes, and accordingly tailor the inductive bias.

With regards to depth efficiency, we show that separation ranks under favored input partitions are exponentially high for all but a negligible set of the functions realizable by a deep network.
Shallow networks on the other hand, treat all partitions equally, and support only linear (in network size) separation ranks.
Therefore, almost all functions that may be realized by a deep network require a replicating shallow network to have exponential size.
By this we return to the complete depth efficiency result of~\cite{\cactd}, but with an added important insight into the benefit of functions brought forth by depth~--~they are able to efficiently model strong correlation under favored partitions of the input.

\smallskip

The remainder of the paper is organized as follows.
Sec.~\ref{sec:prelim} provides a brief presentation of necessary background material from the field of tensor analysis.
Sec.~\ref{sec:cac} describes the convolutional arithmetic circuits we analyze, and their relation to tensor decompositions.
In sec.~\ref{sec:sep_rank} we convey the concept of separation rank, on which we base our analyses in sec.~\ref{sec:analysis} and~\ref{sec:inductive_bias}.
The conclusions from our analyses are empirically validated in sec.~\ref{sec:exp}.
Finally, sec.~\ref{sec:discussion} concludes.

\section{Preliminaries} \label{sec:prelim}

The analyses carried out in this paper rely on concepts and results from the field of tensor analysis.
In this section we establish the minimal background required in order to follow our arguments
\footnote{
The definitions we give are actually concrete special cases of more abstract algebraic definitions as given in~\cite{Hackbusch-book}.
We limit the discussion to these special cases since they suffice for our needs and are easier to grasp.
}
, referring the interested reader to~\cite{Hackbusch-book} for a broad and comprehensive introduction to the field.

The core concept in tensor analysis is a \emph{tensor}, which for our purposes may simply be thought of as a multi-dimensional array.
The \emph{order} of a tensor is defined to be the number of indexing entries in the array, which are referred to as \emph{modes}.
The \emph{dimension} of a tensor in a particular mode is defined as the number of values that may be taken by the index in that mode.
For example, a $4$-by-$3$ matrix is a tensor of order~$2$, \ie~it has two modes, with dimension~$4$ in mode~$1$ and dimension~$3$ in mode~$2$.
If $\A$ is a tensor of order $N$ and dimension $M_i$ in each mode $i\in[N]:=\{1,\ldots,N\}$, the space of all configurations it can take is denoted, quite naturally, by $\R^{M_1{\times\cdots\times}M_N}$.

A fundamental operator in tensor analysis is the \emph{tensor product}, which we denote by $\otimes$.
It is an operator that intakes two tensors $\A\in\R^{M_1{\times\cdots\times}M_P}$ and $\B\in\R^{M_{P+1}{\times\cdots\times}M_{P+Q}}$ (orders $P$ and $Q$ respectively), and returns a tensor $\A\otimes\B\in\R^{M_1{\times\cdots\times}M_{P+Q}}$ (order $P+Q$) defined by: $(\A\otimes\B)_{d_1{\ldots}d_{P+Q}}=\A_{d_1{\ldots}d_P}\cdot\B_{d_{P+1}{\ldots}d_{P+Q}}$.
Notice that in the case $P=Q=1$, the tensor product reduces to the standard outer product between vectors, \ie~if $\uu\in\R^{M_1}$ and $\vv\in\R^{M_2}$, then $\uu\otimes\vv$ is no other than the rank-1 matrix $\uu\vv^\top\in\R^{M_1{\times}M_2}$.

We now introduce the important concept of matricization, which is essentially the rearrangement of a tensor as a matrix.
Suppose $\A$ is a tensor of order $N$ and dimension $M_i$ in each mode $i\in[N]$, and let $(I,J)$ be a partition of $[N]$, \ie~$I$ and~$J$ are disjoint subsets of $[N]$ whose union gives~$[N]$.
We may write $I=\{i_1,\ldots,i_{\abs{I}}\}$ where $i_1<\cdots<i_{\abs{I}}$, and similarly $J=\{j_1,\ldots,j_{\abs{J}}\}$ where $j_1<\cdots<j_{\abs{J}}$.
The \emph{matricization of $\A$ \wrt~the partition $(I,J)$}, denoted $\mat{\A}_{I,J}$, is the $\prod_{t=1}^{\abs{I}}M_{i_t}$-by-$\prod_{t=1}^{\abs{J}}M_{j_t}$ matrix holding the entries of $\A$ such that $\A_{d_1{\ldots}d_N}$ is placed in row index $1+\sum_{t=1}^{\abs{I}}(d_{i_t}-1)\prod_{t'=t+1}^{\abs{I}}M_{i_{t'}}$ and column index $1+\sum_{t=1}^{\abs{J}}(d_{j_t}-1)\prod_{t'=t+1}^{\abs{J}}M_{j_{t'}}$.
If $I=\emptyset$ or $J=\emptyset$, then by definition $\mat{\A}_{I,J}$ is a row or column (respectively) vector of dimension $\prod_{t=1}^{N}M_t$ holding $\A_{d_1{\ldots}d_N}$ in entry $1+\sum_{t=1}^{N}(d_t-1)\prod_{t'=t+1}^{N}M_{t'}$.

A well known matrix operator is the \emph{Kronecker product}, which we denote by $\odot$.
For two matrices $A\in\R^{M_1{\times}M_2}$ and $B\in\R^{N_1{\times}N_2}$, $A{\odot}B$ is the matrix in $\R^{M_{1}N_{1}{\times}M_{2}N_{2}}$ holding $A_{ij}B_{kl}$ in row index $(i-1)N_1+k$ and column index $(j-1)N_2+l$.
Let $\A$ and $\B$ be tensors of orders $P$ and $Q$ respectively, and let $(I,J)$ be a partition of $[P+Q]$.
The basic relation that binds together the tensor product, the matricization operator, and the Kronecker product, is: 
\be
\mat{\A\otimes\B}_{I,J}=\mat{\A}_{I\cap[P],J\cap[P]}\odot\mat{\B}_{(I-P)\cap[Q],(J-P)\cap[Q]}
\label{eq:mat_tensor_prod}
\ee
where $I-P$ and $J-P$ are simply the sets obtained by subtracting $P$ from each of the elements in~$I$ and~$J$ respectively.
In words, eq.~\ref{eq:mat_tensor_prod} implies that the matricization of the tensor product between $\A$ and $\B$ \wrt~the partition $(I,J)$ of $[P+Q]$, is equal to the Kronecker product between two matricizations: that of~$\A$ \wrt~the partition of~$[P]$ induced by the lower values of $(I,J)$, and that of~$\B$ \wrt~the partition of~$[Q]$ induced by the higher values of $(I,J)$.

\section{Convolutional arithmetic circuits} \label{sec:cac}

\begin{figure*}
\includegraphics[width=\textwidth]{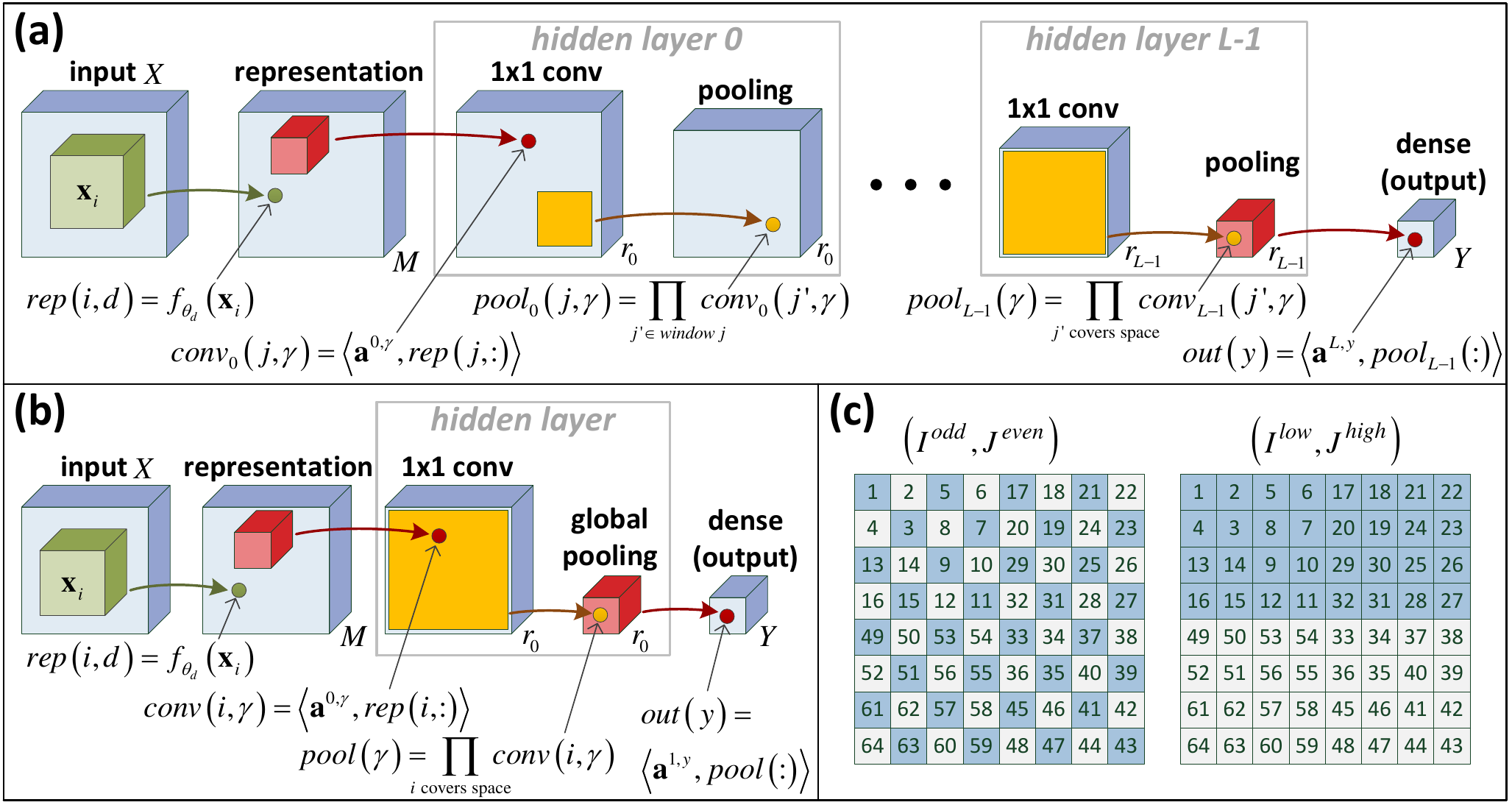}
\vspace{-5mm}
\caption{
Best viewed in color.
\textit{\textbf{(a)}}~Convolutional arithmetic circuit architecture analyzed in this paper (see description in sec.~\ref{sec:cac}).
\textit{\textbf{(b)}}~Shallow network with global pooling in its single hidden layer.
\textit{\textbf{(c)}}~Illustration of input patch ordering for deep network with $2\times2$~pooling windows, along with patterns induced by the partitions $(I^{odd},J^{even})$ and $(I^{low},J^{high})$ (eq.~\ref{eq:part_odd_even} and~\ref{eq:part_low_high} respectively).
}
\label{fig:nets_patterns}
\vspace{-3mm}
\end{figure*}

The convolutional arithmetic circuit architecture on which we focus in this paper is the one considered in~\cite{\cactd}, portrayed in fig.~\ref{fig:nets_patterns}(a).
Instances processed by a network are represented as $N$-tuples of $s$-dimensional vectors.
They are generally thought of as images, with the $s$-dimensional vectors corresponding to local patches.
For example, instances could be $32$-by-$32$ RGB images, with local patches being $5\times5$ regions crossing the three color bands. 
In this case, assuming a patch is taken around every pixel in an image (boundaries padded), we have $N=1024$ and $s=75$.
Throughout the paper, we denote a general instance by $X=(\x_1,\ldots,\x_N)$, with $\x_1\ldots\x_N\in\R^s$ standing for its patches.

The first layer in a network is referred to as \emph{representation}.
It consists of applying~$M$ \emph{representation functions} $f_{\theta_1}{\ldots}f_{\theta_M}:\R^s\to\R$ to all patches, thereby creating~$M$ feature maps.
In the case where representation functions are chosen as $f_{\theta_d}(\x)=\sigma(\w_d^\top\x+b_d)$, with parameters $\theta_d=(\w_d,b_d)\in\R^s\times\R$ and some point-wise activation~$\sigma(\cdot)$, the representation layer reduces to a standard convolutional layer.
More elaborate settings are also possible, for example modeling the representation as a cascade of convolutional layers with pooling in-between.
Following the representation, a network includes~$L$ hidden layers indexed by $l=0{\ldots}L-1$.
Each hidden layer~$l$ begins with a \emph{$1\times1$ conv} operator, which is simply a three-dimensional convolution with~$r_l$ channels and filters of spatial dimensions $1$-by-$1$.
\footnote{
\cite{\cactd} consider two settings for the $1\times1$~conv operator.
The first, referred to as \emph{weight sharing}, is the one described above, and corresponds to standard convolution.
The second is more general, allowing filters that slide across the previous layer to have different weights at different spatial locations.
It is shown in~\cite{\cactd} that without weight sharing, a convolutional arithmetic circuit with one hidden layer (or more) is universal, \ie~can realize any function if its size (width) is unbounded.
This property is imperative for the study of depth efficiency, as that requires shallow networks to ultimately be able to replicate any function realized by a deep network.
In this paper we limit the presentation to networks with weight sharing, which are not universal.
We do so because they are more conventional, and since our entire analysis is oblivious to whether or not weights are shared (applies as is to both settings).
The only exception is where we reproduce the depth efficiency result of~\cite{\cactd}.
There, we momentarily consider networks without weight sharing.
}
This is followed by spatial pooling, that decimates feature maps by taking products of non-overlapping two-dimensional windows that cover the spatial extent.
The last of the~$L$ hidden layers ($l=L-1$) reduces feature maps to singletons (its pooling operator is global), creating a vector of dimension~$r_{L-1}$.
This vector is mapped into $Y$ network outputs through a final dense linear layer.

Altogether, the architectural parameters of a network are the type of representation functions ($f_{\theta_d}$), the pooling window shapes and sizes (which in turn determine the number of hidden layers $L$), and the number of channels in each layer ($M$ for representation, $r_0{\ldots}r_{L-1}$ for hidden layers, $Y$ for output).
Given these architectural parameters, the learnable parameters of a network are the representation weights ($\theta_d$ for channel $d$), the conv weights ($\aaa^{l,\gamma}$ for channel $\gamma$ of hidden layer $l$), and the output weights ($\aaa^{L,y}$ for output node $y$).

For a particular setting of weights, every node (neuron) in a given network realizes a function from~$(\R^s)^N$ to~$\R$.
The \emph{receptive field} of a node refers to the indexes of input patches on which its function may depend.
For example, the receptive field of node~$j$ in channel~$\gamma$ of conv operator at hidden layer~$0$ is $\{j\}$, and that of an output node is $[N]$, corresponding to the entire input.
Denote by~$h_{(l,\gamma,j)}$ the function realized by node~$j$ of channel~$\gamma$ in conv operator at hidden layer~$l$, and let $I^{(l,\gamma,j)}\subset[N]$ be its receptive field.
By the structure of the network it is evident that $I^{(l,\gamma,j)}$ does not depend on~$\gamma$, so we may write~$I^{(l,j)}$ instead.
Moreover, assuming pooling windows are uniform across channels (as customary with convolutional networks), and taking into account the fact that they do not overlap, we conclude that $I^{(l,j_1)}$~and~$I^{(l,j_2)}$ are necessarily disjoint if~$j_1{\neq}j_2$.
A simple induction over $l=0{\ldots}L-1$ then shows that~$h_{(l,\gamma,j)}$ may be expressed as $h_{(l,\gamma,j)}(\x_{i_1},\ldots,\x_{i_T})=\sum_{d_1{\ldots}d_T=1}^{M}\A_{d_1{\ldots}d_T}^{(l,\gamma,j)}\prod_{t=1}^{T} f_{\theta_{d_t}}(\x_{i_t})$, where $\{i_1,\ldots,i_T\}$ stands for the receptive field $I^{(l,j)}$, and~$\A^{(l,\gamma,j)}$ is a tensor of order~$T=|I^{(l,j)}|$ and dimension~$M$ in each mode, with entries given by polynomials in the network's conv weights $\{\aaa^{l,\gamma}\}_{l,\gamma}$.
Taking the induction one step further (from last hidden layer to network output), we obtain the following expression for functions realized by network outputs:
\be
h_y\left(\x_1,\ldots,\x_N\right)=\sum\nolimits_{d_1{\ldots}d_N=1}^M\A_{d_1{\ldots}d_N}^{y}\prod\nolimits_{i=1}^{N} f_{\theta_{d_i}}(\x_i)
\label{eq:h_y}
\ee
$y\in[Y]$ here is an output node index, and~$h_y$ is the function realized by that node.
$\A^y$~is a tensor of order~$N$ and dimension~$M$ in each mode, with entries given by polynomials in the network's conv weights $\{\aaa^{l,\gamma}\}_{l,\gamma}$ and output weights $\aaa^{L,y}$.
Hereafter, terms such as \emph{function realized by a network} or \emph{coefficient tensor realized by a network}, are to be understood as referring to~$h_y$ or~$\A^y$ respectively.
Next, we present explicit expressions for~$\A^y$ under two canonical networks~--~deep and shallow.

\paragraph{Deep network.} 
Consider a network as in fig.~\ref{fig:nets_patterns}(a), with pooling windows set to cover four entries each, resulting in $L=\log_{4}N$ hidden layers.
The linear weights of such a network are $\{\aaa^{0,\gamma}\in\R^M\}_{\gamma\in[r_0]}$ for conv operator in hidden layer~$0$, $\{\aaa^{l,\gamma}\in\R^{r_{l-1}}\}_{\gamma\in[r_l]}$ for conv operator in hidden layer $l=1{\ldots}L-1$, and $\{\aaa^{L,y}\in\R^{r_{L-1}}\}_{y\in[Y]}$ for dense output operator.
They determine the coefficient tensor~$\A^y$~(eq.~\ref{eq:h_y}) through the following recursive decomposition:
\bea
\underbrace{\phi^{1,\gamma}}_{\text{order $4$}} 
&=& \sum\nolimits_{\alpha=1}^{r_0} a_\alpha^{1,\gamma} \cdot \otimes^{4} \aaa^{0,\alpha}
\qquad,\gamma\in[r_1]
\nonumber \\
&\cdots& 
\nonumber\\
\underbrace{\phi^{l,\gamma}}_{\text{order $4^l$}}
&=& \sum\nolimits_{\alpha=1}^{r_{l-1}} a_\alpha^{l,\gamma} \cdot \otimes^{4} \phi^{l-1,\alpha}
\quad~,l\in\{2{\ldots}L-1\},\gamma\in[r_l]
\nonumber\\
&\cdots& 
\nonumber\\
\underbrace{\A^y}_{\text{order $4^L{\shorteq}N$}}
&=& \sum\nolimits_{\alpha=1}^{r_{L-1}} a_\alpha^{L,y} \cdot \otimes^{4} \phi^{L-1,\alpha}
\label{eq:hr_decomp} 
\eea
$a_\alpha^{l,\gamma}$ and $a_\alpha^{L,y}$ here are scalars representing entry~$\alpha$ in the vectors $\aaa^{l,\gamma}$ and $\aaa^{L,y}$ respectively, and the symbol~$\otimes$ with a superscript stands for a repeated tensor product, \eg~$\otimes^{4}\aaa^{0,\alpha}:=\aaa^{0,\alpha}\otimes\aaa^{0,\alpha}\otimes\aaa^{0,\alpha}\otimes\aaa^{0,\alpha}$.
To verify that under pooling windows of size four~$\A^y$ is indeed given by eq.~\ref{eq:hr_decomp}, simply plug the rows of the decomposition into eq.~\ref{eq:h_y}, starting from bottom and continuing upwards. 
For context, eq.~\ref{eq:hr_decomp} describes what is known as a hierarchical tensor decomposition (see chapter~11 in~\cite{Hackbusch-book}), with underlying tree over modes being a full quad-tree (corresponding to the fact that the network's pooling windows cover four entries each).

\paragraph{Shallow network.}
The second network we pay special attention to is shallow, comprising a single hidden layer with global pooling~--~see illustration in fig.~\ref{fig:nets_patterns}(b).
The linear weights of such a network are $\{\aaa^{0,\gamma}\in\R^M\}_{\gamma\in[r_0]}$ for hidden conv operator and $\{\aaa^{1,y}\in\R^{r_0}\}_{y\in[Y]}$ for dense output operator.
They determine the coefficient tensor~$\A^y$~(eq.~\ref{eq:h_y}) as follows:
\be
\A^y=\sum\nolimits_{\gamma=1}^{r_0} a^{1,y}_{\gamma} \cdot \otimes^{N}\aaa^{0,\gamma}
\label{eq:cp_decomp} 
\ee
where $a^{1,y}_{\gamma}$ stands for entry~$\gamma$ of $\aaa^{1,y}$, and again, the symbol $\otimes$ with a superscript represents a repeated tensor product.
The tensor decomposition in eq.~\ref{eq:cp_decomp} is an instance of the classic CP decomposition, also known as rank-1 decomposition (see~\cite{Kolda-Bader2009} for a historic survey).

\medskip

To conclude this section, we relate the background material above, as well as our contribution described in the upcoming sections, to the work of~\cite{\cactd}.
The latter shows that with arbitrary coefficient tensors~$\A^y$, functions~$h_y$ as in eq.~\ref{eq:h_y} form a universal hypotheses space.
It is then shown that convolutional arithmetic circuits as in fig.~\ref{fig:nets_patterns}(a) realize such functions by applying tensor decompositions to~$\A^y$, with the type of decomposition determined by the structure of a network (number of layers, number of channels in each layer~\etc).
The deep network (fig.~\ref{fig:nets_patterns}(a) with size-$4$ pooling windows and $L=\log_{4}N$ hidden layers) and the shallow network (fig.~\ref{fig:nets_patterns}(b)) presented hereinabove are two special cases, whose corresponding tensor decompositions are given in eq.~\ref{eq:hr_decomp} and~\ref{eq:cp_decomp} respectively.
The central result in~\cite{\cactd} relates to inductive bias through the notion of depth efficiency~--~it is shown that in the parameter space of a deep network, all weight settings but a set of (Lebesgue) measure zero give rise to functions that can only be realized (or approximated) by a shallow network if the latter has exponential size.
This result does not relate to the characteristics of instances $X=(\x_1,\ldots,\x_N)$, it only treats the ability of shallow networks to replicate functions realized by deep networks.

In this paper we draw a line connecting the inductive bias to the nature of~$X$, by studying the relation between a network's architecture and its ability to model correlation among patches~$\x_i$.
Specifically, in sec.~\ref{sec:sep_rank} we consider partitions~$(I,J)$ of~$[N]$ ($I{\cupdot}J=[N]$, where~$\cupdot$ stands for disjoint union), and present the notion of separation rank as a measure of the correlation modeled between the patches indexed by~$I$ and those indexed by~$J$.
In sec.~\ref{sec:analysis:sep2mat} the separation rank of a network's function~$h_y$ \wrt~a partition~$(I,J)$ is proven to be equal to the rank of $\mat{\A^y}_{I,J}$~--~the matricization of the coefficient tensor~$\A^y$ \wrt~$(I,J)$.
Sec.~\ref{sec:analysis:deep} derives lower and upper bounds on this rank for a deep network, showing that it supports exponential separation ranks with polynomial size for certain partitions, whereas for others it is required to be exponentially large.
Subsequently, sec.~\ref{sec:analysis:shallow} establishes an upper bound on $rank\mat{\A^y}_{I,J}$ for shallow networks, implying that these must be exponentially large in order to model exponential separation rank under any partition, and thus cannot efficiently replicate a deep network's correlations.
Our analysis concludes in sec.~\ref{sec:inductive_bias}, where we discuss the pooling geometry of a deep network as a means for controlling the inductive bias by determining a correspondence between partitions~$(I,J)$ and spatial partitions of the input.
Finally, we demonstrate experimentally in sec.~\ref{sec:exp} how different pooling geometries lead to superior performance in different tasks.
Our experiments include not only convolutional arithmetic circuits, but also convolutional rectifier networks, \ie~convolutional networks with ReLU activation and max or average pooling.

\section{Separation rank} \label{sec:sep_rank}

In this section we define the concept of separation rank for functions realized by convolutional arithmetic circuits (sec.~\ref{sec:cac}), \ie~real functions that take as input $X=(\x_1,\ldots,\x_N)\in(\R^s)^N$.
The separation rank serves as a measure of the correlations such functions induce between different sets of input patches, \ie~different subsets of the variable set $\{\x_1,\ldots,\x_N\}$.

Let $(I,J)$ be a partition of input indexes, \ie~$I$ and~$J$ are disjoint subsets of~$[N]$ whose union gives~$[N]$.
We may write $I=\{i_1,\ldots,i_{\abs{I}}\}$ where $i_1<\cdots<i_{\abs{I}}$, and similarly $J=\{j_1,\ldots,j_{\abs{J}}\}$ where $j_1<\cdots<j_{\abs{J}}$.
For a function $h:(\R^s)^N\to\R$, the \emph{separation rank \wrt~the partition $(I,J)$} is defined as follows:
\footnote{
If $I=\emptyset$ or $J=\emptyset$ then by definition $sep(h;I,J)=1$ (unless $h\equiv0$, in which case $sep(h;I,J)=0$).
}
\bea
sep(h;I,J):=\min\left\{R\in\N\cup\{0\}:\exists{g_1{\ldots}g_R:(\R^s)^{\abs{I}}\to\R,g'_1{\ldots}g'_R:(\R^s)^{\abs{J}}\to\R}~~s.t.\right.
\quad~\label{eq:sep_rank}\\
\left.h(\x_1,\ldots,\x_N)=\sum\nolimits_{\nu=1}^{R}g_{\nu}(\x_{i_1},\ldots,\x_{i_{\abs{I}}})g'_\nu(\x_{j_1},\ldots,\x_{j_{\abs{J}}})
\right\}
\nonumber
\eea
In words, it is the minimal number of summands that together give $h$, where each summand is \emph{separable \wrt~$(I,J)$}, \ie~is equal to a product of two functions~--~one that intakes only patches indexed by $I$, and another that intakes only patches indexed by $J$.
One may wonder if it is at all possible to express $h$ through such summands, \ie~if the separation rank of $h$ is finite.
From the theory of tensor products between $L^2$ spaces (see~\cite{Hackbusch-book} for a comprehensive coverage), we know that any $h{\in}L^2((\R^s)^N)$, \ie~any $h$ that is measurable and square-integrable, may be approximated arbitrarily well by summations of the form $\sum_{\nu=1}^{R}g_{\nu}(\x_{i_1},\ldots,\x_{i_{\abs{I}}})g'_\nu(\x_{j_1},\ldots,\x_{j_{\abs{J}}})$.
Exact realization however is only guaranteed at the limit $R\to\infty$, thus in general the separation rank of $h$ need not be finite.
Nonetheless, as we show in sec.~\ref{sec:analysis}, for the class of functions we are interested in, namely functions realizable by convolutional arithmetic circuits, separation ranks are always finite.

The concept of separation rank was introduced in~\cite{beylkin2002numerical} for numerical treatment of high-dimensional functions, and has since been employed for various applications, \eg~quantum chemistry~(\cite{harrison2003multiresolution}), particle engineering~(\cite{hackbusch2006efficient}) and machine learning~(\cite{beylkin2009multivariate}).
If the separation rank of a function \wrt~a partition of its input is equal to~$1$, the function is separable, meaning it does not model any interaction between the sets of variables.
Specifically, if $sep(h;I,J)=1$ then there exist $g:(\R^s)^{\abs{I}}\to\R$ and $g':(\R^s)^{\abs{J}}\to\R$ such that $h(\x_1,\ldots,\x_N)=g(\x_{i_1},\ldots,\x_{i_{\abs{I}}})g'(\x_{j_1},\ldots,\x_{j_{\abs{J}}})$, and the function~$h$ cannot take into account consistency between the values of $\{\x_{i_1},\ldots,\x_{i_{\abs{I}}}\}$ and those of $\{\x_{j_1},\ldots,\x_{j_{\abs{J}}}\}$.
In a statistical setting, if~$h$ is a probability density function, this would mean that $\{\x_{i_1},\ldots,\x_{i_{\abs{I}}}\}$ and $\{\x_{j_1},\ldots,\x_{j_{\abs{J}}}\}$ are statistically independent.
The higher $sep(h;I,J)$ is, the farther~$h$ is from this situation, \ie~the more it models dependency between $\{\x_{i_1},\ldots,\x_{i_{\abs{I}}}\}$ and $\{\x_{j_1},\ldots,\x_{j_{\abs{J}}}\}$, or equivalently, the stronger the correlation it induces between the patches indexed by~$I$ and those indexed by~$J$.

The interpretation of separation rank as a measure of deviation from separability is formalized in app.~\ref{app:sep_rank_L2}, where it is shown that~$sep(h;I,J)$ is closely related to the $L^2$~distance of~$h$ from the set of separable functions \wrt~$(I,J)$.
Specifically, we define~$D(h;I,J)$ as the latter distance divided by the $L^2$~norm of~$h$
\footnote{
The normalization (division by norm) is of critical importance~--~without it rescaling~$h$ would accordingly rescale~$D(h;I,J)$, rendering the latter uninformative in terms of deviation from separability.
}
, and show that $sep(h;I,J)$ provides an upper bound on~$D(h;I,J)$.
While it is not possible to lay out a general lower bound on~$D(h;I,J)$ in terms of~$sep(h;I,J)$, we show that the specific lower bounds on~$sep(h;I,J)$ underlying our analyses can be translated into lower bounds on~$D(h;I,J)$.
This implies that our results, facilitated by upper and lower bounds on separation ranks of convolutional arithmetic circuits, may equivalently be framed in terms of $L^2$~distances from separable functions.

\section{Correlation analysis} \label{sec:analysis}

In this section we analyze convolutional arithmetic circuits (sec.~\ref{sec:cac}) in terms of the correlations they can model between sides of different input partitions, \ie~in terms of the separation ranks (sec.~\ref{sec:sep_rank}) they support under different partitions~$(I,J)$ of~$[N]$.
We begin in sec.~\ref{sec:analysis:sep2mat}, establishing a correspondence between separation ranks and coefficient tensor matricization ranks.
This correspondence is then used in sec.~\ref{sec:analysis:deep} and~\ref{sec:analysis:shallow} to analyze the deep and shallow networks (respectively) presented in sec.~\ref{sec:cac}.
We note that we focus on these particular networks merely for simplicity of presentation~--~the analysis can easily be adapted to account for alternative networks with different depths and pooling schemes.

\subsection{From separation rank to matricization rank} \label{sec:analysis:sep2mat}

Let~$h_y$ be a function realized by a convolutional arithmetic circuit, with corresponding coefficient tensor~$\A^y$ (eq.~\ref{eq:h_y}).
Denote by~$(I,J)$ an arbitrary partition of $[N]$, \ie~$I{\cupdot}J=[N]$.
We are interested in studying $sep(h_y;I,J)$~--~the separation rank of~$h_y$ \wrt~$(I,J)$ (eq.~\ref{eq:sep_rank}).
As claim~\ref{claim:sep_mat_ranks_equal} below states, assuming representation functions $\{f_{\theta_d}\}_{d\in[M]}$ are linearly independent (if they are not, we drop dependent functions and modify~$\A^y$ accordingly
\footnote{
Suppose for example that $f_{\theta_M}$ is dependent, \ie~there exist $\alpha_1\ldots\alpha_{M-1}\in\R$ such that $f_{\theta_M}(\x)=\sum_{d=1}^{M-1}\alpha_d{\cdot}f_{\theta_d}(\x)$.
We may then plug this into eq.~\ref{eq:h_y}, and obtain an expression for~$h_y$ that has $f_{\theta_1}{\ldots}f_{\theta_{M-1}}$ as representation functions, and a coefficient tensor with dimension~$M-1$ in each mode.
Continuing in this fashion, one arrives at an expression for~$h_y$ whose representation functions are linearly independent.
}
), this separation rank is equal to the rank of~$\mat{\A^y}_{I,J}$~--~the matricization of the coefficient tensor~$\A^y$ \wrt~the partition $(I,J)$.
Our problem thus translates to studying ranks of matricized coefficient tensors.

\begin{claim} \label{claim:sep_mat_ranks_equal}
Let $h_y$ be a function realized by a convolutional arithmetic circuit (fig.~\ref{fig:nets_patterns}(a)), with corresponding coefficient tensor $\A^y$ (eq.~\ref{eq:h_y}).
Assume that the network's representation functions~$f_{\theta_d}$ are linearly independent, and that they, as well as the functions~$g_\nu,g'_\nu$ in the definition of separation rank (eq.~\ref{eq:sep_rank}), are measurable and square-integrable.
\footnote{
Square-integrability of representation functions~$f_{\theta_d}$ may seem as a limitation at first glance, as for example neurons $f_{\theta_d}(\x)=\sigma(\w_d^\top\x+b_d)$, with parameters $\theta_d=(\w_d,b_d)\in\R^s\times\R$ and sigmoid or ReLU activation $\sigma(\cdot)$, do not meet this condition.
However, since in practice our inputs are bounded (\eg~they represent image pixels by holding intensity values), we may view functions as having compact support, which, as long as they are continuous (holds in all cases of interest), ensures square-integrability.
}
Then, for any partition $(I,J)$ of $[N]$, it holds that $sep(h_y;I,J)=rank\mat{\A^y}_{I,J}$.
\end{claim}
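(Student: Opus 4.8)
The plan is to prove the identity $sep(h_y;I,J)=rank\mat{\A^y}_{I,J}$ by establishing the two inequalities separately. The conceptual bridge in both directions is that separability of a summand across the partition $(I,J)$ at the function level mirrors a rank-one factorization of $\mat{\A^y}_{I,J}$ at the coefficient level: via eq.~\ref{eq:h_y}, a coefficient tensor whose matricization is a single rank-one matrix produces a single separable product, and summing such pieces couples the number of separable summands to the number of rank-one matrices. The only asymmetry between the two inequalities is that the easy one requires no hypothesis beyond square-integrability, whereas the reverse one is where linear independence of the $f_{\theta_d}$ is essential.

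For the direction $sep(h_y;I,J)\le rank\mat{\A^y}_{I,J}$, I would start from a rank decomposition $\mat{\A^y}_{I,J}=\sum_{\nu=1}^{R}\uu_\nu\vv_\nu^\top$ with $R=rank\mat{\A^y}_{I,J}$, where $\uu_\nu$ is indexed by the $I$-modes and $\vv_\nu$ by the $J$-modes. Reshaping back to tensor indices gives $\A^y_{d_1\ldots d_N}=\sum_\nu(\uu_\nu)_{\mathbf{d}_I}(\vv_\nu)_{\mathbf{d}_J}$, and substituting into eq.~\ref{eq:h_y}, then splitting each product $\prod_{i}f_{\theta_{d_i}}(\x_i)$ across the partition, yields $h_y=\sum_{\nu=1}^{R}g_\nu g'_\nu$ with $g_\nu(\x_{i_1},\ldots,\x_{i_{\abs{I}}})=\sum_{\mathbf{d}_I}(\uu_\nu)_{\mathbf{d}_I}\prod_{t}f_{\theta_{d_{i_t}}}(\x_{i_t})$ and $g'_\nu$ defined analogously on the $J$-block. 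Each $g_\nu,g'_\nu$ is a finite combination of products of the square-integrable representation functions, hence admissible in eq.~\ref{eq:sep_rank}, so this exhibits a separation of size $R$.

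The reverse direction $sep(h_y;I,J)\ge rank\mat{\A^y}_{I,J}$ is the crux, and is where linear independence enters. Given an optimal separation $h_y=\sum_{\nu=1}^{R}g_\nu g'_\nu$ with $R=sep(h_y;I,J)$, I would recover the entries of $\mat{\A^y}_{I,J}$ by applying mode-wise dual functionals. Since the $f_{\theta_d}$ are linearly independent and square-integrable, their Gram matrix $G_{dd'}=\inprod{f_{\theta_d}}{f_{\theta_{d'}}}$ is invertible, so the functions $q_d:=\sum_{d'=1}^{M}(G^{-1})_{dd'}f_{\theta_{d'}}$ lie in $L^2(\R^s)$ and satisfy the biorthogonality $\inprod{q_d}{f_{\theta_{d'}}}=\delta_{dd'}$. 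Fixing a row index $(d_{i_1},\ldots,d_{i_{\abs{I}}})$ and column index $(d_{j_1},\ldots,d_{j_{\abs{J}}})$, I would apply to both sides the functional that integrates the $\x_{i_t}$-argument against $q_{d_{i_t}}$ and the $\x_{j_t}$-argument against $q_{d_{j_t}}$, over all $N$ variables. Applied to eq.~\ref{eq:h_y}, biorthogonality collapses every product except the one matching the fixed indices, returning exactly the matrix entry of $\mat{\A^y}_{I,J}$; applied to $\sum_\nu g_\nu g'_\nu$, the functional factorizes over the $I$- and $J$-variables, giving $\sum_{\nu=1}^{R}P_\nu Q_\nu$ with $P_\nu$ depending only on the row index and $Q_\nu$ only on the column index. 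This writes $\mat{\A^y}_{I,J}$ as a product of an $M^{\abs{I}}{\times}R$ matrix with an $R{\times}M^{\abs{J}}$ matrix, so $rank\mat{\A^y}_{I,J}\le R$.

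I expect the main obstacle to be precisely this reverse direction: the $g_\nu,g'_\nu$ supplied by the separation rank are arbitrary measurable, square-integrable functions, with no a priori reason to lie in the span of products of representation functions, so a matrix factorization cannot be read off them directly. The device that resolves this is the biorthogonal system $\{q_d\}$ together with the fact that the chosen functional factorizes across $(I,J)$; square-integrability of the $g_\nu,g'_\nu$ keeps all $L^2$ pairings finite and lets Fubini justify separating the integral, while linear independence of the $f_{\theta_d}$ guarantees both that the dual system exists and that the extracted quantity is genuinely the coefficient-tensor entry. Combining the two inequalities then yields the claimed equality.
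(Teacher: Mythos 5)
Your proposal is correct, and while the easy direction ($sep \le rank$) coincides with the paper's argument -- reshape a rank decomposition of $\mat{\A^y}_{I,J}$ into tensors and plug into eq.~\ref{eq:h_y} -- your proof of the hard direction ($sep \ge rank$) takes a genuinely different route. The paper proceeds by orthogonal projection: it introduces the finite-dimensional subspaces $V,V',U$ spanned by products of the $f_{\theta_d}$, decomposes each $g_\nu = p_\nu + \delta_\nu$ and $g'_\nu = p'_\nu + \delta'_\nu$ with $p_\nu \in V$, $\delta_\nu \in V^\perp$ (and analogously for the primed functions), shows the cross terms are orthogonal to $U$, invokes $h_y \in U$ to kill the residual, and finally matches coefficients using a separate lemma stating that products of linearly independent functions are linearly independent. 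You instead build the biorthogonal system $q_d = \sum_{d'}(G^{-1})_{dd'}f_{\theta_{d'}}$ from the inverse Gram matrix (invertible precisely because the $f_{\theta_d}$ are linearly independent) and pair both sides of the optimal separation $h_y = \sum_\nu g_\nu g'_\nu$ against the product kernels $\prod_t q_{d_{i_t}}\prod_t q_{d_{j_t}}$; since this kernel lies in $L^2((\R^s)^N)$, the pairing is a bounded linear functional, Fubini factorizes it across $(I,J)$, and you read off $\mat{\A^y}_{I,J}$ as a product of an $M^{\abs{I}}{\times}R$ and an $R{\times}M^{\abs{J}}$ matrix. Your route is more constructive and arguably more economical: the single-variable biorthogonality $\inprod{q_d}{f_{\theta_{d'}}}=\delta_{dd'}$ immediately yields biorthogonality of the product functions, so you never need the paper's linear-independence-of-products lemma, nor the projection/residual argument (you apply the functional to an exact equality in $L^2$, so nothing needs to be shown to vanish). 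What the paper's heavier machinery buys in exchange is a structural byproduct -- the factors of an optimal separation can be replaced by their projections onto $V,V'$, i.e.\ taken inside the finite-dimensional spans -- and a set of facts (orthogonal decomposition, orthogonality of product functions) that are reused verbatim in its appendix on $L^2$ distances from separable functions. Both proofs hinge on exactly the same two hypotheses in exactly the same roles: linear independence (invertible Gram matrix for you, coefficient matching for the paper) and square-integrability with Fubini.
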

\begin{proof}
See app.~\ref{app:proofs:sep_mat_ranks_equal}.
\end{proof}

As the linear weights of a network vary, so do the coefficient tensors ($\A^y$) it gives rise to.
Accordingly, for a particular partition $(I,J)$, a network does not correspond to a single value of $rank\mat{\A^y}_{I,J}$, but rather supports a range of values.
We analyze this range by quantifying its maximum, which reflects the strongest correlation that the network can model between the input patches indexed by~$I$ and those indexed by~$J$.
One may wonder if the maximal value of $rank\mat{\A^y}_{I,J}$ is the appropriate statistic to measure, as a-priori, it may be that $rank\mat{\A^y}_{I,J}$ is maximal for very few of the network's weight settings, and much lower for all the rest.
Apparently, as claim~\ref{claim:mat_rank_max_everywhere} below states, this is not the case, and in fact $rank\mat{\A^y}_{I,J}$ is maximal under almost all of the network's weight settings.

\begin{claim} \label{claim:mat_rank_max_everywhere}
Consider a convolutional arithmetic circuit (fig.~\ref{fig:nets_patterns}(a)) with corresponding coefficient tensor $\A^y$ (eq.~\ref{eq:h_y}).
$\A^y$ depends on the network's linear weights~--~$\{\aaa^{l,\gamma}\}_{l,\gamma}$ and $\aaa^{L,y}$, thus for a given partition $(I,J)$ of $[N]$, $rank\mat{\A^y}_{I,J}$ is a function of these weights.
This function obtains its maximum almost everywhere (\wrt~Lebesgue measure).
\end{claim}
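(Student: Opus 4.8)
The plan is to exploit the fact, already established in the derivation leading to eq.~\ref{eq:h_y} (and made explicit by the decompositions in eq.~\ref{eq:hr_decomp} and~\ref{eq:cp_decomp}), that every entry of $\A^y$ is a polynomial in the network's linear weights $\{\aaa^{l,\gamma}\}_{l,\gamma}$ and $\aaa^{L,y}$. Since matricization merely rearranges the entries of $\A^y$ into a matrix, every entry of $\mat{\A^y}_{I,J}$ is likewise such a polynomial, so $\mat{\A^y}_{I,J}$ is a matrix-valued polynomial map of the weights, which I collect into a single vector $\theta\in\R^K$ for the appropriate $K$. The function whose behavior we must control, $rank\mat{\A^y}_{I,J}$, is then a function of $\theta$.

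First I would let $r^\ast$ denote the maximum of $rank\mat{\A^y}_{I,J}$ over all $\theta\in\R^K$; this maximum is well defined, being a nonnegative integer bounded above by $\min\{\prod_{t}M_{i_t},\prod_{t}M_{j_t}\}$. By definition there is some $\theta_0$ attaining it. Next I would invoke the standard rank--minor characterization: $rank\mat{\A^y}_{I,J}(\theta)<r^\ast$ if and only if every $r^\ast{\times}r^\ast$ minor of $\mat{\A^y}_{I,J}(\theta)$ vanishes. Each such minor is a determinant of a submatrix whose entries are polynomials in $\theta$, hence is itself a polynomial in $\theta$; there are finitely many of them. The sub-maximal locus $\{\theta:rank\mat{\A^y}_{I,J}(\theta)<r^\ast\}$ is precisely their common zero set. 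Since at $\theta_0$ the rank equals $r^\ast$, at least one of these minors is nonzero there, so that particular minor $p(\theta)$ is a polynomial that is not identically zero.

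Finally I would invoke the elementary fact that the zero set of a real multivariate polynomial that does not vanish identically has Lebesgue measure zero. The sub-maximal locus is contained in $\{\theta:p(\theta)=0\}$, and therefore has measure zero; on its full-measure complement the rank is at least $r^\ast$, hence exactly $r^\ast$. This shows $rank\mat{\A^y}_{I,J}$ attains its maximum almost everywhere, as claimed.

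The main obstacle is not any single hard step but rather making sure the one nontrivial analytic ingredient is justified: the statement that a nonzero real polynomial vanishes only on a measure-zero set. I would either cite a standard reference or dispatch it by a short induction on the number of variables (a nonzero univariate polynomial has finitely many roots, and the general case follows via Fubini's theorem). Everything else is bookkeeping, namely confirming the polynomial dependence of the entries of $\A^y$ on the weights (already in hand) and applying the rank--minor characterization.
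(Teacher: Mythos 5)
Your proposal is correct and follows essentially the same route as the paper's own proof: both observe that the entries of $\mat{\A^y}_{I,J}$ are polynomials in the weights, fix a weight setting attaining the maximal rank $r^\ast$, extract a non-identically-zero $r^\ast\times r^\ast$ minor (the paper takes the top-left block without loss of generality, you take an arbitrary nonvanishing one---the same thing up to row/column permutation), and conclude via the fact that a nonzero real polynomial vanishes only on a set of Lebesgue measure zero (which the paper cites from the literature rather than proving by induction and Fubini). No gaps; the two arguments are interchangeable.
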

\begin{proof}
See app.~\ref{app:proofs:mat_rank_max_everywhere}.
\end{proof}

\subsection{Deep network} \label{sec:analysis:deep}

In this subsection we study correlations modeled by the deep network presented in sec.~\ref{sec:cac} (fig.~\ref{fig:nets_patterns}(a) with size-$4$ pooling windows and $L=\log_{4}N$ hidden layers).
In accordance with sec.~\ref{sec:analysis:sep2mat}, we do so by characterizing the maximal ranks of coefficient tensor matricizations under different partitions.

Recall from eq.~\ref{eq:hr_decomp} the hierarchical decomposition expressing a coefficient tensor~$\A^y$ realized by the deep network.
We are interested in matricizations of this tensor under different partitions of $[N]$.
Let $(I,J)$ be an arbitrary partition, \ie~$I{\cupdot}J=[N]$.
Matricizing the last level of eq.~\ref{eq:hr_decomp} \wrt~$(I,J)$, while applying the relation in eq.~\ref{eq:mat_tensor_prod}, gives:
\beas
\mat{\A^y}_{I,J}
&=&\sum\nolimits_{\alpha=1}^{r_{L-1}}a_{\alpha}^{L,y}\cdot\matflex{\phi^{L-1,\alpha}\otimes\phi^{L-1,\alpha}\otimes\phi^{L-1,\alpha}\otimes\phi^{L-1,\alpha}}_{I,J} \\
&=&\sum\nolimits_{\alpha=1}^{r_{L-1}}a_{\alpha}^{L,y}\cdot\matflex{\phi^{L-1,\alpha}\otimes\phi^{L-1,\alpha}}_{I\cap[2\cdot4^{L-1}],J\cap[2\cdot4^{L-1}]} \\
&&\qquad\qquad\qquad\odot\matflex{\phi^{L-1,\alpha}\otimes\phi^{L-1,\alpha}}_{(I-2\cdot4^{L-1})\cap[2\cdot4^{L-1}],(J-2\cdot4^{L-1})\cap[2\cdot4^{L-1}]} \\
\eeas
Applying eq.~\ref{eq:mat_tensor_prod} again, this time to matricizations of the tensor $\phi^{L-1,\alpha}\otimes\phi^{L-1,\alpha}$, we obtain:
\beas
\mat{\A^y}_{I,J}
&=&\sum\nolimits_{\alpha=1}^{r_{L-1}}a_{\alpha}^{L,y}\cdot\matflex{\phi^{L-1,\alpha}}_{I\cap[4^{L-1}],J\cap[4^{L-1}]} \\
&&\qquad\qquad\qquad\odot\matflex{\phi^{L-1,\alpha}}_{(I-4^{L-1})\cap[4^{L-1}],(J-4^{L-1})\cap[4^{L-1}]} \\
&&\qquad\qquad\qquad\odot\matflex{\phi^{L-1,\alpha}}_{(I-2\cdot4^{L-1})\cap[4^{L-1}],(J-2\cdot4^{L-1})\cap[4^{L-1}]} \\
&&\qquad\qquad\qquad\odot\matflex{\phi^{L-1,\alpha}}_{(I-3\cdot4^{L-1})\cap[4^{L-1}],(J-3\cdot4^{L-1})\cap[4^{L-1}]} 
\eeas
For every $k\in[4]$ define $I_{L-1,k}:=(I-(k-1)\cdot4^{L-1})\cap[4^{L-1}]$ and $J_{L-1,k}:=(J-(k-1)\cdot4^{L-1})\cap[4^{L-1}]$.
In words, $(I_{L-1,k},J_{L-1,k})$ represents the partition induced by~$(I,J)$ on the~$k$'th quadrant of~$[N]$, \ie~on the $k$'th size-$4^{L-1}$ group of input patches.
We now have the following matricized version of the last level in eq.~\ref{eq:hr_decomp}:
$$\mat{\A^y}_{I,J}=\sum\nolimits_{\alpha=1}^{r_{L-1}} a_\alpha^{L,y} \cdot \odotuo{t=1}{4} \mat{\phi^{L-1,\alpha}}_{I_{L-1,t},J_{L-1,t}}$$
where the symbol~$\odot$ with a running index stands for an iterative Kronecker product.
To derive analogous matricized versions for the upper levels of eq.~\ref{eq:hr_decomp}, we define for $l\in\{0{\ldots}L-1\},k\in[N/4^l]$:
\be
I_{l,k}:=(I-(k-1)\cdot4^l)\cap[4^l] {\qquad} J_{l,k}:=(J-(k-1)\cdot4^l)\cap[4^l]
\label{eq:_I_lk_J_lk}
\ee
That is to say, $(I_{l,k},J_{l,k})$ represents the partition induced by~$(I,J)$ on the set of indexes $\{(k-1)\cdot4^l+1,\ldots,k\cdot4^l\}$, \ie~on the $k$'th size-$4^l$ group of input patches.
With this notation in hand, traversing upwards through the levels of eq.~\ref{eq:hr_decomp}, with repeated application of the relation in eq.~\ref{eq:mat_tensor_prod}, one arrives at the following matrix decomposition for~$\mat{\A^y}_{I,J}$:
\bea
\underbrace{\mat{\phi^{1,\gamma}}_{I_{1,k},J_{1,k}}}_{\text{$M^{|I_{1,k}|}$-by-$M^{|J_{1,k}|}$}}
&=& \sum\nolimits_{\alpha=1}^{r_0} a_\alpha^{1,\gamma} \cdot \odotuo{t=1}{4} \mat{\aaa^{0,\alpha}}_{I_{0,4(k-1)+t},J_{0,4(k-1)+t}}
\qquad~~,\gamma\in[r_1]
\nonumber \\
&\cdots& 
\nonumber\\
\underbrace{\mat{\phi^{l,\gamma}}_{I_{l,k},J_{l,k}}}_{\text{$M^{|I_{l,k}|}$-by-$M^{|J_{l,k}|}$}}
&=& \sum\nolimits_{\alpha=1}^{r_{l-1}} a_\alpha^{l,\gamma} \cdot \odotuo{t=1}{4} \mat{\phi^{l-1,\alpha}}_{I_{l-1,4(k-1)+t},J_{l-1,4(k-1)+t}}
~,l\in\{2{\ldots}L-1\},\gamma\in[r_l]
\nonumber\\
&\cdots& 
\nonumber\\
\underbrace{\mat{\A^y}_{I,J}}_{\text{$M^{|I|}$-by-$M^{|J|}$}}
&=& \sum\nolimits_{\alpha=1}^{r_{L-1}} a_\alpha^{L,y} \cdot \odotuo{t=1}{4} \mat{\phi^{L-1,\alpha}}_{I_{L-1,t},J_{L-1,t}}
\label{eq:mat_hr_decomp} 
\eea

Eq.~\ref{eq:mat_hr_decomp} expresses $\mat{\A^y}_{I,J}$~--~the matricization \wrt~the partition $(I,J)$ of a coefficient tensor $\A^y$ realized by the deep network, in terms of the network's conv weights~$\{\aaa^{l,\gamma}\}_{l,\gamma}$ and output weights~$\aaa^{L,y}$.
As discussed above, our interest lies in the maximal rank that this matricization can take.
Theorem~\ref{theorem:deep_mat_rank_bounds} below provides lower and upper bounds on this maximal rank, by making use of eq.~\ref{eq:mat_hr_decomp}, and of the rank-multiplicative property of the Kronecker product ($rank(A{\odot}B)=rank(A){\cdot}rank(B)$).

\begin{theorem} \label{theorem:deep_mat_rank_bounds}
Let $(I,J)$ be a partition of $[N]$, and $\mat{\A^y}_{I,J}$ be the matricization \wrt~$(I,J)$ of a coefficient tensor $\A^y$ (eq.~\ref{eq:h_y}) realized by the deep network (fig.~\ref{fig:nets_patterns}(a) with size-$4$ pooling windows).
For every $l\in\{0{\ldots}L-1\}$ and $k\in[N/4^l]$, define~$I_{l,k}$ and~$J_{l,k}$ as in eq.~\ref{eq:_I_lk_J_lk}.
Then, the maximal rank that $\mat{\A^y}_{I,J}$ can take (when network weights vary) is:
\begin{itemize}
\item No smaller than $\min\{r_0,M\}^S$, where $S:=\abs{\{k\in[N/4]: I_{1,k}\neq\emptyset \wedge J_{1,k}\neq\emptyset\}}$.
\item No greater than $\min\{M^{\min\{\abs{I},\abs{J}\}},r_{L-1}\prod_{t=1}^{4}c^{L-1,t}\}$, where $c^{0,k}:=1$ for $k\in[N]$, and $c^{l,k}:=\min\{M^{\min\{\abs{I_{l,k}},\abs{J_{l,k}}\}},r_{l-1}\prod\nolimits_{t=1}^{4}c^{l-1,4(k-1)+t}\}$ for $l\in[L-1],k\in[N/4^l]$.
\end{itemize}
\end{theorem}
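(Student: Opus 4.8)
The plan is to treat the two bounds separately, working throughout from the matricized decomposition in eq.~\ref{eq:mat_hr_decomp} and relying on two elementary facts about matrix rank: subadditivity, $rank(A+B)\le rank(A)+rank(B)$, and the multiplicativity of the Kronecker product, $rank(A\odot B)=rank(A)\cdot rank(B)$.

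For the upper bound I would prove by induction on $l$ that $rank\mat{\phi^{l,\gamma}}_{I_{l,k},J_{l,k}}\le c^{l,k}$ for every channel $\gamma$ and every group $k\in[N/4^l]$. The base case $l=0$ is immediate: $\aaa^{0,\alpha}$ is an order-$1$ tensor, so any matricization of it is a single row or column vector of rank at most $1=c^{0,k}$. For the inductive step I apply $rank$ to the $l$-th line of eq.~\ref{eq:mat_hr_decomp}; subadditivity over the $r_{l-1}$ summands together with Kronecker multiplicativity inside each summand gives $rank\mat{\phi^{l,\gamma}}_{I_{l,k},J_{l,k}}\le\sum_{\alpha=1}^{r_{l-1}}\prod_{t=1}^{4}c^{l-1,4(k-1)+t}=r_{l-1}\prod_{t=1}^{4}c^{l-1,4(k-1)+t}$, while the matrix dimensions give the competing bound $M^{\min\{\abs{I_{l,k}},\abs{J_{l,k}}\}}$; the minimum of the two is exactly $c^{l,k}$. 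Applying the identical argument to the top line of eq.~\ref{eq:mat_hr_decomp} yields $rank\mat{\A^y}_{I,J}\le r_{L-1}\prod_{t=1}^{4}c^{L-1,t}$, and intersecting with the trivial dimension bound $M^{\min\{\abs{I},\abs{J}\}}$ completes this direction.

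For the lower bound it suffices to exhibit one weight assignment under which $rank\mat{\A^y}_{I,J}\ge\min\{r_0,M\}^S$, since any attained value lower-bounds the maximum (claim~\ref{claim:mat_rank_max_everywhere} additionally tells us this maximum is generic). My construction collapses the whole hierarchy into a single repeated tensor product of one level-$1$ block. Concretely, set $\aaa^{0,\alpha}=\e_\alpha$, the $\alpha$-th standard basis vector of $\R^M$, for $\alpha=1{\ldots}\min\{r_0,M\}$, take $a^{1,1}_\alpha\neq0$ for all these $\alpha$, and at every higher level select a single channel via $a^{l,\gamma}_\alpha=\delta_{\alpha,1}$ and $a^{L,y}_\alpha=\delta_{\alpha,1}$. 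These selection weights turn each level into a pure tensor product, so that $\A^y=\otimes^{N/4}\phi^{1,1}$, i.e. $N/4$ copies of the order-$4$ tensor $\phi^{1,1}$, the $k$-th copy occupying exactly the $k$-th level-$1$ group of indices. Repeatedly applying eq.~\ref{eq:mat_tensor_prod} then factorizes $\mat{\A^y}_{I,J}=\odotuo{k=1}{N/4}\mat{\phi^{1,1}}_{I_{1,k},J_{1,k}}$, whence $rank\mat{\A^y}_{I,J}=\prod_{k=1}^{N/4}rank\mat{\phi^{1,1}}_{I_{1,k},J_{1,k}}$.

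It remains to evaluate each factor. Writing $\mat{\phi^{1,1}}_{I_{1,k},J_{1,k}}=\sum_{\alpha}a^{1,1}_\alpha\mat{\otimes^4\e_\alpha}_{I_{1,k},J_{1,k}}$, each term is the rank-one matrix $\uu_\alpha\vv_\alpha^\top$ whose factors are the standard basis vectors of $\R^{M^{\abs{I_{1,k}}}}$ and $\R^{M^{\abs{J_{1,k}}}}$ indexed by the constant multi-index $(\alpha,\ldots,\alpha)$. For an \emph{active} group ($I_{1,k}\neq\emptyset$ and $J_{1,k}\neq\emptyset$) distinct values of $\alpha$ yield distinct, hence linearly independent, $\uu_\alpha$ and $\vv_\alpha$, so the factor has rank $\min\{r_0,M\}$; for an inactive group one side is a single vector and the factor has rank $1$. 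The product over the $N/4$ groups is therefore $\min\{r_0,M\}^S$, with $S$ counting precisely the active level-$1$ groups, as claimed. The delicate point in this plan — and the reason for routing everything through one repeated block rather than tuning channels independently — is the weight sharing: the same tensor $\phi^{l-1,\alpha}$ feeds all four children of a node and is matricized there with respect to four \emph{different} induced partitions, so the four sub-blocks cannot be controlled separately. Collapsing to a single pure tensor product side-steps this, because the matricization of a tensor product factorizes into a Kronecker product no matter how the partition cuts the individual factors, and the ranks then multiply automatically.
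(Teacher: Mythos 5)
Your proposal is correct and takes essentially the same approach as the paper's proof: the upper bound is the identical rank-subadditivity/Kronecker-multiplicativity induction establishing $rank\mat{\phi^{l,\gamma}}_{I_{l,k},J_{l,k}}\leq c^{l,k}$, and the lower bound uses the same weight assignment (standard basis vectors $\e_\alpha$ in layer $0$, a nonzero level-$1$ selection vector, single-channel selection above), collapsing the decomposition to $\mat{\A^y}_{I,J}=\odotuo{k=1}{N/4}\mat{\phi^{1,1}}_{I_{1,k},J_{1,k}}$ and evaluating each factor's rank exactly as the paper does. The only deviations are cosmetic: your induction starts at $l=0$ rather than $l=1$, and you allow arbitrary nonzero $a^{1,1}_\alpha$ where the paper fixes $\aaa^{1,1}=\1$.
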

\begin{proof}
See app.~\ref{app:proofs:deep_mat_rank_bounds}.
\end{proof}

The lower bound in theorem~\ref{theorem:deep_mat_rank_bounds} is exponential in $S$, the latter defined to be the number of size-$4$ patch groups that are split by the partition $(I,J)$, \ie~whose indexes are divided between~$I$ and~$J$.
Partitions that split many of the size-$4$ patch groups will thus lead to a large lower bound.
For example, consider the partition $(I^{odd},J^{even})$ defined as follows:
\be
I^{odd}=\{1,3,\ldots,N-1\} {\qquad} J^{even}=\{2,4,\ldots,N\}
\label{eq:part_odd_even}
\ee
This partition splits all size-$4$ patch groups ($S=N/4$), leading to a lower bound that is exponential in the number of patches ($N$).

The upper bound in theorem~\ref{theorem:deep_mat_rank_bounds} is expressed via constants~$c^{l,k}$, defined recursively over levels $l=0{\ldots}L-1$, with~$k$ ranging over $1{\ldots}N/4^l$ for each level~$l$.
What prevents $c^{l,k}$ from growing double-exponentially fast (\wrt~$l$) is the minimization with $M^{\min\{\abs{I_{l,k}},\abs{J_{l,k}}\}}$.
Specifically, if $\min\{\abs{I_{l,k}},\abs{J_{l,k}}\}$ is small, \ie~if the partition induced by $(I,J)$ on the $k$'th size-$4^l$ group of patches is unbalanced (most of the patches belong to one side of the partition, and only a few belong to the other), $c^{l,k}$ will be of reasonable size.
The higher this takes place in the hierarchy (\ie~the larger~$l$ is), the lower our eventual upper bound will be.
In other words, if partitions induced by $(I,J)$ on size-$4^l$ patch groups are unbalanced for large values of $l$, the upper bound in theorem~\ref{theorem:deep_mat_rank_bounds} will be small.
For example, consider the partition $(I^{low},J^{high})$ defined by:
\be
I^{low}=\{1,\ldots,N/2\} {\qquad} J^{high}=\{N/2+1,\ldots,N\}
\label{eq:part_low_high}
\ee
Under $(I^{low},J^{high})$, all partitions induced on size-$4^{L-1}$ patch groups (quadrants of~$[N]$) are completely one-sided ($\min\{|I_{L-1,k}|,|J_{L-1,k}|\}=0$ for all $k\in[4]$), resulting in the upper bound being no greater than $r_{L-1}$~--~linear in network size.

\medskip

To summarize this discussion, theorem~\ref{theorem:deep_mat_rank_bounds} states that with the deep network, the maximal rank of a coefficient tensor matricization \wrt~$(I,J)$, highly depends on the nature of the partition~$(I,J)$~--~it will be exponentially high for partitions such as $(I^{odd},J^{even})$, that split many size-$4$ patch groups, while being only polynomial (or linear) for partitions like $(I^{low},J^{high})$, under which size-$4^l$ patch groups are unevenly divided for large values of~$l$. 
Since the rank of a coefficient tensor matricization \wrt~$(I,J)$ corresponds to the strength of correlation modeled between input patches indexed by~$I$ and those indexed by~$J$ (sec.~\ref{sec:analysis:sep2mat}), we conclude that the ability of a polynomially sized deep network to model correlation between sets of input patches highly depends on the nature of these sets.

\subsection{Shallow network} \label{sec:analysis:shallow}

We now turn to study correlations modeled by the shallow network presented in sec.~\ref{sec:cac} (fig.~\ref{fig:nets_patterns}(b)).
In line with sec.~\ref{sec:analysis:sep2mat}, this is achieved by characterizing the maximal ranks of coefficient tensor matricizations under different partitions.

Recall from eq.~\ref{eq:cp_decomp} the CP decomposition expressing a coefficient tensor~$\A^y$ realized by the shallow network.
For an arbitrary partition $(I,J)$ of $[N]$, \ie~$I{\cupdot}J=[N]$, matricizing this decomposition with repeated application of the relation in eq.~\ref{eq:mat_tensor_prod}, gives the following expression for $\mat{\A^y}_{I,J}$~--~the matricization \wrt~$(I,J)$ of a coefficient tensor realized by the shallow network:
\be
\mat{\A^y}_{I,J}=\sum\nolimits_{\gamma=1}^{r_0} a^{1,y}_{\gamma} \cdot \left(\odot^{\abs{I}}\aaa^{0,\gamma}\right) \left(\odot^{\abs{J}}\aaa^{0,\gamma}\right)^\top
\label{eq:mat_cp_decomp} 
\ee
$\odot^{\abs{I}}\aaa^{0,\gamma}$ and $\odot^{\abs{J}}\aaa^{0,\gamma}$ here are column vectors of dimensions~$M^{\abs{I}}$ and~$M^{\abs{J}}$ respectively, standing for the Kronecker products of $\aaa^{0,\gamma}\in\R^M$ with itself~$\abs{I}$ and~$\abs{J}$ times (respectively).
Eq.~\ref{eq:mat_cp_decomp} immediately leads to two observations regarding the ranks that may be taken by $\mat{\A^y}_{I,J}$.
First, they depend on the partition $(I,J)$ only through its division size, \ie~through $\abs{I}$ and $\abs{J}$.
Second, they are no greater than $\min\{M^{\min\{\abs{I},\abs{J}\}},r_{0}\}$, meaning that the maximal rank is linear (or less) in network size.
In light of sec.~\ref{sec:analysis:sep2mat} and~\ref{sec:analysis:deep}, these findings imply that in contrast to the deep network, which with polynomial size supports exponential separation ranks under favored partitions, the shallow network treats all partitions (of a given division size) equally, and can only give rise to an exponential separation rank if its size is exponential.

Suppose now that we would like to use the shallow network to replicate a function realized by a polynomially sized deep network.
So long as the deep network's function admits an exponential separation rank under at least one of the favored partitions (\eg~$(I^{odd},J^{even})$~--~eq.~\ref{eq:part_odd_even}), the shallow network would have to be exponentially large in order to replicate it, \ie~depth efficiency takes place.~\footnote{
Convolutional arithmetic circuits as we have defined them (sec.~\ref{sec:cac}) are not universal.
In particular, it may very well be that a function realized by a polynomially sized deep network cannot be replicated by the shallow network, no matter how large (wide) we allow it to be.
In such scenarios depth efficiency does not provide insight into the complexity of functions brought forth by depth.
To obtain a shallow network that is universal, thus an appropriate gauge for depth efficiency, we may remove the constraint of weight sharing, \ie~allow the filters in the hidden conv operator to hold different weights at different spatial locations (see~\cite{\cactd} for proof that this indeed leads to universality).
All results we have established for the original shallow network remain valid when weight sharing is removed.
In particular, the separation ranks of the network are still linear in its size.
This implies that as suggested, depth efficiency indeed holds.
}
Since all but a negligible set of the functions realizable by the deep network give rise to maximal separation ranks (sec~\ref{sec:analysis:sep2mat}), we obtain the complete depth efficiency result of~\cite{\cactd}.
However, unlike~\cite{\cactd}, which did not provide any explanation for the usefulness of functions brought forth by depth, we obtain an insight into their utility~--~they are able to efficiently model strong correlation under favored partitions of the input.

\section{Inductive bias through pooling geometry} \label{sec:inductive_bias}

The deep network presented in sec.~\ref{sec:cac}, whose correlations we analyzed in sec.~\ref{sec:analysis:deep}, was defined as having size-$4$ pooling windows, \ie~pooling windows covering four entries each.
We have yet to specify the shapes of these windows, or equivalently, the spatial (two-dimensional) locations of nodes grouped together in the process of pooling.
In compliance with standard convolutional network design, we now assume that the network's (size-$4$) pooling windows are contiguous square blocks, \ie~have shape~$2\times2$.
Under this configuration, the network's functional description (eq.~\ref{eq:h_y} with~$\A^y$ given by eq.~\ref{eq:hr_decomp}) induces a spatial ordering of input patches
\footnote{
The network's functional description assumes a one-dimensional full quad-tree grouping of input patch indexes.
That is to say, it assumes that in the first pooling operation (hidden layer~$0$), the nodes corresponding to patches $\x_1,\x_2,\x_3,\x_4$ are pooled into one group, those corresponding to $\x_5,\x_6,\x_7,\x_8$ are pooled into another, and so forth.
Similar assumptions hold for the deeper layers.
For example, in the second pooling operation (hidden layer~$1$), the node with receptive field $\{1,2,3,4\}$, \ie~the one corresponding to the quadruple of patches $\{\x_1,\x_2,\x_3,\x_4\}$, is assumed to be pooled together with the nodes whose receptive fields are $\{5,6,7,8\}$, $\{9,10,11,12\}$ and $\{13,14,15,16\}$.
}
, which may be described by the following recursive process:
\begin{itemize}
\item Set the index of the top-left patch to~$1$.
\item For $l=1,{\ldots},L=log_{4}N$: Replicate the already-assigned top-left $2^{l-1}$-by-$2^{l-1}$ block of indexes, and place copies on its right, bottom-right and bottom.  Then, add a~$4^{l-1}$ offset to all indexes in the right copy, a~$2\cdot4^{l-1}$ offset to all indexes in the bottom-right copy, and a~$3\cdot4^{l-1}$ offset to all indexes in the bottom copy.  
\end{itemize}
With this spatial ordering (illustrated in fig.~\ref{fig:nets_patterns}(c)), partitions~$(I,J)$ of~$[N]$ convey a spatial pattern.
For example, the partition~$(I^{odd},J^{even})$ (eq.~\ref{eq:part_odd_even}) corresponds to the pattern illustrated on the left of fig.~\ref{fig:nets_patterns}(c), whereas~$(I^{low},J^{high})$ (eq.~\ref{eq:part_low_high}) corresponds to the pattern illustrated on the right.
Our analysis (sec.~\ref{sec:analysis:deep}) shows that the deep network is able to model strong correlation under~$(I^{odd},J^{even})$, while being inefficient for modeling correlation under~$(I^{low},J^{high})$.
More generally, partitions for which~$S$, defined in theorem~\ref{theorem:deep_mat_rank_bounds}, is high, convey patterns that split many $2\times2$ patch blocks, \ie~are highly entangled.
These partitions enjoy the possibility of strong correlation.
On the other hand, partitions for which~$\min\{\abs{I_{l,k}},\abs{J_{l,k}}\}$ is small for large values of~$l$ (see eq.~\ref{eq:_I_lk_J_lk} for definition of $I_{l,k}$ and $J_{l,k}$) convey patterns that divide large $2^l\times2^l$ patch blocks unevenly, \ie~separate the input to distinct contiguous regions.
These partitions, as we have seen, suffer from limited low correlations.

We conclude that with $2\times2$ pooling, the deep network is able to model strong correlation between input regions that are highly entangled, at the expense of being inefficient for modeling correlation between input regions that are far apart.
Had we selected a different pooling regime, the preference of input partition patterns in terms of modeled correlation would change.
For example, if pooling windows were set to group nodes with their spatial reflections (horizontal, vertical and horizontal-vertical), coarse patterns that divide the input symmetrically, such as the one illustrated on the right of fig.~\ref{fig:nets_patterns}(c), would enjoy the possibility of strong correlation, whereas many entangled patterns would now suffer from limited low correlation.
The choice of pooling shapes thus serves as a means for controlling the inductive bias in terms of correlations modeled between input regions.
Square contiguous windows, as commonly employed in practice, lead to a preference that complies with our intuition regarding the statistics of natural images (nearby pixels more correlated than distant ones).
Other pooling schemes lead to different preferences, and this allows tailoring a network to data that departs from the usual domain of natural imagery.
We demonstrate this experimentally in the next section, where it is shown how different pooling geometries lead to superior performance in different tasks.

\section{Experiments} \label{sec:exp}

The main conclusion from our analyses (sec.~\ref{sec:analysis} and~\ref{sec:inductive_bias}) is that the pooling geometry of a deep convolutional network controls its inductive bias by determining which correlations between input regions can be modeled efficiently.
We have also seen that shallow networks cannot model correlations efficiently, regardless of the considered input regions.
In this section we validate these assertions empirically, not only with convolutional arithmetic circuits (subject of our analyses), but also with convolutional rectifier networks~--~convolutional networks with ReLU activation and max or average pooling.
For conciseness, we defer to app.~\ref{app:impl} some details regarding our implementation.
The latter is fully available online at \url{https://github.com/HUJI-Deep/inductive-pooling}.

Our experiments are based on a synthetic classification benchmark inspired by medical imaging tasks.
Instances to be classified are $32$-by-$32$ binary images, each displaying a random distorted oval shape (\emph{blob}) with missing pixels in its interior (\emph{holes}).
For each image, two continuous scores in range~$[0,1]$ are computed.
The first, referred to as \emph{closedness}, reflects how morphologically closed a blob is, and is defined to be the ratio between the number of pixels in the blob, and the number of pixels in its closure (see app.~\ref{app:morph_closure} for exact definition of the latter).
The second score, named \emph{symmetry}, reflects the degree to which a blob is left-right symmetric about its center.
It is measured by cropping the bounding box around a blob, applying a left-right flip to the latter, and computing the ratio between the number of pixels in the intersection of the blob and its reflection, and the number of pixels in the blob.
To generate labeled sets for classification (train and test), we render multiple images, sort them according to their closedness and symmetry, and for each of the two scores, assign the label ``high'' to the top 40\% and the label ``low'' to the bottom 40\% (the mid 20\% are considered ill-defined).
This creates two binary (two-class) classification tasks~--~one for closedness and one for symmetry (see fig.~\ref{fig:exp_data} for a sample of images participating in both tasks).
Given that closedness is a property of a local nature, we expect its classification task to require a predictor to be able to model strong correlations between neighboring pixels.
Symmetry on the other hand is a property that relates pixels to their reflections, thus we expect its classification task to demand that a predictor be able to model correlations across distances.

\begin{figure*}
\includegraphics[width=\textwidth]{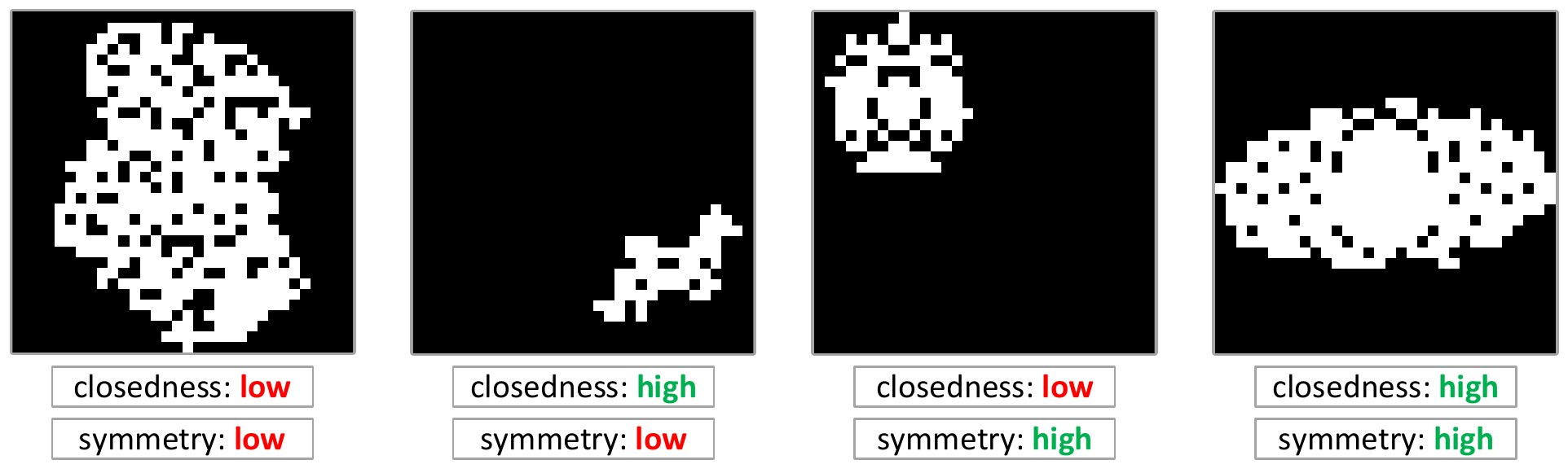}
\vspace{-5mm}
\caption{
Sample of images from our synthetic classification benchmark.
Each image displays a random blob with holes, whose morphological closure and left-right symmetry about its center are measured.
Two classification tasks are defined~--~one for closedness and one for symmetry.
In each task, the objective is to distinguish between blobs whose respective property (closedness/symmetry) is high, and ones for which it is low.
The tasks differ in nature~--~closedness requires modeling correlations between neighboring pixels, whereas symmetry requires modeling correlations between pixels and their reflections.
}
\label{fig:exp_data}
\vspace{-3mm}
\end{figure*}

We evaluated the deep convolutional arithmetic circuit considered throughout the paper (fig.~\ref{fig:nets_patterns}(a) with size-$4$ pooling windows) under two different pooling geometries.
The first, referred to as \emph{square}, comprises standard $2\times2$ pooling windows.
The second, dubbed \emph{mirror}, pools together nodes with their horizontal, vertical and horizontal-vertical reflections.
In both cases, input patches ($\x_i$) were set as individual pixels, resulting in $N=1024$ patches and $L=\log_{4}N=5$ hidden layers.
$M=2$ representation functions ($f_{\theta_d}$) were fixed, the first realizing the identity on binary inputs ($f_{\theta_1}(b)=b$ for $b\in\{0,1\}$), and the second realizing negation ($f_{\theta_2}(b)=1-b$ for $b\in\{0,1\}$).
Classification was realized through $Y=2$ network outputs, with prediction following the stronger activation.
The number of channels across all hidden layers was uniform, and varied between~$8$ and~$128$.
Fig.~\ref{fig:exp_results_cac_pool4} shows the results of applying the deep network with both square and mirror pooling, to both closedness and symmetry tasks, where each of the latter has $20000$ images for training and $4000$ images for testing.
As can be seen in the figure, square pooling significantly outperforms mirror pooling in closedness classification, whereas the opposite occurs in symmetry classification.
This complies with our discussion in sec.~\ref{sec:inductive_bias}, according to which square pooling supports modeling correlations between entangled (neighboring) regions of the input, whereas mirror pooling puts focus on correlations between input regions that are symmetric \wrt~one another.
We thus obtain a demonstration of how prior knowledge regarding a task at hand may be used to tailor the inductive bias of a deep convolutional network by designing an appropriate pooling geometry.

\begin{figure*}
\includegraphics[width=\textwidth]{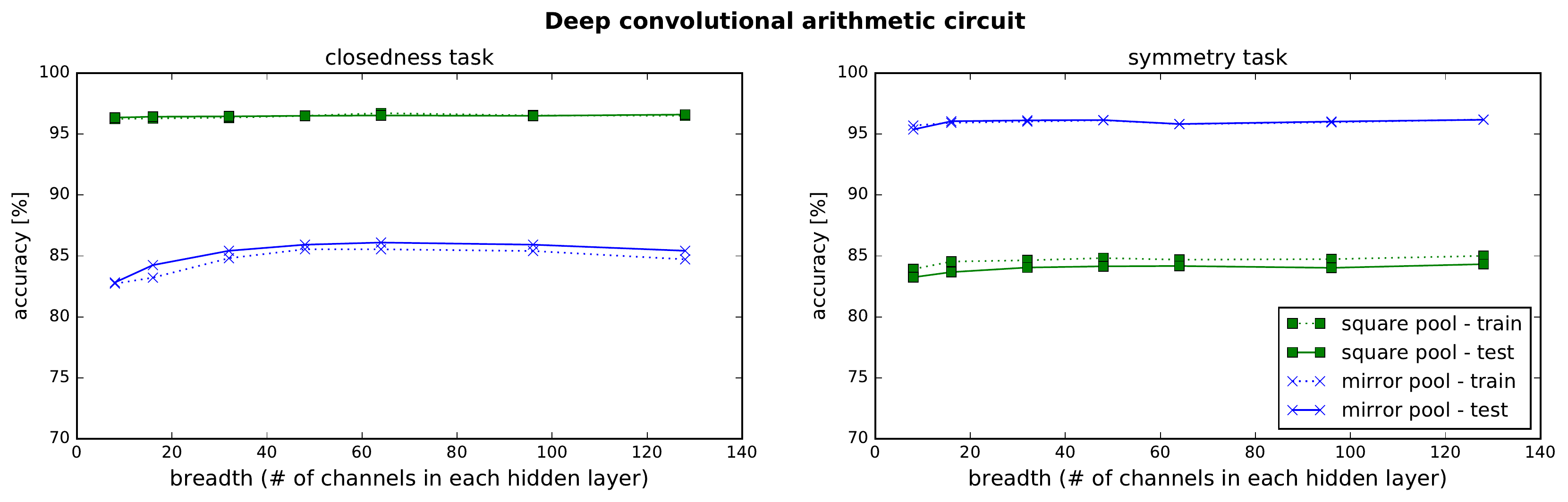}
\vspace{-5mm}
\caption{
Results of applying a deep convolutional arithmetic circuit to closedness and symmetry classification tasks.
Two pooling geometries were evaluated~--~square, which supports modeling correlations between neighboring input regions, and mirror, which puts focus on correlations between regions that are symmetric \wrt~one another.
Each pooling geometry outperforms the other on the task for which its correlations are important, demonstrating how prior knowledge regarding a task at hand may be used to tailor the inductive bias through proper pooling design.
}
\label{fig:exp_results_cac_pool4}
\vspace{-3mm}
\end{figure*}

In addition to the deep network, we also evaluated the shallow convolutional arithmetic circuit analyzed in the paper (fig.~\ref{fig:nets_patterns}(b)).
The architectural choices for this network were the same as those described above for the deep network besides the number of hidden channels, which in this case applied to the network's single hidden layer, and varied between~$64$ and~$4096$.
The highest train and test accuracies delivered by this network (with $4096$ hidden channels) were roughly~$62\%$ on closedness task, and~$77\%$ on symmetry task.
The fact that these accuracies are inferior to those of the deep network, even when the latter's pooling geometry is not optimal for the task at hand, complies with our analysis in sec.~\ref{sec:analysis}.
Namely, it complies with the observation that separation ranks (correlations) are sometimes exponential and sometimes polynomial with the deep network, whereas with the shallow one they are never more than linear in network size.

\medskip

Finally, to assess the validity of our findings for convolutional networks in general, not just convolutional arithmetic circuits, we repeated the above experiments with convolutional rectifier networks.
Namely, we placed ReLU activations after every conv operator, switched the pooling operation from product to average, and re-evaluated the deep (square and mirror pooling geometries) and shallow networks.
We then reiterated this process once more, with pooling operation set to max instead of average.
The results obtained by the deep networks are presented in fig.~\ref{fig:exp_results_crn_pool4}.
The shallow network with average pooling reached train/test accuracies of roughly~$58\%$ on closedness task, and~$55\%$ on symmetry task.
With max pooling, performance of the shallow network did not exceed chance.
Altogether, convolutional rectifier networks exhibit the same phenomena observed with convolutional arithmetic circuits, indicating that the conclusions from our analyses likely apply to such networks as well.
Formal adaptation of the analyses to convolutional rectifier networks, similarly to the adaptation of~\cite{\cactd} carried out in~\cite{\crngtd}, is left for future work.

\begin{figure*}
\includegraphics[width=\textwidth]{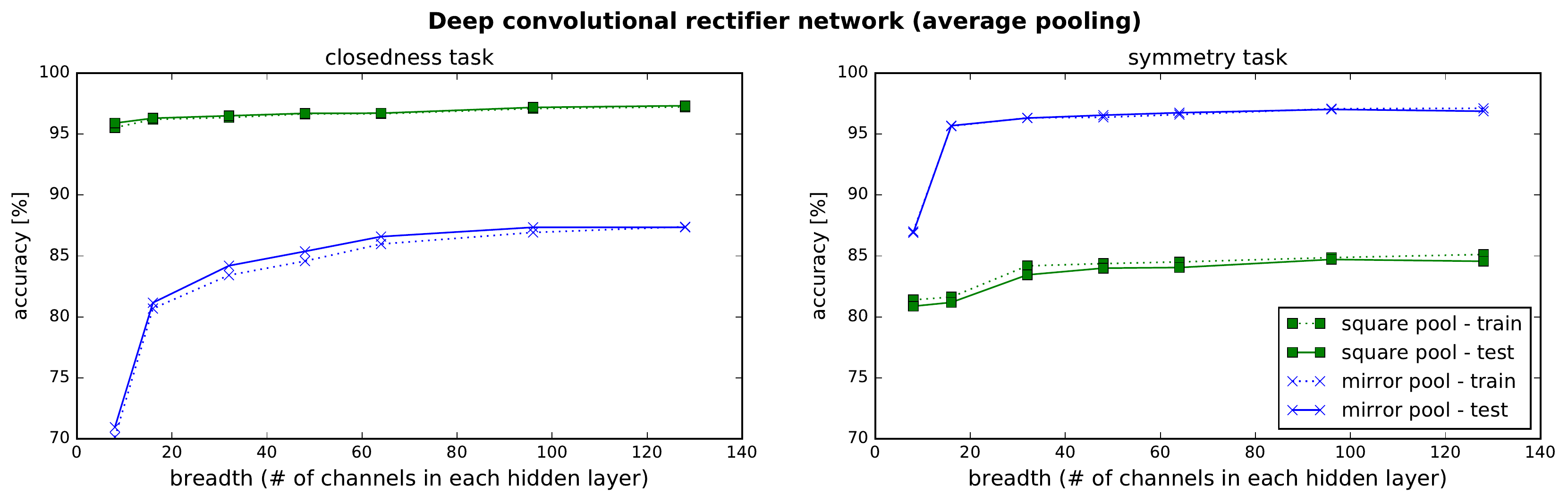}
\vspace{-3mm}\\
\includegraphics[width=\textwidth]{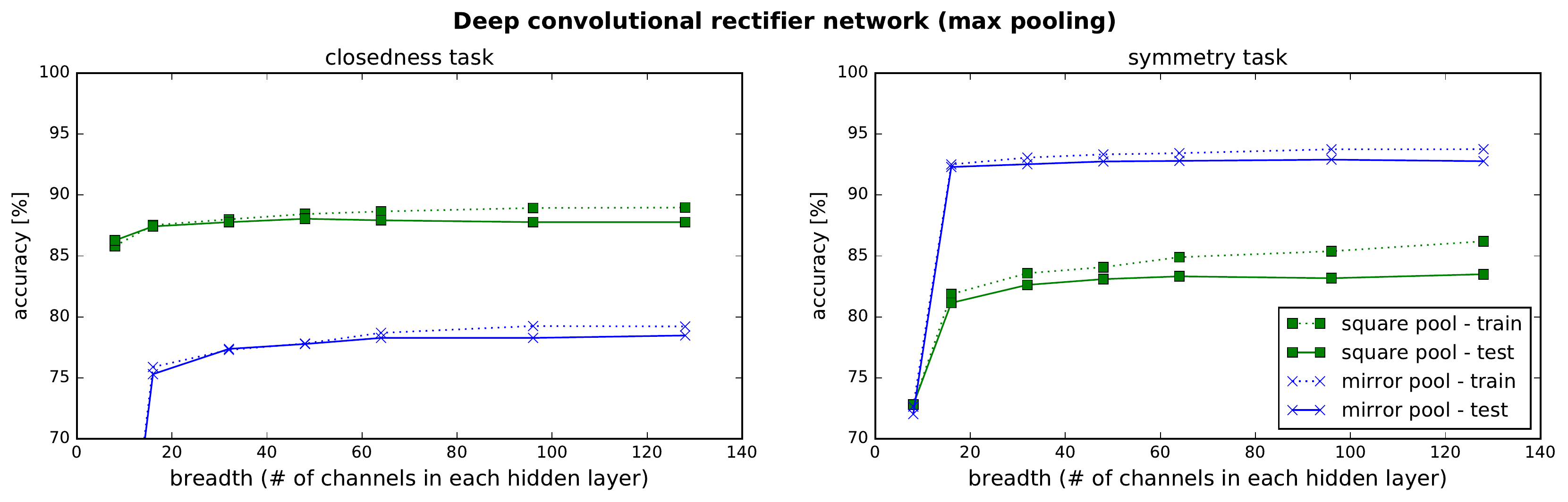}
\vspace{-5mm}
\caption{
Results of applying deep convolutional rectifier networks to closedness and symmetry classification tasks.
The same trends observed with the deep convolutional arithmetic circuit (fig.~\ref{fig:exp_results_cac_pool4}) are apparent here.
}
\label{fig:exp_results_crn_pool4}
\vspace{-3mm}
\end{figure*}

\section{Discussion} \label{sec:discussion}

Through the notion of separation rank, we studied the relation between the architecture of a convolutional network, and its ability to model correlations among input regions.
For a given input partition, the separation rank quantifies how far a function is from separability, which in a probabilistic setting, corresponds to statistical independence between sides of the partition.

Our analysis shows that a polynomially sized deep convolutional arithmetic circuit supports exponentially high separation ranks for certain input partitions, while being limited to polynomial or linear (in network size) separation ranks for others.
The network's pooling window shapes effectively determine which input partitions are favored in terms of separation rank, \ie~which partitions enjoy the possibility of exponentially high separation ranks with polynomial network size, and which require network to be exponentially large.
Pooling geometry thus serves as a means for controlling the inductive bias.
The particular pooling scheme commonly employed in practice~--~square contiguous windows, favors interleaved partitions over ones that divide the input to distinct areas, thus orients the inductive bias towards the statistics of natural images (nearby pixels more correlated than distant ones).
Other pooling schemes lead to different preferences, and this allows tailoring the network to data that departs from the usual domain of natural imagery.

As opposed to deep convolutional arithmetic circuits, shallow ones support only linear (in network size) separation ranks.
Therefore, in order to replicate a function realized by a deep network (exponential separation rank), a shallow network must be exponentially large.
By this we derive the depth efficiency result of~\cite{\cactd}, but in addition, provide an insight into the benefit of functions brought forth by depth~--~they are able to efficiently model strong correlation under favored partitions of the input.

We validated our conclusions empirically, with convolutional arithmetic circuits as well as convolutional rectifier networks~--~convolutional networks with ReLU activation and max or average pooling.
Our experiments demonstrate how different pooling geometries lead to superior performance in different tasks.
Specifically, we evaluate deep networks in the measurement of shape continuity, a task of a local nature, and show that standard square pooling windows outperform ones that join together nodes with their spatial reflections.
In contrast, when measuring shape symmetry, modeling correlations across distances is of vital importance, and the latter pooling geometry is superior to the conventional one.
Shallow networks are inefficient at modeling correlations of any kind, and indeed lead to poor performance on both tasks.

Finally, our analyses and results bring forth the possibility of expanding the coverage of correlations efficiently modeled by a deep convolutional network.
Specifically, by blending together multiple pooling geometries in the hidden layers of a network, it is possible to facilitate simultaneous support for a wide variety of correlations suiting data of different types.
Investigation of this direction, from both theoretical and empirical perspectives, is viewed as a promising avenue for future research.

\ifdefined\CAMREADY
	\subsubsection*{Acknowledgments}
	This work is supported by Intel grant ICRI-CI \#9-2012-6133, by ISF Center grant 1790/12 and by the European Research Council (TheoryDL project).  
	Nadav Cohen is supported by a Google Doctoral Fellowship in Machine Learning.
\fi

\section*{References}
\small{
\bibliographystyle{plainnat}
\bibliography{refs.bib}
}

\clearpage
\appendix

\section{Deferred proofs} \label{app:proofs}

\subsection{Proof of claim~\ref{claim:sep_mat_ranks_equal}} \label{app:proofs:sep_mat_ranks_equal}

We prove the equality in two steps, first showing that $sep(h_y;I,J){\leq}rank\mat{\A^y}_{I,J}$, and then establishing the converse.
The first step is elementary, and does not make use of the representation functions' ($f_{\theta_d}$) linear independence, or of measurability/square-integrability.
The second step does rely on these assumptions, and employs slightly more advanced mathematical machinery.
Throughout the proof, we assume without loss of generality that the partition $(I,J)$ of~$[N]$ is such that~$I$ takes on lower values, while~$J$ takes on higher ones.
That is to say, we assume that $I=\{1,\ldots,\abs{I}\}$ and $J=\{\abs{I}+1,\ldots,N\}$.
\footnote{
To see that this does not limit generality, denote $I=\{i_1,\ldots,i_{\abs{I}}\}$ and $J=\{j_1,\ldots,j_{\abs{J}}\}$, and define an auxiliary function $h'_y$ by permuting the entries of $h_y$ such that those indexed by $I$ are on the left and those indexed by $J$ on the right, \ie~$h'_y(\x_{i_1},\ldots,\x_{i_{\abs{I}}},\x_{j_1},\ldots,\x_{j_{\abs{J}}})=h_y(\x_1,\ldots,\x_N)$.
Obviously $sep(h_y;I,J)=sep(h'_y;I',J')$, where the partition $(I',J')$ is defined by $I'=\{1,\ldots,\abs{I}\}$ and $J'=\{\abs{I}+1,\ldots,N\}$.
Analogously to the definition of~$h'_y$, let~$\A'^{y}$ be the tensor obtained by permuting the modes of~$\A^y$ such that those indexed by $I$ are on the left and those indexed by $J$ on the right, \ie~$\A'^{y}_{d_{i_1}{\ldots}d_{i_{\abs{I}}}d_{j_1}{\ldots}d_{j_{\abs{J}}}}=\A^{y}_{d_1{\ldots}d_N}$.
It is not difficult to see that matricizing~$\A'^y$ \wrt~$(I',J')$ is equivalent to matricizing~$\A^{y}$ \wrt~$(I,J)$, \ie~$\mat{\A'^y}_{I',J'}=\mat{\A^y}_{I,J}$, and in particular $rank\mat{\A'^y}_{I',J'}=rank\mat{\A^y}_{I,J}$.
Moreover, since by definition $\A^{y}$ is a coefficient tensor corresponding to $h_y$ (eq.~\ref{eq:h_y}), $\A'^{y}$ will be a coefficient tensor that corresponds to $h'_y$.
Now, our proof will show that $sep(h'_y;I',J')=rank\mat{\A'^y}_{I',J'}$, which, in light of the equalities above, implies $sep(h_y;I,J)=rank\mat{\A^y}_{I,J}$, as required.
}

\medskip

To prove that $sep(h_y;I,J){\leq}rank\mat{\A^y}_{I,J}$, denote by $R$ the rank of $\mat{\A^y}_{I,J}$.
The latter is an $M^{\abs{I}}$-by-$M^{\abs{J}}$ matrix, thus there exist vectors $\uu_1{\ldots}\uu_R\in\R^{M^{\abs{I}}}$ and $\vv_1{\ldots}\vv_R\in\R^{M^{\abs{J}}}$ such that
$\mat{\A^y}_{I,J}=\sum_{\nu=1}^R\uu_{\nu}\vv_{\nu}^{\top}$.
For every $\nu\in[R]$, let $\B^{\nu}$ be the tensor of order $\abs{I}$ and dimension $M$ in each mode whose arrangement as a column vector gives $\uu_{\nu}$, \ie~whose matricization \wrt~the partition $([\abs{I}],\emptyset)$ is equal to $\uu_{\nu}$.
Similarly, let~$\CC^{\nu}$, $\nu\in[R]$, be the tensor of order $\abs{J}=N-\abs{I}$ and dimension $M$ in each mode whose matricization \wrt~the partition $(\emptyset,[\abs{J}])$ (arrangement as a row vector) is equal to $\vv_{\nu}^{\top}$.
It holds that:
\beas
\mat{\A^y}_{I,J} 
&=& \sum\nolimits_{\nu=1}^R\uu_{\nu}\vv_{\nu}^{\top} \\
&=& \sum\nolimits_{\nu=1}^R\mat{\B^{\nu}}_{[\abs{I}],\emptyset}\odot\mat{\CC^{\nu}}_{\emptyset,[\abs{J}]} \\
&=& \sum\nolimits_{\nu=1}^R\mat{\B^{\nu}}_{I\cap[\abs{I}],J\cap[\abs{I}]}\odot\mat{\CC^{\nu}}_{(I-\abs{I})\cap[\abs{J}],(J-\abs{I})\cap[\abs{J}]} \\
&=& \sum\nolimits_{\nu=1}^R\mat{\B^{\nu}\otimes\CC^{\nu}}_{I,J} \\
&=& \matflex{\sum\nolimits_{\nu=1}^R\B^{\nu}\otimes\CC^{\nu}}_{I,J}
\eeas
where the third equality relies on the assumption $I=\{1,\ldots,\abs{I}\},J=\{\abs{I}+1,\ldots,N\}$, the fourth equality makes use of the relation in eq.~\ref{eq:mat_tensor_prod}, and the last equality is based on the linearity of the matricization operator.
Since matricizations are merely rearrangements of tensors, the fact that $\mat{\A^y}_{I,J}=\mat{\sum\nolimits_{\nu=1}^R\B^{\nu}\otimes\CC^{\nu}}_{I,J}$ implies $\A^y=\sum_{\nu=1}^R\B^{\nu}\otimes\CC^{\nu}$, or equivalently, $\A^y_{d_1{\ldots}d_N}=\sum_{\nu=1}^R\B^{\nu}_{d_1{\ldots}d_{\abs{I}}}\cdot\CC^{\nu}_{d_{\abs{I}+1}{\ldots}d_N}$ for every $d_1{\ldots}d_N\in[M]$.
Plugging this into eq.~\ref{eq:h_y} gives:
\bea
h_y\left(\x_1,\ldots,\x_N\right)
&=&\sum\nolimits_{d_1{\ldots}d_N=1}^M\A_{d_1{\ldots}d_N}^{y}\prod\nolimits_{i=1}^{N} f_{\theta_{d_i}}(\x_i) 
\nonumber \\
&=&\sum\nolimits_{d_1{\ldots}d_N=1}^M\sum\nolimits_{\nu=1}^R\B^{\nu}_{d_1{\ldots}d_{\abs{I}}}\cdot\CC^{\nu}_{d_{\abs{I}+1}{\ldots}d_N}\prod\nolimits_{i=1}^{N} f_{\theta_{d_i}}(\x_i) 
\nonumber \\
&=&\sum\nolimits_{\nu=1}^R\left(\sum\nolimits_{d_1{\ldots}d_{\abs{I}}=1}^M\B^{\nu}_{d_1{\ldots}d_{\abs{I}}}\prod\nolimits_{i=1}^{\abs{I}} f_{\theta_{d_i}}(\x_i)\right) 
\nonumber \\
&&\qquad\qquad\cdot\left(\sum\nolimits_{d_{\abs{I}+1}{\ldots}d_N=1}^M\CC^{\nu}_{d_{\abs{I}+1}{\ldots}d_N}\prod\nolimits_{i=\abs{I}+1}^{N} f_{\theta_{d_i}}(\x_i)\right) 
\label{eq:mat_rank_over_sep_rank}
\eea
For every $\nu\in[R]$, define the functions $g_{\nu}:(\R^s)^{\abs{I}}\to\R$ and $g'_{\nu}:(\R^s)^{\abs{J}}\to\R$ as follows:
\beas
g_{\nu}(\x_1,\ldots,\x_{\abs{I}}) &:=& \sum\nolimits_{d_1{\ldots}d_{\abs{I}}=1}^M\B^{\nu}_{d_1{\ldots}d_{\abs{I}}}\prod\nolimits_{i=1}^{\abs{I}} f_{\theta_{d_i}}(\x_i) \\
g'_\nu(\x_1,\ldots,\x_{\abs{J}}) &:=& \sum\nolimits_{d_1{\ldots}d_{\abs{J}}=1}^M\CC^{\nu}_{d_1{\ldots}d_{\abs{J}}}\prod\nolimits_{i=1}^{\abs{J}} f_{\theta_{d_i}}(\x_i)
\eeas
Substituting these into eq.~\ref{eq:mat_rank_over_sep_rank} leads to:
$$h_y\left(\x_1,\ldots,\x_N\right)=\sum\nolimits_{\nu=1}^{R}g_{\nu}(\x_1,\ldots,\x_{\abs{I}})g'_\nu(\x_{\abs{I}+1},\ldots,\x_N)$$
which by definition of the separation rank (eq.~\ref{eq:sep_rank}), implies $sep(h_y;I,J){\leq}R$.
By this we have shown that $sep(h_y;I,J){\leq}rank\mat{\A^y}_{I,J}$, as required.

\medskip

For proving the converse inequality, \ie~$sep(h_y;I,J){\geq}rank\mat{\A^y}_{I,J}$, we rely on basic concepts and results from functional analysis, or more specifically, from the topic of $L^2$ spaces.
While a full introduction to this topic is beyond our scope (the interested reader is referred to~\cite{rudin1991functional}), we briefly lay out here the minimal background required in order to follow our proof.
For any $n\in\N$, $L^2(\R^n)$ is formally defined as the Hilbert space of Lebesgue measurable square-integrable real functions over $\R^n$
\footnote{
More precisely, elements of the space are equivalence classes of functions, where two functions are considered equivalent if the set in $\R^n$ on which they differ has measure zero.
}
, equipped with standard (point-wise) addition and scalar multiplication, as well as the inner product defined by integration over point-wise multiplication.
For our purposes, $L^2(\R^n)$ may simply be thought of as the (infinite-dimensional) vector space of functions $g:\R^n\to\R$ satisfying $\int{g^2}<\infty$, with inner product defined by $\inprod{g_1}{g_2}:=\int{g_{1}{\cdot}g_{2}}$.
Our proof will make use of the following basic facts related to $L^2$ spaces:

\begin{fact} \label{fact:orth_decomp}
If $V$ is a finite-dimensional subspace of $L^2(\R^n)$, then any $g{\in}L^2(\R^n)$ may be expressed as $g=p+\delta$, with $p{\in}V$ and $\delta{\in}V^{\perp}$ (\ie~$\delta$ is orthogonal to all elements in $V$).
Moreover, such a representation is unique, so in the case where $g{\in}V$, we necessarily have $p=g$ and $\delta\equiv0$.
\end{fact}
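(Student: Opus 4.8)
The plan is to invoke the standard orthogonal-projection argument for Hilbert spaces, specialized to the finite-dimensional subspace $V\subset L^2(\R^n)$. First I would fix an orthonormal basis $e_1,\ldots,e_k$ of $V$, obtained by applying Gram-Schmidt to any basis of $V$ (finite-dimensionality guarantees the process terminates, and the $L^2$ inner product restricts to a genuine inner product on $V$). Using this basis I would define the candidate projection $p:=\sum_{i=1}^{k}\inprod{g}{e_i}e_i$, which manifestly lies in $V$, and set $\delta:=g-p$. Existence of the decomposition $g=p+\delta$ is then immediate by construction.

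The next step is to verify $\delta\in V^{\perp}$. It suffices to check orthogonality against each basis vector: by orthonormality, $\inprod{\delta}{e_j}=\inprod{g}{e_j}-\sum_{i=1}^{k}\inprod{g}{e_i}\inprod{e_i}{e_j}=\inprod{g}{e_j}-\inprod{g}{e_j}=0$ for every $j\in[k]$. Since every element of $V$ is a linear combination of $e_1,\ldots,e_k$, bilinearity of the inner product upgrades this to $\inprod{\delta}{v}=0$ for all $v\in V$, i.e.~$\delta\in V^{\perp}$.

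For uniqueness I would suppose $g=p+\delta=p'+\delta'$ with $p,p'\in V$ and $\delta,\delta'\in V^{\perp}$, and rearrange to $p-p'=\delta'-\delta$. The left-hand side lies in $V$ and the right-hand side lies in $V^{\perp}$, so $p-p'$ is orthogonal to itself; positive-definiteness of the inner product then forces $\norm{p-p'}^2=\inprod{p-p'}{p-p'}=0$, hence $p=p'$ and consequently $\delta=\delta'$ (equalities holding in $L^2$, i.e.~up to the usual null-set identification). The stated special case is then trivial: if $g\in V$ then $g=g+0$ is a valid decomposition with $g\in V$ and $0\in V^{\perp}$, so uniqueness gives $p=g$ and $\delta\equiv0$.

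There is no genuine obstacle here---this is a textbook fact---so the only points warranting care are the two places where finite-dimensionality and the Hilbert structure enter: the availability of a finite orthonormal basis (for which Gram-Schmidt suffices precisely because $\dim V<\infty$), and the identity $V\cap V^{\perp}=\{0\}$ invoked in the uniqueness step, which rests on the inner product being positive-definite on $L^2$ modulo null sets.
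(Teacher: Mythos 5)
Your proof is correct and complete. Note, however, that the paper itself never proves this statement: Fact~\ref{fact:orth_decomp} is listed (together with Facts~\ref{fact:prod}--\ref{fact:prod_ind}) as standard background on $L^2$ spaces, stated without proof and with a pointer to \cite{rudin1991functional}, and is then simply invoked inside the proof of Claim~\ref{claim:sep_mat_ranks_equal}. So there is no paper argument to compare against; what you have supplied is the missing textbook justification. Your route is the canonical one for a finite-dimensional subspace: Gram--Schmidt to get an orthonormal basis $e_1,\ldots,e_k$ of $V$, the explicit projection $p=\sum_{i=1}^{k}\inprod{g}{e_i}e_i$, verification that $\delta=g-p$ is orthogonal to each $e_j$ (hence to all of $V$ by bilinearity), and uniqueness via $V\cap V^{\perp}=\{0\}$, which also yields the special case $g\in V\Rightarrow p=g,\ \delta\equiv0$ used in the paper. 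Two small virtues of your write-up worth keeping: you correctly isolate where finite-dimensionality enters (existence of a finite orthonormal basis, with no need for the closed-subspace projection theorem of general Hilbert-space theory), and you flag that all equalities hold in the $L^2$ sense, i.e.~up to null sets, which matches the paper's footnoted convention that elements of $L^2$ are equivalence classes of functions.
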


\begin{fact} \label{fact:prod}
If $g{\in}L^2(\R^{n}),g'{\in}L^2(\R^{n\prime})$, then the function $(\x_1,\x_2){\mapsto}g(\x_1){\cdot}g'(\x_2)$ belongs to $L^2(\R^{n}\times\R^{n\prime})$.
\end{fact}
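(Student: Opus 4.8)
The plan is to verify the two defining properties of membership in $L^2(\R^n\times\R^{n\prime})$ directly for the function $F(\x_1,\x_2):=g(\x_1){\cdot}g'(\x_2)$: joint measurability, and finiteness of the integral of its square. The entire argument rests on Tonelli's theorem, which is applicable because the relevant integrand will be non-negative and measurable, so that the integral over $\R^n\times\R^{n\prime}$ may be computed as an iterated integral and, owing to the product structure of $F$, factors into a product of two single integrals over $\R^n$ and $\R^{n\prime}$ respectively.

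First I would establish measurability. Writing $\pi_1:\R^n\times\R^{n\prime}\to\R^n$ and $\pi_2:\R^n\times\R^{n\prime}\to\R^{n\prime}$ for the coordinate projections (which are continuous, hence measurable), the maps $(\x_1,\x_2){\mapsto}g(\x_1)=(g\circ\pi_1)(\x_1,\x_2)$ and $(\x_1,\x_2){\mapsto}g'(\x_2)=(g'\circ\pi_2)(\x_1,\x_2)$ are measurable, being compositions of the measurable functions $g,g'$ (measurable by virtue of lying in $L^2$) with measurable maps. Since the pointwise product of two measurable real-valued functions is measurable, $F=(g\circ\pi_1)\cdot(g'\circ\pi_2)$ is measurable on $\R^n\times\R^{n\prime}$.

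Next I would show $F$ is square-integrable. The function $(\x_1,\x_2){\mapsto}\abs{F(\x_1,\x_2)}^2=\abs{g(\x_1)}^2\abs{g'(\x_2)}^2$ is non-negative and measurable, so Tonelli's theorem yields
\[
\int_{\R^n\times\R^{n\prime}}\abs{F}^2 = \left(\int_{\R^n}\abs{g(\x_1)}^2\,d\x_1\right)\left(\int_{\R^{n\prime}}\abs{g'(\x_2)}^2\,d\x_2\right) = \norm{g}_{L^2(\R^n)}^2\cdot\norm{g'}_{L^2(\R^{n\prime})}^2.
\]
Both norms on the right are finite by the hypotheses $g\in L^2(\R^n)$ and $g'\in L^2(\R^{n\prime})$, so the product is finite and $\int\abs{F}^2<\infty$, giving $F\in L^2(\R^n\times\R^{n\prime})$ as required.

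There is no genuine obstacle here: the statement is a textbook consequence of the factorization of product measures, and the only steps needing a touch of care are the two I singled out above. The measurability claim must be argued via the projection maps rather than asserted as obvious joint measurability, and one should note that the equivalence-class nature of $L^2$ elements (functions agreeing off a null set) causes no difficulty, since modifying $g$ or $g'$ on a null set changes $F$ only on a null subset of the product space, leaving its $L^2$ class unaffected.
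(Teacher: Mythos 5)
Your proposal is correct, but there is nothing in the paper to compare it against: the paper states this as Fact~\ref{fact:prod}, one of four background facts about $L^2$ spaces listed without proof in app.~\ref{app:proofs:sep_mat_ranks_equal} (the reader is referred to \cite{rudin1991functional} for the underlying theory). You have therefore supplied a proof where the paper supplies none, and the one you give -- joint measurability via the coordinate projections, then Tonelli to factor $\int\abs{F}^2$ into $\norm{g}^2\norm{g'}^2$ -- is the standard textbook argument. One technical point deserves tightening: the blanket claim that ``the pointwise composition of measurable functions with measurable maps is measurable'' is false for Lebesgue measurability in general (Lebesgue-measurable $\circ$ continuous can fail to be measurable). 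For the projection $\pi_1$ it does work, because $\pi_1^{-1}(E)=E\times\R^{n\prime}$ is Lebesgue measurable whenever $E\subseteq\R^n$ is (write $E$ as a Borel set union a null set, and use completeness of Lebesgue measure on the product); alternatively, replace $g,g'$ by Borel representatives, which your own closing remark about null-set modifications already justifies. With that patch the argument is airtight.
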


\begin{fact} \label{fact:prod_orth}
Let $V$ and $V'$ be finite-dimensional subspaces of $L^2(\R^{n})$ and $L^2(\R^{n\prime})$ respectively, and define $U{\subset}L^2(\R^{n}\times\R^{n\prime})$ to be the subspace spanned by $\{(\x_1,\x_2){\mapsto}p(\x_1){\cdot}p'(\x_2):p{\in}V,p'{\in}V'\}$.
Given $g{\in}L^2(\R^{n}),g'{\in}L^2(\R^{n\prime})$, consider the function $(\x_1,\x_2){\mapsto}g(\x_1){\cdot}g'(\x_2)$ in $L^2(\R^{n}\times\R^{n\prime})$.
This function belongs to~$U^{\perp}$ if $g{\in}V^{\perp}$ or $g'{\in}V'^{\perp}$.
\end{fact}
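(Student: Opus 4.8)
The plan is to verify the orthogonality directly from the definition of $U^{\perp}$: I would show that $F:(\x_1,\x_2)\mapsto g(\x_1)g'(\x_2)$ satisfies $\inprod{F}{u}=0$ for every $u\in U$. Since $U$ is by definition spanned by the rank-one products $P_{p,p'}:(\x_1,\x_2)\mapsto p(\x_1)p'(\x_2)$ with $p\in V$ and $p'\in V'$, every $u\in U$ is a finite linear combination of such products. Hence, by bilinearity of the $L^2$ inner product, it suffices to prove that $\inprod{F}{P_{p,p'}}=0$ for each fixed pair $p\in V$, $p'\in V'$.

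The core of the argument is a factorization of the defining integral. First I would invoke Fact~\ref{fact:prod} to guarantee that both $F$ and $P_{p,p'}$ lie in $L^2(\R^n\times\R^{n\prime})$, so that their product is integrable over the product space and $\inprod{F}{P_{p,p'}}=\int g(\x_1)g'(\x_2)\,p(\x_1)p'(\x_2)$ is a well-defined finite quantity. Writing the integrand as $[g(\x_1)p(\x_1)]\cdot[g'(\x_2)p'(\x_2)]$, each bracketed factor is an $L^1$ function of its own variable by Cauchy--Schwarz (since $g,p\in L^2(\R^n)$ and $g',p'\in L^2(\R^{n\prime})$), so Fubini's theorem applies and yields $\inprod{F}{P_{p,p'}}=\inprod{g}{p}\cdot\inprod{g'}{p'}$, the product of the two single-variable inner products.

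The conclusion then follows by a simple case split on the hypothesis: if $g\in V^{\perp}$ then $\inprod{g}{p}=0$ for all $p\in V$, whereas if $g'\in V'^{\perp}$ then $\inprod{g'}{p'}=0$ for all $p'\in V'$; in either case the product above vanishes. Thus $\inprod{F}{P_{p,p'}}=0$ for all spanning elements, and therefore $F\in U^{\perp}$, as claimed.

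I expect the only genuine subtlety to be the justification of the factorization, namely confirming that Fubini's theorem legitimately applies. This reduces to verifying integrability of the integrand, which Fact~\ref{fact:prod} together with Cauchy--Schwarz supplies directly; everything else amounts to routine use of bilinearity and the definition of the orthogonal complement.
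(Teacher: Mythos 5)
Your proof is correct. Note that the paper never proves this statement: it is listed (alongside Facts~\ref{fact:orth_decomp}, \ref{fact:prod} and~\ref{fact:prod_ind}) as background material from the theory of $L^2$ spaces, with the reader referred to a functional analysis text. Your argument — reduce membership in $U^{\perp}$ to orthogonality against the spanning products $P_{p,p'}$ by linearity, then factor $\inprod{F}{P_{p,p'}}=\inprod{g}{p}\cdot\inprod{g'}{p'}$ via Cauchy--Schwarz and Fubini — is exactly the standard justification the paper implicitly relies on, and indeed the same Fubini-factorization device appears explicitly elsewhere in the paper (e.g.~step~$(5)$ of eq.~\ref{eq:orth_sep_decomp_norm}), so your proposal fills the gap in the intended way.
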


\begin{fact} \label{fact:prod_ind}
If $g_1{\ldots}g_m{\in}L^2(\R^n)$ are linearly independent, then for any $k\in\N$, the set of functions $\{(\x_1,\ldots,\x_k)\mapsto\prod_{i=1}^{k}g_{d_i}(\x_i)\}_{d_1{\ldots}d_k\in[m]}$ is linearly independent in $L^2((\R^n)^k)$.
\end{fact}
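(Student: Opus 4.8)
The plan is to prove linear independence not by manipulating the vanishing combination directly, but by constructing, for each multi-index, a continuous linear functional on $L^2((\R^n)^k)$ that reads off the corresponding coefficient. The ingredient I would build first is a biorthogonal (dual) system for the $g_d$'s. Since $g_1{\ldots}g_m$ are linearly independent in the inner-product space $L^2(\R^n)$, their Gram matrix $G$ with entries $G_{ab}:=\langle g_a,g_b\rangle$ is positive definite and hence invertible. Setting $\psi_a:=\sum_{b=1}^m(G^{-1})_{ab}\,g_b\in L^2(\R^n)$, a one-line check gives $\langle\psi_a,g_b\rangle=\delta_{ab}$.

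Next I would lift this dual system to the product space by forming, for each $(e_1{\ldots}e_k)\in[m]^k$, the test function $\Psi_{e_1{\ldots}e_k}(\x_1,\ldots,\x_k):=\prod_{i=1}^k\psi_{e_i}(\x_i)$. Applying Fact~\ref{fact:prod} inductively across the $k$ factors shows that each $\Psi_{e_1{\ldots}e_k}$, as well as each product $\prod_{i=1}^k g_{d_i}(\x_i)$, lies in $L^2((\R^n)^k)$, so their inner product is well defined.

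The crux is the biorthogonality relation
\[
\left\langle \Psi_{e_1{\ldots}e_k},\ \prod_{i=1}^k g_{d_i}(\x_i)\right\rangle = \prod_{i=1}^k \langle\psi_{e_i},g_{d_i}\rangle = \prod_{i=1}^k \delta_{e_i d_i},
\]
which equals $1$ when $(e_1{\ldots}e_k)=(d_1{\ldots}d_k)$ and $0$ otherwise. The only genuine subtlety, and the step I expect to be the main (if mild) obstacle, is the factorization of the $(\R^n)^k$-integral into a product of single-variable integrals: this is Fubini's theorem applied to the integrand $\Psi_{e_1{\ldots}e_k}(\x)\cdot\prod_i g_{d_i}(\x_i)$, whose absolute integrability is furnished by Cauchy--Schwarz together with the fact that both factors already reside in $L^2((\R^n)^k)$.

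With biorthogonality in hand the conclusion is immediate: if $\sum_{d_1{\ldots}d_k}c_{d_1{\ldots}d_k}\prod_{i=1}^k g_{d_i}(\x_i)=0$ in $L^2((\R^n)^k)$, then pairing both sides with $\Psi_{e_1{\ldots}e_k}$ extracts $c_{e_1{\ldots}e_k}=0$ for every multi-index, which is precisely the asserted linear independence. As a more hands-on alternative that sidesteps the explicit dual basis, I could instead induct on $k$: grouping the vanishing combination by its final index $d_k$ and using Fubini to fix $(\x_1,\ldots,\x_{k-1})$ for almost every value, the $k=1$ hypothesis forces each inner coefficient function to vanish, and the inductive hypothesis for degree $k-1$ then kills all the $c_{d_1{\ldots}d_k}$.
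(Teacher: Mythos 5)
Your proof is correct, but there is nothing in the paper to compare it against: Fact~\ref{fact:prod_ind} is one of four facts the paper states as background material on $L^2$ spaces, deferring to~\cite{rudin1991functional} in place of a proof. Your biorthogonal-system argument is a legitimate self-contained substitute, and its key steps all check out: linear independence of $g_1,\ldots,g_m$ makes the Gram matrix $G$ positive definite (for $c\in\R^m$, $c^\top G c=\norm{\sum_a c_a g_a}^2$, which vanishes only at $c=0$), so the dual functions $\psi_a$ are well defined and satisfy $\inprod{\psi_a}{g_b}=\delta_{ab}$; Fact~\ref{fact:prod}, applied inductively, places both $\Psi_{e_1{\ldots}e_k}$ and the product functions $\prod_{i=1}^{k}g_{d_i}(\x_i)$ in $L^2((\R^n)^k)$; and the Fubini factorization is justified because each single-variable factor $\psi_{e_i}g_{d_i}$ lies in $L^1(\R^n)$ by Cauchy--Schwarz, whence Tonelli gives absolute integrability of the full product and Fubini yields the factorization into $\prod_{i=1}^{k}\inprod{\psi_{e_i}}{g_{d_i}}=\prod_{i=1}^{k}\delta_{e_i d_i}$. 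Pairing a finite vanishing combination with each $\Psi_{e_1{\ldots}e_k}$ then extracts every coefficient, as you say. The inductive alternative you sketch at the end also works; the point needing care there is that ``zero a.e.\ on $(\R^n)^k$'' implies, via Fubini, that for almost every $(\x_1,\ldots,\x_{k-1})$ the slice in $\x_k$ vanishes a.e., so the $k=1$ case applies slice-wise before invoking the inductive hypothesis. The biorthogonal route is cleaner and is the one worth keeping.
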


To facilitate application of the theory of $L^2$ spaces, we now make use of the assumption that the network's representation functions~$f_{\theta_d}$, as well as the functions~$g_\nu,g'_\nu$ in the definition of separation rank (eq.~\ref{eq:sep_rank}), are measurable and square-integrable.
Taking into account the expression given in eq.~\ref{eq:h_y} for~$h_y$, as well as fact~\ref{fact:prod} above, one readily sees that $f_{\theta_1}{\ldots}f_{\theta_M}{\in}L^2(\R^s)$ implies~$h_y{\in}L^2((\R^s)^N)$.
The separation rank $sep(h_y;I,J)$ will be the minimal non-negative integer~$R$ such that there exist $g_1{\ldots}g_R{\in}L^2((\R^s)^{\abs{I}})$ and $g'_1{\ldots}g'_R{\in}L^2((\R^s)^{\abs{J}})$ for which:
\be
h_y(\x_1,\ldots,\x_N)=\sum\nolimits_{\nu=1}^{R}g_{\nu}(\x_1,\ldots,\x_{\abs{I}})g'_\nu(\x_{\abs{I}+1},\ldots,\x_N)
\label{eq:h_y_sep}
\ee

We would like to show that $sep(h_y;I,J){\geq}rank\mat{\A^y}_{I,J}$.
Our strategy for achieving this will be to start from eq.~\ref{eq:h_y_sep}, and derive an expression for~$\mat{\A^y}_{I,J}$ comprising a sum of~$R$ rank-$1$ matrices.
As an initial step along this path, define the following finite-dimensional subspaces:
\bea
V&:=&span\left\{(\x_1,\ldots,\x_{\abs{I}})\mapsto\prod\nolimits_{i=1}^{\abs{I}}f_{\theta_{d_i}}(\x_i)\right\}_{d_1{\ldots}d_{\abs{I}}\in[M]}~\subset~L^2\left((\R^s)^{\abs{I}}\right) 
\label{eq:V}\\
V'&:=&span\left\{(\x_1,\ldots,\x_{\abs{J}})\mapsto\prod\nolimits_{i=1}^{\abs{J}}f_{\theta_{d_i}}(\x_i)\right\}_{d_1{\ldots}d_{\abs{J}}\in[M]}~\subset~L^2\left((\R^s)^{\abs{J}}\right) 
\label{eq:Vpr}\\
U&:=&span\left\{(\x_1,\ldots,\x_N)\mapsto\prod\nolimits_{i=1}^{N}f_{\theta_{d_i}}(\x_i)\right\}_{d_1{\ldots}d_N\in[M]}~\subset~L^2\left((\R^s)^N\right)
\label{eq:U}
\eea
Notice that $h_y{\in}U$ (eq.~\ref{eq:h_y}), and that~$U$ is the span of products from~$V$ and~$V'$,~\ie:
\be
U=span\left\{(\x_1,\ldots,\x_N){\mapsto}p(\x_1,\ldots,\x_{\abs{I}}){\cdot}p'(\x_{\abs{I}+1},\ldots,\x_N):p{\in}V,p'{\in}V'\right\}
\label{eq:U_prod_V_Vpr}
\ee
Returning to eq.~\ref{eq:h_y_sep}, we apply fact~\ref{fact:orth_decomp} to obtain orthogonal decompositions of $g_1{\ldots}g_R$ \wrt~$V$, and of $g'_1{\ldots}g'_R$ \wrt~$V'$.
This gives $p_1{\ldots}p_R{\in}V$, $\delta_1{\ldots}\delta_R{\in}V^{\perp}$, $p'_1{\ldots}p'_R{\in}V'$ and $\delta'_1{\ldots}\delta'_R{\in}V'^{\perp}$, such that $g_{\nu}=p_{\nu}+\delta_{\nu}$ and $g'_{\nu}=p'_{\nu}+\delta'_{\nu}$ for every $\nu\in[R]$.
Plug this into eq.~\ref{eq:h_y_sep}:
\beas
h_y(\x_1,\ldots,\x_N)
&=&\sum\nolimits_{\nu=1}^{R}g_{\nu}(\x_1,\ldots,\x_{\abs{I}}){\cdot}g'_\nu(\x_{\abs{I}+1},\ldots,\x_N) \\
&=&\sum\nolimits_{\nu=1}^{R}\left(p_{\nu}(\x_1,\ldots,\x_{\abs{I}})+\delta_{\nu}(\x_1,\ldots,\x_{\abs{I}})\right) \\
&&\qquad\qquad{\cdot}\left(p'_\nu(\x_{\abs{I}+1},\ldots,\x_N)+\delta'_\nu(\x_{\abs{I}+1},\ldots,\x_N)\right) \\
&=&\sum\nolimits_{\nu=1}^{R}p_{\nu}(\x_1,\ldots,\x_{\abs{I}}){\cdot}p'_\nu(\x_{\abs{I}+1},\ldots,\x_N) \\
&&\quad+\sum\nolimits_{\nu=1}^{R}p_{\nu}(\x_1,\ldots,\x_{\abs{I}}){\cdot}\delta'_\nu(\x_{\abs{I}+1},\ldots,\x_N) \\
&&\quad+\sum\nolimits_{\nu=1}^{R}\delta_{\nu}(\x_1,\ldots,\x_{\abs{I}}){\cdot}p'_\nu(\x_{\abs{I}+1},\ldots,\x_N) \\
&&\quad+\sum\nolimits_{\nu=1}^{R}\delta_{\nu}(\x_1,\ldots,\x_{\abs{I}}){\cdot}\delta'_\nu(\x_{\abs{I}+1},\ldots,\x_N)
\eeas
Given that~$U$ is the span of products from~$V$ and~$V'$ (eq.~\ref{eq:U_prod_V_Vpr}), and that $p_{\nu}{\in}V,\delta_{\nu}{\in}V^{\perp},p'_{\nu}{\in}V',\delta'_{\nu}{\in}V'^{\perp}$, one readily sees that the first term in the latter expression belongs to~$U$, while, according to fact~\ref{fact:prod_orth}, the second, third and fourth terms are orthogonal to~$U$.
We thus obtained an orthogonal decomposition of~$h_y$ \wrt~$U$.
Since~$h_y$ is contained in~$U$, the orthogonal component must vanish (fact~\ref{fact:orth_decomp}), and we amount at:
\be
h_y(\x_1,\ldots,\x_N)=\sum\nolimits_{\nu=1}^{R}p_{\nu}(\x_1,\ldots,\x_{\abs{I}}){\cdot}p'_\nu(\x_{\abs{I}+1},\ldots,\x_N)
\label{eq:h_y_sep_subspace}
\ee
For every $\nu\in[R]$, let~$\B^{\nu}$ and~$\CC^{\nu}$ be coefficient tensors of~$p_{\nu}$ and~$p'_{\nu}$ \wrt~the functions that span~$V$ and~$V'$ (eq.~\ref{eq:V} and~\ref{eq:Vpr}), respectively.
Put formally, $\B^{\nu}$~and~$\CC^{\nu}$ are tensors of orders~$\abs{I}$ and~$\abs{J}$ (respectively), with dimension~$M$ in each mode, meeting:
\beas
p_{\nu}(\x_1,\ldots,\x_{\abs{I}}) &=& \sum\nolimits_{d_1{\ldots}d_{\abs{I}}=1}^M\B^{\nu}_{d_1{\ldots}d_{\abs{I}}}\prod\nolimits_{i=1}^{\abs{I}} f_{\theta_{d_i}}(\x_i) \\
p'_{\nu}(\x_1,\ldots,\x_{\abs{J}}) &=& \sum\nolimits_{d_1{\ldots}d_{\abs{J}}=1}^M\CC^{\nu}_{d_1{\ldots}d_{\abs{J}}}\prod\nolimits_{i=1}^{\abs{J}} f_{\theta_{d_i}}(\x_i)
\eeas
Substitute into eq.~\ref{eq:h_y_sep_subspace}:
\beas
h_y\left(\x_1,\ldots,\x_N\right)
&=&\sum\nolimits_{\nu=1}^R\left(\sum\nolimits_{d_1{\ldots}d_{\abs{I}}=1}^M\B^{\nu}_{d_1{\ldots}d_{\abs{I}}}\prod\nolimits_{i=1}^{\abs{I}} f_{\theta_{d_i}}(\x_i)\right) \\
&&\qquad\qquad\cdot\left(\sum\nolimits_{d_{\abs{I}+1}{\ldots}d_N=1}^M\CC^{\nu}_{d_{\abs{I}+1}{\ldots}d_N}\prod\nolimits_{i=\abs{I}+1}^{N} f_{\theta_{d_i}}(\x_i)\right) \\
&=&\sum\nolimits_{\nu=1}^R\sum\nolimits_{d_1{\ldots}d_N=1}^M\B^{\nu}_{d_1{\ldots}d_{\abs{I}}}\cdot\CC^{\nu}_{d_{\abs{I}+1}{\ldots}d_N}\prod\nolimits_{i=1}^{N} f_{\theta_{d_i}}(\x_i) \\
&=&\sum\nolimits_{d_1{\ldots}d_N=1}^M\left(\sum\nolimits_{\nu=1}^R\B^{\nu}_{d_1{\ldots}d_{\abs{I}}}\cdot\CC^{\nu}_{d_{\abs{I}+1}{\ldots}d_N}\right)\prod\nolimits_{i=1}^{N} f_{\theta_{d_i}}(\x_i)
\eeas
Compare this expression for~$h_y$ to that given in eq.~\ref{eq:h_y}:
\be
\sum\nolimits_{d_1{\ldots}d_N=1}^M\left(\sum\nolimits_{\nu=1}^R\B^{\nu}_{d_1{\ldots}d_{\abs{I}}}\cdot\CC^{\nu}_{d_{\abs{I}+1}{\ldots}d_N}\right)\prod\nolimits_{i=1}^{N} f_{\theta_{d_i}}(\x_i)
=\sum\nolimits_{d_1{\ldots}d_N=1}^M\A_{d_1{\ldots}d_N}^{y}\prod\nolimits_{i=1}^{N} f_{\theta_{d_i}}(\x_i)
\label{eq:h_y_compare}
\ee
At this point we utilize the given linear independence of $f_{\theta_1}{\ldots}f_{\theta_M}{\in}L^2(\R^s)$, from which it follows (fact~\ref{fact:prod_ind}) that the functions spanning~$U$ (eq.~\ref{eq:U}) are linearly independent in~$L^2((\R^s)^N)$.
Both sides of eq.~\ref{eq:h_y_compare} are linear combinations of these functions, thus their coefficients must coincide:
$$
\A_{d_1{\ldots}d_N}^{y}=\sum\nolimits_{\nu=1}^R\B^{\nu}_{d_1{\ldots}d_{\abs{I}}}\cdot\CC^{\nu}_{d_{\abs{I}+1}{\ldots}d_N},\forall{d_1{\ldots}d_N\in[M]} 
~~\Longleftrightarrow~~
\A^y=\sum\nolimits_{\nu=1}^R\B^{\nu}\otimes\CC^{\nu}
$$
Matricizing the tensor equation on the right \wrt~$(I,J)$ gives:
\beas
\mat{\A^y}_{I,J} 
&=& \matflex{\sum\nolimits_{\nu=1}^R\B^{\nu}\otimes\CC^{\nu}}_{I,J} \\
&=& \sum\nolimits_{\nu=1}^R\mat{\B^{\nu}\otimes\CC^{\nu}}_{I,J} \\
&=& \sum\nolimits_{\nu=1}^R\mat{\B^{\nu}}_{I\cap[\abs{I}],J\cap[\abs{I}]}\odot\mat{\CC^{\nu}}_{(I-\abs{I})\cap[\abs{J}],(J-\abs{I})\cap[\abs{J}]} \\
&=& \sum\nolimits_{\nu=1}^R\mat{\B^{\nu}}_{[\abs{I}],\emptyset}\odot\mat{\CC^{\nu}}_{\emptyset,[\abs{J}]}
\eeas
where the second equality is based on the linearity of the matricization operator, the third equality relies on the relation in eq.~\ref{eq:mat_tensor_prod}, and the last equality makes use of the assumption $I=\{1,\ldots,\abs{I}\},J=\{\abs{I}+1,\ldots,N\}$.
For every $\nu\in[R]$, $\mat{\B^{\nu}}_{[\abs{I}],\emptyset}$~is a column vector of dimension~$M^{\abs{I}}$ and~$\mat{\CC^{\nu}}_{\emptyset,[\abs{J}]}$ is a row vector of dimension~$M^{\abs{J}}$.
Denoting these by~$\uu_{\nu}$ and~$\vv_{\nu}^{\top}$ respectively, we may write:
$$\mat{\A^y}_{I,J}=\sum\nolimits_{\nu=1}^R\uu_{\nu}\vv_{\nu}^{\top}$$
This shows that $rank\mat{\A^y}_{I,J}{\leq}R$.
Since~$R$ is a general non-negative integer that admits eq.~\ref{eq:h_y_sep}, we may take it to be minimal, \ie~to be equal to $sep(h_y;I,J)$~--~the separation rank of~$h_y$ \wrt~$(I,J)$.
By this we obtain $rank\mat{\A^y}_{I,J}{\leq}sep(h_y;I,J)$, which is what we set out to prove.

\qed

\subsection{Proof of claim~\ref{claim:mat_rank_max_everywhere}} \label{app:proofs:mat_rank_max_everywhere}

The claim is framed in measure theoretical terms, and in accordance, so will its proof be.
While a complete introduction to measure theory is beyond our scope (the interested reader is referred to~\cite{jones2001lebesgue}), we briefly convey here the intuition behind the concepts we will be using, as well as facts we rely upon.
The \emph{Lebesgue measure} is defined over sets in a Euclidean space, and may be interpreted as quantifying their ``volume''.
For example, the Lebesgue measure of a unit hypercube is one, of the entire space is infinity, and of a finite set of points is zero.
In this context, when a phenomenon is said to occur \emph{almost everywhere}, it means that the set of points in which it does not occur has Lebesgue measure zero, \ie~is negligible.
An important result we will make use of (proven in~\cite{caron2005zero} for example) is the following.
Given a polynomial defined over $n$ real variables, the set of points in $\R^n$ on which it vanishes is either the entire space (when the polynomial in question is the zero polynomial), or it must have Lebesgue measure zero.
In other words, if a polynomial is not identically zero, it must be different from zero almost everywhere.

Heading on to the proof, we recall from sec.~\ref{sec:cac} that the entries of the coefficient tensor~$\A^y$ (eq.~\ref{eq:h_y}) are given by polynomials in the network's conv weights~$\{\aaa^{l,\gamma}\}_{l,\gamma}$ and output weights~$\aaa^{L,y}$.
Since $\mat{\A^y}_{I,J}$~--~the matricization of~$\A^y$ \wrt~the partition~$(I,J)$, is merely a rearrangement of the tensor as a matrix, this matrix too has entries given by polynomials in the network's linear weights.
Now, denote by~$r$ the maximal rank taken by~$\mat{\A^y}_{I,J}$ as network weights vary, and consider a specific setting of weights for which this rank is attained.
We may assume without loss of generality that under this setting, the top-left $r$-by-$r$ block of~$\mat{\A^y}_{I,J}$ is non-singular.
The corresponding minor, \ie~the determinant of the sub-matrix~$(\mat{\A^y}_{I,J})_{1:r,1:r}$, is thus a polynomial defined over~$\{\aaa^{l,\gamma}\}_{l,\gamma}$ and~$\aaa^{L,y}$ which is not identically zero.
In light of the above, this polynomial is different from zero almost everywhere, implying that $rank(\mat{\A^y}_{I,J})_{1:r,1:r}=r$ almost everywhere.
Since $rank\mat{\A^y}_{I,J}{\geq}rank(\mat{\A^y}_{I,J})_{1:r,1:r}$, and since by definition~$r$ is the maximal rank that~$\mat{\A^y}_{I,J}$ can take, we have that $rank\mat{\A^y}_{I,J}$ is maximal almost everywhere.

\qed

\subsection{Proof of theorem~\ref{theorem:deep_mat_rank_bounds}} \label{app:proofs:deep_mat_rank_bounds}

The matrix decomposition in eq.~\ref{eq:mat_hr_decomp} expresses~$\mat{\A}_{I,J}$ in terms of the network's linear weights~--~$\{\aaa^{0,\gamma}\in\R^M\}_{\gamma\in[r_0]}$ for conv operator in hidden layer~$0$, $\{\aaa^{l,\gamma}\in\R^{r_{l-1}}\}_{\gamma\in[r_l]}$ for conv operator in hidden layer $l=1{\ldots}L-1$, and $\aaa^{L,y}\in\R^{r_{L-1}}$ for node~$y$ of dense output operator.
We prove lower and upper bounds on the maximal rank that~$\mat{\A}_{I,J}$ can take as these weights vary.
Our proof relies on the rank-multiplicative property of the Kronecker product ($rank(A{\odot}B)=rank(A){\cdot}rank(B)$ for any real matrices $A$ and $B$~--~see~\cite{bellman1970introduction} for proof), but is otherwise elementary.

Beginning with the lower bound, consider the following weight setting ($\e_{\gamma}$ here stands for a vector holding~$1$ in entry~$\gamma$ and~$0$ at all other entries, $\0$~stands for a vector holding~$0$ at all entries, and~$\1$ stands for a vector holding~$1$ at all entries, with the dimension of a vector to be understood by context):
\bea
\aaa^{0,\gamma} &=&
\left\{
\begin{array}{ll}
\e_{\gamma}  & ,\gamma{\leq}\min\{r_0,M\} \\
\0                  & ,\text{otherwise}
\end{array}
\right. 
\label{eq:weight_setting}\\
\aaa^{1,\gamma} &=&
\left\{
\begin{array}{ll}
\1  & ,\gamma=1 \\
\0  & ,\text{otherwise}
\end{array}
\right. 
\nonumber\\
\aaa^{l,\gamma} &=&
\left\{
\begin{array}{ll}
\e_1  & ,\gamma=1 \\
\0     & ,\text{otherwise}
\end{array}
\right.
\text{for $l=2{\ldots}L-1$} 
\nonumber\\
\aaa^{L,y} &=& \e_1
\nonumber
\eea
Let $n\in[N/4]$.
Recalling the definition of~$I_{l,k}$ and~$J_{l,k}$ from eq.~\ref{eq:_I_lk_J_lk}, consider the sets~$I_{1,n}$ and~$J_{1,n}$, as well as~$I_{0,4(n-1)+t}$ and~$J_{0,4(n-1)+t}$ for $t\in[4]$.
$(I_{1,n},J_{1,n})$~is a partition of~$[4]$, \ie~$I_{1,n}{\cupdot}J_{1,n}=[4]$, and for every $t\in[4]$ we have $I_{0,4(n-1)+t}=\{1\}$ and $J_{0,4(n-1)+t}=\emptyset$ if~$t$ belongs to~$I_{1,n}$, and otherwise $I_{0,4(n-1)+t}=\emptyset$ and $J_{0,4(n-1)+t}=\{1\}$ if~$t$ belongs to~$J_{1,n}$.
This implies that for an arbitrary vector~$\vv$, the matricization $\mat{\vv}_{I_{0,4(n-1)+t},J_{0,4(n-1)+t}}$ is equal to~$\vv$ if $t{\in}I_{1,n}$, and to~$\vv^{\top}$ if $t{\in}J_{1,n}$.
Accordingly, for any~$\gamma\in[r_0]$:
$$\odotuo{t=1}{4} \mat{\aaa^{0,\gamma}}_{I_{0,4(n-1)+t},J_{0,4(n-1)+t}}=\left\{
\begin{array}{ll}
(\aaa^{0,\gamma}\odot\aaa^{0,\gamma}\odot\aaa^{0,\gamma}\odot\aaa^{0,\gamma})  & ,\abs{I_{1,n}}=4~\abs{J_{1,n}}=0 \\
(\aaa^{0,\gamma}\odot\aaa^{0,\gamma}\odot\aaa^{0,\gamma})(\aaa^{0,\gamma})^\top  & ,\abs{I_{1,n}}=3~\abs{J_{1,n}}=1 \\
(\aaa^{0,\gamma}\odot\aaa^{0,\gamma})(\aaa^{0,\gamma}\odot\aaa^{0,\gamma})^\top  & ,\abs{I_{1,n}}=2~\abs{J_{1,n}}=2 \\
(\aaa^{0,\gamma})(\aaa^{0,\gamma}\odot\aaa^{0,\gamma}\odot\aaa^{0,\gamma})^\top  & ,\abs{I_{1,n}}=1~\abs{J_{1,n}}=3 \\
(\aaa^{0,\gamma}\odot\aaa^{0,\gamma}\odot\aaa^{0,\gamma}\odot\aaa^{0,\gamma})^\top  & ,\abs{I_{1,n}}=0~\abs{J_{1,n}}=4 \\
\end{array}
\right.$$
Assume that $\gamma\leq\min\{r_0,M\}$.
By our setting $\aaa^{0,\gamma}=\e_{\gamma}$, so the above matrix holds~$1$ in a single entry and~$0$ in all the rest.
Moreover, if the matrix is not a row or column vector, \ie~if both~$I_{1,n}$ and~$J_{1,n}$ are non-empty, the column index and row index of the entry holding~$1$ are both unique \wrt~$\gamma$, \ie~they do not repeat as $\gamma$ ranges over $1\ldots\min\{r_0,M\}$.
We thus have:
$$rank\left(\sum\nolimits_{\gamma=1}^{\min\{r_0,M\}}\odotuo{t=1}{4} \mat{\aaa^{0,\gamma}}_{I_{0,4(n-1)+t},J_{0,4(n-1)+t}}\right)=\left\{
\begin{array}{ll}
\min\{r_0,M\}  & ,I_{1,n}\neq\emptyset~\wedge~J_{1,n}\neq\emptyset \\
1                   & ,I_{1,n}=\emptyset~\vee~J_{1,n}=\emptyset
\end{array}
\right.$$
Since we set $\aaa^{1,1}=\1$ and $\aaa^{0,\gamma}=\0$ for $\gamma>\min\{r_0,M\}$, we may write:
$$rank\left(\sum\nolimits_{\gamma=1}^{r_0} a_\gamma^{1,1} \cdot \odotuo{t=1}{4} \mat{\aaa^{0,\gamma}}_{I_{0,4(n-1)+t},J_{0,4(n-1)+t}}\right)=\left\{
\begin{array}{ll}
\min\{r_0,M\}  & ,I_{1,n}\neq\emptyset~\wedge~J_{1,n}\neq\emptyset \\
1                   & ,I_{1,n}=\emptyset~\vee~J_{1,n}=\emptyset
\end{array}
\right.$$
The latter matrix is by definition equal to $\mat{\phi^{1,1}}_{I_{1,n},J_{1,n}}$ (see top row of eq.~\ref{eq:mat_hr_decomp}), and so for every $n\in[N/4]$:
\be
rank\matflex{\phi^{1,1}}_{I_{1,n},J_{1,n}}=\left\{
\begin{array}{ll}
\min\{r_0,M\}  & ,I_{1,n}\neq\emptyset~\wedge~J_{1,n}\neq\emptyset \\
1                   & ,I_{1,n}=\emptyset~\vee~J_{1,n}=\emptyset
\end{array}
\right.
\label{eq:phi_1_1_rank}
\ee
Now, the fact that we set $\aaa^{L,y}=\e_1$ and $\aaa^{l,1}=\e_1$ for $l=2{\ldots}L-1$, implies that the second to last levels of the decomposition in eq.~\ref{eq:mat_hr_decomp} collapse to:
$$\mat{\A^y}_{I,J}=\odotuo{t=1}{N/4} \mat{\phi^{1,1}}_{I_{1,t},J_{1,t}}$$
Applying the rank-multiplicative property of the Kronecker product, and plugging in eq.~\ref{eq:phi_1_1_rank}, we obtain:
$$rank\mat{\A^y}_{I,J}=\prod\nolimits_{t=1}^{N/4}rank\mat{\phi^{1,1}}_{I_{1,t},J_{1,t}}=\min\{r_0,M\}^S$$
where $S:=\abs{\{t\in[N/4]: I_{1,t}\neq\emptyset \wedge J_{1,t}\neq\emptyset\}}$.
This equality holds for the specific weight setting we defined in eq.~\ref{eq:weight_setting}. 
Maximizing over all weight settings gives the sought after lower bound:
$$\max_{\{\aaa^{l,\gamma}\}_{l,\gamma},\aaa^{L,y}}rank\mat{\A^y}_{I,J}\geq\min\{r_0,M\}^S$$

\medskip

Moving on to the upper bound, we show by induction over $l=1{\ldots}L-1$ that for any $k\in[N/4^l]$ and $\gamma\in[r_l]$, the rank of~$\mat{\phi^{l,\gamma}}_{I_{l,k},J_{l,k}}$ is no greater than~$c^{l,k}$, regardless of the chosen weight setting.
For the base case $l=1$ we have:
$$\mat{\phi^{1,\gamma}}_{I_{1,k},J_{1,k}}=\sum\nolimits_{\alpha=1}^{r_0} a_\alpha^{1,\gamma} \cdot \odotuo{t=1}{4} \mat{\aaa^{0,\alpha}}_{I_{0,4(k-1)+t},J_{0,4(k-1)+t}}$$
The~$M^{|I_{1,k}|}$-by-$M^{|J_{1,k}|}$ matrix~$\mat{\phi^{1,\gamma}}_{I_{1,k},J_{1,k}}$ is given here as a sum of~$r_0$ rank-1 terms, thus obviously its rank is no greater than $\min\{M^{\min\{|I_{1,k}|,|J_{1,k}|\}},r_0\}$.
Since by definition $c^{0,t}=1$ for all $t\in[N]$, we may write:
$$rank\mat{\phi^{1,\gamma}}_{I_{1,k},J_{1,k}}\leq\min\left\{M^{\min\{|I_{1,k}|,|J_{1,k}|\}},r_0\prod\nolimits_{t=1}^{4}c^{0,4(k-1)+t}\right\}$$
$c^{1,k}$~is defined by the right hand side of this inequality, so our inductive hypotheses holds for $l=1$.
For $l>1$:
$$\mat{\phi^{l,\gamma}}_{I_{l,k},J_{l,k}}=\sum\nolimits_{\alpha=1}^{r_{l-1}} a_\alpha^{l,\gamma} \cdot \odotuo{t=1}{4} \mat{\phi^{l-1,\alpha}}_{I_{l-1,4(k-1)+t},J_{l-1,4(k-1)+t}}$$
Taking ranks:
\beas
rank\mat{\phi^{l,\gamma}}_{I_{l,k},J_{l,k}}
&=& rank\left(\sum\nolimits_{\alpha=1}^{r_{l-1}} a_\alpha^{l,\gamma} \cdot \odotuo{t=1}{4} \mat{\phi^{l-1,\alpha}}_{I_{l-1,4(k-1)+t},J_{l-1,4(k-1)+t}}\right) \\
&\leq& \sum\nolimits_{\alpha=1}^{r_{l-1}} rank\left(\odotuo{t=1}{4} \mat{\phi^{l-1,\alpha}}_{I_{l-1,4(k-1)+t},J_{l-1,4(k-1)+t}}\right) \\
&=& \sum\nolimits_{\alpha=1}^{r_{l-1}}\prod\nolimits_{t=1}^{4} rank\mat{\phi^{l-1,\alpha}}_{I_{l-1,4(k-1)+t},J_{l-1,4(k-1)+t}} \\
&\leq& \sum\nolimits_{\alpha=1}^{r_{l-1}}\prod\nolimits_{t=1}^{4} c^{l-1,4(k-1)+t} \\
&=& r_{l-1}\prod\nolimits_{t=1}^{4} c^{l-1,4(k-1)+t}
\eeas
where we used rank sub-additivity in the second line, the rank-multiplicative property of the Kronecker product in the third line, and our inductive hypotheses for~$l-1$ in the fourth line.
Since the number rows and columns in $\mat{\phi^{l,\gamma}}_{I_{l,k},J_{l,k}}$ is~$M^{|I_{l,k}|}$ and~$M^{|J_{l,k}|}$ respectively, we may incorporate these terms into the inequality, obtaining:
$$rank\mat{\phi^{l,\gamma}}_{I_{l,k},J_{l,k}}\leq\min\left\{M^{\min\{|I_{l,k}|,|J_{l,k}|\}},r_{l-1}\prod\nolimits_{t=1}^{4} c^{l-1,4(k-1)+t}\right\}$$
The right hand side here is equal to~$c^{l,k}$ by definition, so our inductive hypotheses indeed holds for all $l=1{\ldots}L-1$.
To establish the sought after upper bound on the rank of $\mat{\A^y}_{I,J}$, we recall that the latter is given by:
$$\mat{\A^y}_{I,J}=\sum\nolimits_{\alpha=1}^{r_{L-1}} a_\alpha^{L,y} \cdot \odotuo{t=1}{4} \mat{\phi^{L-1,\alpha}}_{I_{L-1,t},J_{L-1,t}}$$
Carry out a series of steps similar to before, while making use of our inductive hypotheses for $l=L-1$:
\beas
rank\mat{\A^y}_{I,J}
&=& rank\left(\sum\nolimits_{\alpha=1}^{r_{L-1}} a_\alpha^{L,y} \cdot \odotuo{t=1}{4} \mat{\phi^{L-1,\alpha}}_{I_{L-1,t},J_{L-1,t}}\right) \\
&\leq& \sum\nolimits_{\alpha=1}^{r_{L-1}} rank\left(\odotuo{t=1}{4} \mat{\phi^{L-1,\alpha}}_{I_{L-1,t},J_{L-1,t}}\right) \\
&=& \sum\nolimits_{\alpha=1}^{r_{L-1}}\prod\nolimits_{t=1}^{4} rank\mat{\phi^{L-1,\alpha}}_{I_{L-1,t},J_{L-1,t}} \\
&\leq& \sum\nolimits_{\alpha=1}^{r_{L-1}}\prod\nolimits_{t=1}^{4} c^{L-1,t} \\
&=& r_{L-1}\prod\nolimits_{t=1}^{4} c^{L-1,t}
\eeas
Since~$\mat{\A^y}_{I,J}$ has~$M^{|I|}$ rows and~$M^{|J|}$ columns, we may include these terms in the inequality, thus reaching the upper bound we set out to prove.

\qed

\section{Separation rank and the $L^2$ distance from separable functions} \label{app:sep_rank_L2}

Our analysis of correlations modeled by convolutional networks is based on the concept of separation rank, conveyed in sec.~\ref{sec:sep_rank}.
When the separation rank of a function \wrt~a partition of its input is equal to~$1$, the function is separable, meaning it does not model any interaction between sides of the partition.
We argued that the higher the separation rank, the farther the function is from this situation, \ie~the stronger the correlation it induces between sides of the partition.
In the current appendix we formalize this argument, by relating separation rank to the $L^2$~distance from the set of separable functions.
We begin by defining and characterizing a normalized (scale invariant) version of this distance (app.~\ref{app:sep_rank_L2:normalized_dist}).
It is then shown (app.~\ref{app:sep_rank_L2:ub_sep_rank}) that separation rank provides an upper bound on the normalized distance.
Finally, a lower bound that applies to deep convolutional arithmetic circuits is derived (app.~\ref{app:sep_rank_L2:lb_cac}), based on the lower bound for their separation ranks established in sec.~\ref{sec:analysis:deep}.
Together, these steps imply that our entire analysis, facilitated by upper and lower bounds on separation ranks of convolutional arithmetic circuits, can be interpreted as based on upper and lower bounds on (normalized) $L^2$~distances from separable functions.

In the text hereafter, we assume familiarity of the reader with the contents of sec.~\ref{sec:prelim},~\ref{sec:cac},~\ref{sec:sep_rank},~\ref{sec:analysis} and the proofs given in app.~\ref{app:proofs}.
We also rely on basic knowledge in the topic of $L^2$~spaces (see discussion in app.~\ref{app:proofs:sep_mat_ranks_equal} for minimal background required in order to follow our arguments), as well as several results concerning singular values of matrices.
In line with sec.~\ref{sec:analysis}, an assumption throughout this appendix is that all functions in question are measurable and square-integrable (\ie~belong to~$L^2$ over the respective Euclidean space), and in app.~\ref{app:sep_rank_L2:lb_cac}, we also make use of the fact that representation functions~($f_{\theta_d}$) of a convolutional arithmetic circuit can be regarded as linearly independent (see sec.~\ref{sec:analysis:sep2mat}).
Finally, for convenience, we now fix~$(I,J)$~--~an arbitrary partition of~$[N]$.
Specifically, $I$~and~$J$ are disjoint subsets of~$[N]$ whose union gives~$[N]$, denoted by $I=\{i_1,\ldots,i_{\abs{I}}\}$ with $i_1<\cdots<i_{\abs{I}}$, and $J=\{j_1,\ldots,j_{\abs{J}}\}$ with $j_1<\cdots<j_{\abs{J}}$.

\subsection{Normalized $L^2$ distance from separable functions} \label{app:sep_rank_L2:normalized_dist}

For a function $h{\in}L^2((\R^s)^N)$ (which is not identically zero), the \emph{normalized $L^2$ distance from the set of separable functions \wrt~$(I,J)$}, is defined as follows:
\be
D(h;I,J):=\frac{1}{\norm{h}}\cdot\inf_{\substack{g{\in}L^2((\R^s)^{\abs{I}})\\g'{\in}L^2((\R^s)^{\abs{J}})}}\norm{h(\x_1,\ldots,\x_N)-g(\x_{i_1},\ldots,\x_{i_{\abs{I}}})g'(\x_{j_1},\ldots,\x_{j_{\abs{J}}})}
\label{eq:normalized_dist}
\ee
where~$\norm{\cdot}$ refers to the norm of~$L^2$ space, \eg~$\norm{h}:=(\int_{(\R^s)^N}{h^2})^{1/2}$.
In words, $D(h;I,J)$ is defined as the minimal $L^2$~distance between~$h$ and a function that is separable \wrt~$(I,J)$, divided by the norm of~$h$.
The normalization (division by~$\norm{h}$) admits scale invariance to~$D(h;I,J)$, and is of critical importance~--~without it, rescaling~$h$ would accordingly rescale the distance measure, rendering the latter uninformative in terms of deviation from separability.

It is worthwhile noting the resemblance between~$D(h;I,J)$ and the concept of mutual information (see~\cite{cover2012elements} for a comprehensive introduction).
Both measures quantify the interaction that a normalized function
\footnote{
An equivalent definition of~$D(h;I,J)$ is the minimal $L^2$ distance between~$h/\norm{h}$ and a function separable \wrt~$(I,J)$.
Accordingly, we may view~$D(h;I,J)$ as operating on normalized functions. 
}
induces between input variables, by measuring distance from separable functions.
The difference between the measures is threefold.
First, mutual information considers probability density functions (non-negative and in~$L^1$), while~$D(h;I,J)$ applies to functions in~$L^2$.
Second, the notion of distance in mutual information is quantified through the Kullback-Leibler divergence, whereas in~$D(h;I,J)$ it is simply the $L^2$~metric.
Third, while mutual information evaluates the distance from a specific separable function~--~product of marginal distributions, $D(h;I,J)$~evaluates the minimal distance across all separable functions.

\medskip

We now turn to establish a spectral characterization of~$D(h;I,J)$, which will be used in app.~\ref{app:sep_rank_L2:ub_sep_rank} and~\ref{app:sep_rank_L2:lb_cac} for deriving upper and lower bounds (respectively).
Assume we have the following expression for~$h$:
\be
h(\x_1,\ldots,\x_N)=\sum\nolimits_{\mu=1}^{m}\sum\nolimits_{\mu'=1}^{m'}A_{\mu,\mu'}\cdot\phi_{\mu}(\x_{i_1},\ldots,\x_{i_{\abs{I}}})\phi'_{\mu'}(\x_{j_1},\ldots,\x_{j_{\abs{J}}})
\label{eq:orth_sep_decomp}
\ee
where~$m$ and~$m'$ are positive integers, $A$~is an $m$-by-$m'$ real matrix, and $\{\phi_{\mu}\}_{\mu=1}^{m},\{\phi'_{\mu'}\}_{\mu'=1}^{m'}$ are orthonormal sets of functions in~$L^2((\R^s)^{\abs{I}}),L^2((\R^s)^{\abs{J}})$ respectively.
We refer to such expression as an \emph{orthonormal separable decomposition} of~$h$, with~$A$ being its \emph{coefficient matrix}.
We will show that for any orthonormal separable decomposition, $D(h;I,J)$~is given by the following formula:
\be
D(h;I,J)=\sqrt{1-\frac{\sigma_{1}^2(A)}{\sigma_{1}^2(A)+\cdots+\sigma_{\min\{m,m'\}}^2(A)}}
\label{eq:normalized_dist_formula}
\ee
where $\sigma_{1}(A)\geq\cdots\geq\sigma_{\min\{m,m'\}}(A)\geq0$ are the singular values of the coefficient matrix~$A$.
This implies that if the largest singular value of~$A$ accounts for a significant portion of the spectral energy, the normalized $L^2$~distance of~$h$ from separable functions is small.
On the other hand, if all but a fraction of the spectral energy is attributed to trailing singular values, $h$~is far from being separable ($D(h;I,J)$ is close to~$1$).

\medskip

As a first step in deriving eq.~\ref{eq:normalized_dist_formula}, we show that $\norm{h}^2=\sigma_{1}^2(A)+\cdots+\sigma_{\min\{m,m'\}}^2(A)$:
\bea
\norm{h}^2
&\underset{(1)}{=}&\int{h^2(\x_1,\ldots,\x_N)}{d\x_1{\cdots}d\x_N} 
\nonumber\\
&\underset{(2)}{=}&\int\left(\sum\nolimits_{\mu=1}^{m}\sum\nolimits_{\mu'=1}^{m'}A_{\mu,\mu'}\cdot\phi_{\mu}(\x_{i_1},\ldots,\x_{i_{\abs{I}}})\phi'_{\mu'}(\x_{j_1},\ldots,\x_{j_{\abs{J}}})\right)^2{d\x_1{\cdots}d\x_N} 
\nonumber\\
&\underset{(3)}{=}&\int\sum\nolimits_{\mu,\bar{\mu}=1}^{m}\sum\nolimits_{\mu',\bar{\mu}'=1}^{m'}A_{\mu,\mu'}A_{\bar{\mu},\bar{\mu}'}\cdot\phi_{\mu}(\x_{i_1},\ldots,\x_{i_{\abs{I}}})\phi'_{\mu'}(\x_{j_1},\ldots,\x_{j_{\abs{J}}}) 
\nonumber\\
&&\qquad\qquad\qquad\qquad\qquad\qquad\qquad\qquad
\cdot\phi_{\bar{\mu}}(\x_{i_1},\ldots,\x_{i_{\abs{I}}})\phi'_{\bar{\mu}'}(\x_{j_1},\ldots,\x_{j_{\abs{J}}})d\x_1{\cdots}d\x_N 
\nonumber\\
&\underset{(4)}{=}&\sum\nolimits_{\mu,\bar{\mu}=1}^{m}\sum\nolimits_{\mu',\bar{\mu}'=1}^{m'}A_{\mu,\mu'}A_{\bar{\mu},\bar{\mu}'}\int\phi_{\mu}(\x_{i_1},\ldots,\x_{i_{\abs{I}}})\phi'_{\mu'}(\x_{j_1},\ldots,\x_{j_{\abs{J}}}) 
\nonumber\\
&&\qquad\qquad\qquad\qquad\qquad\qquad\qquad\qquad
\cdot\phi_{\bar{\mu}}(\x_{i_1},\ldots,\x_{i_{\abs{I}}})\phi'_{\bar{\mu}'}(\x_{j_1},\ldots,\x_{j_{\abs{J}}})d\x_1{\cdots}d\x_N 
\nonumber\\
&\underset{(5)}{=}&\sum\nolimits_{\mu,\bar{\mu}=1}^{m}\sum\nolimits_{\mu',\bar{\mu}'=1}^{m'}A_{\mu,\mu'}A_{\bar{\mu},\bar{\mu}'}\int\phi_{\mu}(\x_{i_1},\ldots,\x_{i_{\abs{I}}})\phi_{\bar{\mu}}(\x_{i_1},\ldots,\x_{i_{\abs{I}}})d\x_{i_1}{\cdots}d\x_{i_{\abs{I}}} 
\nonumber\\
&&\qquad\qquad\qquad\qquad\qquad\qquad\qquad
\cdot\int\phi'_{\mu'}(\x_{j_1},\ldots,\x_{j_{\abs{J}}})\phi'_{\bar{\mu}'}(\x_{j_1},\ldots,\x_{j_{\abs{J}}})d\x_{j_1}{\cdots}d\x_{j_{\abs{J}}} 
\nonumber\\
&\underset{(6)}{=}&\sum\nolimits_{\mu,\bar{\mu}=1}^{m}\sum\nolimits_{\mu',\bar{\mu}'=1}^{m'}A_{\mu,\mu'}A_{\bar{\mu},\bar{\mu}'}
\cdot\left\{
\begin{array}{ll}
1  & ,\mu=\bar{\mu} \\
0  & ,\text{otherwise}
\end{array}
\right\}
\cdot\left\{
\begin{array}{ll}
1  & ,\mu'=\bar{\mu}' \\
0  & ,\text{otherwise}
\end{array}
\right\} 
\nonumber\\
&\underset{(7)}{=}&\sum\nolimits_{\mu=1}^{m}\sum\nolimits_{\mu'=1}^{m'}A_{\mu,\mu'}^2 
\nonumber\\
&\underset{(8)}{=}&\sigma_{1}^2(A)+\cdots+\sigma_{\min\{m,m'\}}^2(A)
\label{eq:orth_sep_decomp_norm}
\eea
Equality~$(1)$ here originates from the definition of $L^2$~norm.
$(2)$~is obtained by plugging in the expression in eq.~\ref{eq:orth_sep_decomp}.
$(3)$~is merely an arithmetic manipulation.
$(4)$~follows from the linearity of integration.
$(5)$~makes use of Fubini's theorem (see~\cite{jones2001lebesgue}).
$(6)$~results from the orthonormality of $\{\phi_{\mu}\}_{\mu=1}^{m}$ and $\{\phi'_{\mu'}\}_{\mu'=1}^{m'}$.
$(7)$~is a trivial computation.
Finally, $(8)$~is an outcome of the fact that the squared Frobenius norm of a matrix, \ie~the sum of squares over its entries, is equal to the sum of squares over its singular values (see~\cite{golub2013matrix} for proof).

Let $g{\in}L^2((\R^s)^{\abs{I}})$.
By fact~\ref{fact:orth_decomp} in app.~\ref{app:proofs:sep_mat_ranks_equal}, there exist scalars $\alpha_1\ldots\alpha_m\in\R$, and a function $\delta{\in}L^2((\R^s)^{\abs{I}})$ orthogonal to $span\{\phi_{1}\ldots\phi_m\}$, such that $g=\sum_{\mu=1}^{m}\alpha_{\mu}\cdot\phi_{\mu}+\delta$.
Similarly, for any $g'{\in}L^2((\R^s)^{\abs{J}})$ there exist $\alpha'_1\ldots\alpha'_{m'}\in\R$ and $\delta'{\in}span\{\phi'_{1}\ldots\phi'_{m'}\}^{\perp}$ such that $g'=\sum_{\mu'=1}^{m'}\alpha'_{\mu'}\cdot\phi'_{\mu'}+\delta'$.
Fact~\ref{fact:prod} in app.~\ref{app:proofs:sep_mat_ranks_equal} indicates that the function given by $(\x_1,\ldots,\x_N){\mapsto}g(\x_{i_1},\ldots,\x_{i_{\abs{I}}})g'(\x_{j_1},\ldots,\x_{j_{\abs{J}}})$ belongs to~$L^2((\R^s)^N)$.
We may express it as follows:
\beas
g(\x_{i_1},\ldots,\x_{i_{\abs{I}}})g'(\x_{j_1},\ldots,\x_{j_{\abs{J}}})
&=&\sum\nolimits_{\mu=1}^{m}\sum\nolimits_{\mu'=1}^{m'}\alpha_{\mu}\alpha'_{\mu'}\cdot\phi_{\mu}(\x_{i_1},\ldots,\x_{i_{\abs{I}}})\phi'_{\mu'}(\x_{j_1},\ldots,\x_{j_{\abs{J}}}) \\
&&\qquad+\left(\sum\nolimits_{\mu=1}^{m}\alpha_{\mu}\cdot\phi_{\mu}(\x_{i_1},\ldots,\x_{i_{\abs{I}}})\right)\cdot\delta'(\x_{j_1},\ldots,\x_{j_{\abs{J}}}) \\
&&\qquad+\delta(\x_{i_1},\ldots,\x_{i_{\abs{I}}})\cdot\left(\sum\nolimits_{\mu'=1}^{m'}\alpha'_{\mu'}\cdot\phi'_{\mu'}(\x_{j_1},\ldots,\x_{j_{\abs{J}}})\right) \\
&&\qquad+\delta(\x_{i_1},\ldots,\x_{i_{\abs{I}}})\delta'(\x_{j_1},\ldots,\x_{j_{\abs{J}}})
\eeas
According to fact~\ref{fact:prod_orth} in app.~\ref{app:proofs:sep_mat_ranks_equal}, the second, third and fourth terms on the right hand side of the above are orthogonal to $span\{(\x_1,\ldots,\x_N){\mapsto}\phi_{\mu}(\x_{i_1},\ldots,\x_{i_{\abs{I}}})\phi'_{\mu'}(\x_{j_1},\ldots,\x_{j_{\abs{J}}})\}_{\mu\in[m],\mu'\in[m']}$.
Denote their summation by~$\E(\x_1,\ldots,\x_N)$, and subtract the overall function from~$h$ (given by eq.~\ref{eq:orth_sep_decomp}):
\beas
&&h(\x_1,\ldots,\x_N)-g(\x_{i_1},\ldots,\x_{i_{\abs{I}}})g'(\x_{j_1},\ldots,\x_{j_{\abs{J}}}) \\
&&\quad\qquad=\sum\nolimits_{\mu=1}^{m}\sum\nolimits_{\mu'=1}^{m'}A_{\mu,\mu'}\cdot\phi_{\mu}(\x_{i_1},\ldots,\x_{i_{\abs{I}}})\phi'_{\mu'}(\x_{j_1},\ldots,\x_{j_{\abs{J}}}) \\
&&\quad\qquad\qquad-\sum\nolimits_{\mu=1}^{m}\sum\nolimits_{\mu'=1}^{m'}\alpha_{\mu}\alpha'_{\mu'}\cdot\phi_{\mu}(\x_{i_1},\ldots,\x_{i_{\abs{I}}})\phi'_{\mu'}(\x_{j_1},\ldots,\x_{j_{\abs{J}}})-\E(\x_1,\ldots,\x_N) \\
&&\quad\qquad=\sum\nolimits_{\mu=1}^{m}\sum\nolimits_{\mu'=1}^{m'}(A_{\mu,\mu'}-\alpha_{\mu}\alpha'_{\mu'})\cdot\phi_{\mu}(\x_{i_1},\ldots,\x_{i_{\abs{I}}})\phi'_{\mu'}(\x_{j_1},\ldots,\x_{j_{\abs{J}}})-\E(\x_1,\ldots,\x_N)
\eeas
Since the two terms in the latter expression are orthogonal to one another, we have:
\beas
\norm{h(\x_1,\ldots,\x_N)-g(\x_{i_1},\ldots,\x_{i_{\abs{I}}})g'(\x_{j_1},\ldots,\x_{j_{\abs{J}}})}^2 
\qquad\qquad\qquad\qquad\qquad\qquad\qquad\qquad\qquad\qquad\\
\qquad=\norm{\sum\nolimits_{\mu=1}^{m}\sum\nolimits_{\mu'=1}^{m'}(A_{\mu,\mu'}-\alpha_{\mu}\alpha'_{\mu'})\cdot\phi_{\mu}(\x_{i_1},\ldots,\x_{i_{\abs{I}}})\phi'_{\mu'}(\x_{j_1},\ldots,\x_{j_{\abs{J}}})}^2+\norm{\E(\x_1,\ldots,\x_N)}^2
\eeas
Applying a sequence of steps as in eq.~\ref{eq:orth_sep_decomp_norm} to the first term in the second line of the above, we obtain:
$$\norm{h(\x_1,\ldots,\x_N)-g(\x_{i_1},\ldots,\x_{i_{\abs{I}}})g'(\x_{j_1},\ldots,\x_{j_{\abs{J}}})}^2=\sum_{\mu=1}^{m}\sum_{\mu'=1}^{m'}(A_{\mu,\mu'}-\alpha_{\mu}\alpha'_{\mu'})^2+\norm{\E(\x_1,\ldots,\x_N)}^2$$
$\E(\x_1,\ldots,\x_N)=0$ if~$\delta$ and~$\delta'$ are the zero functions, implying that:
$$\norm{h(\x_1,\ldots,\x_N)-g(\x_{i_1},\ldots,\x_{i_{\abs{I}}})g'(\x_{j_1},\ldots,\x_{j_{\abs{J}}})}^2\geq\sum\nolimits_{\mu=1}^{m}\sum\nolimits_{\mu'=1}^{m'}(A_{\mu,\mu'}-\alpha_{\mu}\alpha'_{\mu'})^2$$
with equality holding if~$g=\sum_{\mu=1}^{m}\alpha_{\mu}\cdot\phi_{\mu}$ and~$g'=\sum_{\mu'=1}^{m'}\alpha'_{\mu'}\cdot\phi'_{\mu'}$.
Now, $\sum_{\mu=1}^{m}\sum_{\mu'=1}^{m'}(A_{\mu,\mu'}-\alpha_{\mu}\alpha'_{\mu'})^2$ is the squared Frobenius distance between the matrix~$A$ and the rank-$1$ matrix~$\alphabf\alphabf'^{\top}$, where~$\alphabf$ and~$\alphabf'$ are column vectors holding $\alpha_1\ldots\alpha_m$ and $\alpha'_1\ldots\alpha'_{m'}$ respectively.
This squared distance is greater than or equal to the sum of squares over the second to last singular values of~$A$, and moreover, the inequality holds with equality for proper choices of~$\alphabf$ and~$\alphabf'$ (\cite{eckart1936approximation}).
From this we conclude that:
$$\norm{h(\x_1,\ldots,\x_N)-g(\x_{i_1},\ldots,\x_{i_{\abs{I}}})g'(\x_{j_1},\ldots,\x_{j_{\abs{J}}})}^2\geq\sigma_{2}^2(A)+\cdots+\sigma_{\min\{m,m'\}}^2(A)$$
with equality holding if~$g$ and~$g'$ are set to~$\sum_{\mu=1}^{m}\alpha_{\mu}\cdot\phi_{\mu}$ and~$\sum_{\mu'=1}^{m'}\alpha'_{\mu'}\cdot\phi'_{\mu'}$ (respectively) for proper choices of $\alpha_1\ldots\alpha_m$ and $\alpha'_1\ldots\alpha'_{m'}$.
We thus have the infimum over all possible~$g,g'$:
\be
\inf_{\substack{g{\in}L^2((\R^s)^{\abs{I}})\\g'{\in}L^2((\R^s)^{\abs{J}})}}\norm{h(\x_1,\ldots,\x_N)-g(\x_{i_1},\ldots,\x_{i_{\abs{I}}})g'(\x_{j_1},\ldots,\x_{j_{\abs{J}}})}^2=\sigma_{2}^2(A)+\cdots+\sigma_{\min\{m,m'\}}^2(A)
\label{eq:unnormalized_dist_sq_formula}
\ee

Recall that we would like to derive the formula in eq.~\ref{eq:normalized_dist_formula} for $D(h;I,J)$, assuming~$h$ is given by the orthonormal separable decomposition in eq.~\ref{eq:orth_sep_decomp}.
Taking square root of the equalities established in eq.~\ref{eq:orth_sep_decomp_norm} and~\ref{eq:unnormalized_dist_sq_formula}, and plugging them into the definition of~$D(h;I,J)$ (eq.~\ref{eq:normalized_dist}), we obtain the sought after result.

\subsection{Upper bound through separation rank} \label{app:sep_rank_L2:ub_sep_rank}

We now relate $D(h;I,J)$~--~the normalized $L^2$ distance of $h{\in}L^2((\R^s)^N)$ from the set of separable functions \wrt~$(I,J)$ (eq.~\ref{eq:normalized_dist}), to $sep(h;I,J)$~--~the separation rank of~$h$ \wrt~$(I,J)$ (eq.~\ref{eq:sep_rank}).
Specifically, we make use of the formula in eq.~\ref{eq:normalized_dist_formula} to derive an upper bound on~$D(h;I,J)$ in terms of~$sep(h;I,J)$.

Assuming~$h$ has finite separation rank (otherwise the bound we derive is trivial), we may express it as:
\be
h(\x_1,\ldots,\x_N)=\sum\nolimits_{\nu=1}^{R}g_{\nu}(\x_{i_1},\ldots,\x_{i_{\abs{I}}})g'_\nu(\x_{j_1},\ldots,\x_{j_{\abs{J}}})
\label{eq:sep_decomp}
\ee
where~$R$ is some positive integer (necessarily greater than or equal to~$sep(h;I,J)$), and $g_1{\ldots}g_R{\in}L^2((\R^s)^{\abs{I}})$, $g'_1{\ldots}g'_R{\in}L^2((\R^s)^{\abs{J}})$.
Let $\{\phi_1,\ldots,\phi_m\}{\subset}L^2((\R^s)^{\abs{I}})$ and $\{\phi'_1,\ldots,\phi'_{m'}\}{\subset}L^2((\R^s)^{\abs{J}})$ be two sets of orthonormal functions spanning $span\{g_1{\ldots}g_R\}$ and $span\{g'_1{\ldots}g'_R\}$ respectively.
By definition, for every $\nu{\in}R$ there exist $\alpha_{\nu,1}\ldots\alpha_{\nu,m}\in\R$ and $\alpha'_{\nu,1}\ldots\alpha'_{\nu,m'}\in\R$ such that $g_{\nu}=\sum_{\mu=1}^{m}\alpha_{\nu,\mu}\cdot\phi_{\mu}$ and $g'_{\nu}=\sum_{\mu'=1}^{m'}\alpha'_{\nu,\mu'}\cdot\phi'_{\mu'}$.
Plugging this into eq.~\ref{eq:sep_decomp}, we obtain:
$$h(\x_1,\ldots,\x_N)=\sum\nolimits_{\mu=1}^{m}\sum\nolimits_{\mu'=1}^{m'}\left(\sum\nolimits_{\nu=1}^{R}\alpha_{\nu,\mu}\alpha'_{\nu,\mu'}\right)\cdot\phi_{\mu}(\x_{i_1},\ldots,\x_{i_{\abs{I}}})\phi'_{\mu'}(\x_{j_1},\ldots,\x_{j_{\abs{J}}})$$
This is an orthonormal separable decomposition of~$h$ (eq.~\ref{eq:orth_sep_decomp}), with coefficient matrix~$A=\sum\nolimits_{\nu=1}^{R}\alphabf_{\nu}(\alphabf'_{\nu})^{\top}$, where $\alphabf_{\nu}:=[\alpha_{\nu,1}\ldots\alpha_{\nu,m}]^{\top}$ and $\alphabf'_{\nu}:=[\alpha'_{\nu,1}\ldots\alpha'_{\nu,m'}]^{\top}$ for every~$\nu{\in}R$.
Obviously the rank of~$A$ is no greater than~$R$, implying:
$$\frac{\sigma_{1}^2(A)}{\sigma_{1}^2(A)+\cdots+\sigma_{\min\{m,m'\}}^2(A)}~{\geq}~\frac{1}{R}$$
where as in app.~\ref{app:sep_rank_L2:normalized_dist}, $\sigma_{1}(A)\geq\cdots\geq\sigma_{\min\{m,m'\}}(A)\geq0$ stand for the singular values of~$A$.
Introducing this inequality into eq.~\ref{eq:normalized_dist_formula} gives:
$$D(h;I,J)=\sqrt{1-\frac{\sigma_{1}^2(A)}{\sigma_{1}^2(A)+\cdots+\sigma_{\min\{m,m'\}}^2(A)}}\leq\sqrt{1-\frac{1}{R}}$$
The latter holds for any $R\in\N$ that admits eq.~\ref{eq:sep_decomp}, so in particular we may take it to be minimal, \ie~to be equal to~$sep(h;I,J)$
\footnote{
We disregard the trivial case where $sep(h;I,J)=0$ ($h$~is identically zero).
}
, bringing forth the sought after upper bound:
\be
D(h;I,J)\leq\sqrt{1-\frac{1}{sep(h;I,J)}}
\label{eq:normalized_dist_ub}
\ee

By eq.~\ref{eq:normalized_dist_ub}, low separation rank implies proximity (in normalized $L^2$ sense) to a separable function.
We may use the inequality to translate the upper bounds on separation ranks established for deep and shallow convolutional arithmetic circuits (sec.~\ref{sec:analysis:deep} and~\ref{sec:analysis:shallow} respectively), into upper bounds on normalized $L^2$ distances from separable functions.
To completely frame our analysis in terms of the latter measure, a translation of the lower bound on separation ranks of deep convolutional arithmetic circuits (sec.~\ref{sec:analysis:deep}) is also required.
Eq.~\ref{eq:normalized_dist_ub} does not facilitate such translation, and in fact, it is easy to construct functions~$h$ whose separation ranks are high yet are very close (in normalized $L^2$ sense) to separable functions.
\footnote{
This will be the case, for example, if~$h$ is given by an orthonormal separable decomposition (eq.~\ref{eq:orth_sep_decomp}), with coefficient matrix~$A$ that has high rank but whose spectral energy is highly concentrated on one singular value.
}
However, as we show in app.~\ref{app:sep_rank_L2:lb_cac} below, the specific lower bound of interest can indeed be translated, and our analysis may entirely be framed in terms of normalized $L^2$ distance from separable functions.

\subsection{Lower bound for deep convolutional arithmetic circuits} \label{app:sep_rank_L2:lb_cac}

Let $h_y{\in}L^2((\R^s)^N)$ be a function realized by a deep convolutional arithmetic circuit (fig.~\ref{fig:nets_patterns}(a) with size-$4$ pooling windows and $L=\log_{4}N$ hidden layers), \ie~$h_y$ is given by eq.~\ref{eq:h_y}, where $f_{\theta_1}{\ldots}f_{\theta_M}{\in}L^2(\R^s)$ are linearly independent representation functions, and~$\A^y$ is a coefficient tensor of order~$N$ and dimension~$M$ in each mode, determined by the linear weights of the network ($\{\aaa^{l,\gamma}\}_{l,\gamma},\aaa^{L,y}$) through the hierarchical decomposition in eq.~\ref{eq:hr_decomp}.
Rearrange eq.~\ref{eq:h_y} by grouping indexes $d_1{\ldots}d_N$ in accordance with the partition~$(I,J)$:
\be
h_y\left(\x_1,\ldots,\x_N\right)=
\sum\nolimits_{d_{i_1}{\ldots}d_{i_{\abs{I}}}=1}^{M}
\sum\nolimits_{d_{j_1}{\ldots}d_{j_{\abs{J}}}=1}^{M}
\A_{d_1{\ldots}d_N}^{y}\cdot
\left(\prod\nolimits_{t=1}^{\abs{I}} f_{\theta_{d_{i_t}}}(\x_{i_t})\right)
\left(\prod\nolimits_{t=1}^{\abs{J}} f_{\theta_{d_{j_t}}}(\x_{j_t})\right)
\label{eq:h_y_I_J}
\ee
Let~$m=M^{\abs{I}}$, and define the following mapping: 
$$\mu:[M]^{\abs{I}}\to[m]\quad,\quad
\mu(d_{i_1},\ldots,d_{i_{\abs{I}}})=1+\sum\nolimits_{t=1}^{\abs{I}}(d_{i_t}-1){\cdot}M^{\abs{I}-t}$$
$\mu$~is a one-to-one correspondence between the index sets~$[M]^{\abs{I}}$ and~$[m]$.
We slightly abuse notation, and denote by $(d_{i_1}(\mu),\ldots,d_{i_{\abs{I}}}(\mu))$ the tuple in~$[M]^{\abs{I}}$ that maps to~$\mu\in[m]$.
Additionally, we denote the function $\prod\nolimits_{t=1}^{\abs{I}} f_{\theta_{d_{i_t}(\mu)}}(\x_{i_t})$, which according to fact~\ref{fact:prod} in app.~\ref{app:proofs:sep_mat_ranks_equal} belongs to~$L^2((R^s)^{\abs{I}})$, by~$\phi_{\mu}(\x_{i_1},\ldots,\x_{i_{\abs{I}}})$.
In the exact same manner, we let~$m'=M^{\abs{J}}$, and define the bijective mapping:
$$\mu':[M]^{\abs{J}}\to[m']\quad,\quad
\mu'(d_{j_1},\ldots,d_{j_{\abs{J}}})=1+\sum\nolimits_{t=1}^{\abs{J}}(d_{j_t}-1){\cdot}M^{\abs{J}-t}$$
As before, $(d_{j_1}(\mu'),\ldots,d_{j_{\abs{J}}}(\mu'))$ stands for the tuple in~$[M]^{\abs{J}}$ that maps to~$\mu'\in[m']$, and the function $\prod\nolimits_{t=1}^{\abs{J}} f_{\theta_{d_{j_t}(\mu')}}(\x_{j_t}){\in}L^2((R^s)^{\abs{J}})$ is denoted by~$\phi'_{\mu'}(\x_{j_1},\ldots,\x_{j_{\abs{J}}})$.
Now, recall the definition of matricization given in sec.~\ref{sec:prelim}, and consider $\mat{\A^y}_{I,J}$~--~the matricization of the coefficient tensor~$\A^y$ \wrt~$(I,J)$.
This is a matrix of size $m$-by-$m'$, holding $\A_{d_1{\ldots}d_N}$ in row index $\mu(d_{i_1},\ldots,d_{i_{\abs{I}}})$ and column index $\mu'(d_{j_1},\ldots,d_{j_{\abs{J}}})$.
Rewriting eq.~\ref{eq:h_y_I_J} with the indexes~$\mu$ and~$\mu'$ instead of $(d_{i_1},\ldots,d_{i_{\abs{I}}})$ and $(d_{j_1},\ldots,d_{j_{\abs{J}}})$, we obtain:
\be
h_y\left(\x_1,\ldots,\x_N\right)=
\sum\nolimits_{\mu=1}^{m}
\sum\nolimits_{\mu'=1}^{m'}
\left(\mat{\A^y}_{I,J}\right)_{\mu,\mu'}\cdot
\phi_{\mu}(\x_{i_1},\ldots,\x_{i_{\abs{I}}})
\phi'_{\mu'}(\x_{j_1},\ldots,\x_{j_{\abs{J}}})
\label{eq:h_y_I_J_mat}
\ee
This equation has the form of eq.~\ref{eq:orth_sep_decomp}.
However, for it to qualify as an orthonormal separable decomposition, the sets of functions $\{\phi_1,\ldots,\phi_m\}{\subset}L^2((\R^s)^{\abs{I}})$ and $\{\phi'_1,\ldots,\phi'_{m'}\}{\subset}L^2((\R^s)^{\abs{J}})$ must be orthonormal.
If the latter holds eq.~\ref{eq:normalized_dist_formula} may be applied, giving an expression for $D(h_y;I,J)$~--~the normalized $L^2$ distance of~$h_y$ from the set of separable functions \wrt~$(I,J)$, in terms of the singular values of~$\mat{\A^y}_{I,J}$.

\medskip

We now direct our attention to the special case where $f_{\theta_1}{\ldots}f_{\theta_M}{\in}L^2(\R^s)$~--~the network's representation functions, are known to be orthonormal.
The general setting, in which only linear independence is known, will be treated thereafter.
Orthonormality of representation functions implies that $\phi_1\ldots\phi_m{\in}L^2((\R^s)^{\abs{I}})$ are orthonormal as well:
\beas
\inprod{\phi_{\mu}}{\phi_{\bar{\mu}}}
&\underset{(1)}{=}&\int\phi_{\mu}(\x_{i_1},\ldots,\x_{i_{\abs{I}}})\phi_{\bar{\mu}}(\x_{i_1},\ldots,\x_{i_{\abs{I}}})d\x_{i_1}{\cdots}d\x_{i_{\abs{I}}} \\
&\underset{(2)}{=}&\int\prod\nolimits_{t=1}^{\abs{I}}f_{\theta_{d_{i_t}(\mu)}}(\x_{i_t})\prod\nolimits_{t=1}^{\abs{I}}f_{\theta_{d_{i_t}(\bar{\mu})}}(\x_{i_t})d\x_{i_1}{\cdots}d\x_{i_{\abs{I}}} \\
&\underset{(3)}{=}&\prod\nolimits_{t=1}^{\abs{I}}\int f_{\theta_{d_{i_t}(\mu)}}(\x_{i_t})f_{\theta_{d_{i_t}(\bar{\mu})}}(\x_{i_t})d\x_{i_t} \\
&\underset{(4)}{=}&\prod\nolimits_{t=1}^{\abs{I}}\inprod{f_{\theta_{d_{i_t}(\mu)}}}{f_{\theta_{d_{i_t}(\bar{\mu})}}} \\
&\underset{(5)}{=}&\prod\nolimits_{t=1}^{\abs{I}}
\left\{
\begin{array}{ll}
1  & ,d_{i_t}(\mu)=d_{i_t}(\bar{\mu}) \\
0  & ,\text{otherwise}
\end{array}
\right\} \\
&\underset{(6)}{=}&
\left\{
\begin{array}{ll}
1  & ,d_{i_t}(\mu)=d_{i_t}(\bar{\mu})~~\forall{t\in[|I|]} \\
0  & ,\text{otherwise}
\end{array}
\right. \\
&\underset{(7)}{=}&
\left\{
\begin{array}{ll}
1  & ,\mu=\bar{\mu} \\
0  & ,\text{otherwise}
\end{array}
\right.
\eeas
$(1)$~and~$(4)$ here follow from the definition of inner product in $L^2$ space, $(2)$~replaces~$\phi_{\mu}$ and~$\phi_{\bar{\mu}}$ by their definitions, $(3)$~makes use of Fubini's theorem (see~\cite{jones2001lebesgue}), $(5)$~relies on the (temporary) assumption that representation functions are orthonormal, $(6)$~is a trivial step, and $(7)$~owes to the fact that $\mu\mapsto(d_{i_1}(\mu),\ldots,d_{i_{\abs{I}}}(\mu))$ is an injective mapping.
A similar sequence of steps (applied to $\langle\phi'_{\mu'},\phi'_{\bar{\mu}'}\rangle$) shows that in addition to $\phi_1\ldots\phi_{m}$, the functions $\phi'_1\ldots\phi'_{m'}{\in}L^2((\R^s)^{\abs{J}})$ will also be orthonormal if $f_{\theta_1}{\ldots}f_{\theta_M}$ are.
We conclude that if representation functions are orthonormal, eq.~\ref{eq:h_y_I_J_mat} indeed provides an orthonormal separable decomposition of~$h_y$, and the formula in eq.~\ref{eq:normalized_dist_formula} may be applied:
\be
D(h_y;I,J)=\sqrt{1-\frac{\sigma_{1}^2(\mat{\A^y}_{I,J})}{\sigma_{1}^2(\mat{\A^y}_{I,J})+\cdots+\sigma_{\min\{m,m'\}}^2(\mat{\A^y}_{I,J})}}
\label{eq:normalized_dist_formula_cac}
\ee
where $\sigma_{1}(\mat{\A^y}_{I,J})\geq\cdots\geq\sigma_{\min\{m,m'\}}(\mat{\A^y}_{I,J})\geq0$ are the singular values of the coefficient tensor matricization~$\mat{\A^y}_{I,J}$.

In sec.~\ref{sec:analysis:deep} we showed that the maximal separation rank realizable by a deep network is greater than or equal to~$\min\{r_0,M\}^S$, where~$M,r_0$ are the number of channels in the representation and first hidden layers (respectively), and~$S$ stands for the number of index quadruplets (sets of the form $\{4k\text{-}3,4k\text{-}2,4k\text{-}1,4k\}$ for some~$k\in[N/4]$) that are split by the partition~$(I,J)$.
To prove this lower bound, we presented in app.~\ref{app:proofs:deep_mat_rank_bounds} a specific setting for the linear weights of the network ($\{\aaa^{l,\gamma}\}_{l,\gamma},\aaa^{L,y}$) under which $rank\mat{\A^y}_{I,J}=\min\{r_0,M\}^S$.
Careful examination of the proof shows that with this particular weight setting, not only is the rank of~$\mat{\A^y}_{I,J}$ equal to~$\min\{r_0,M\}^S$, but also, all of its non-zero singular values are equal to one another.
\footnote{
To see this, note that with the specified weight setting, for every~$n\in[N/4]$, $\mat{\phi^{1,1}}_{I_{1,n},J_{1,n}}$~has one of two forms: it is either a non-zero (row/column) vector, or it is a matrix holding~$1$ in several entries and~$0$ in all the rest, where any two entries holding~$1$ reside in different rows and different columns.
The first of the two forms admits a single non-zero singular value.
The second brings forth several singular values equal to~$1$, possibly accompanied by null singular values.
In both cases, all non-zero singular values of~$\mat{\phi^{1,1}}_{I_{1,n},J_{1,n}}$ are equal to one another.
Now, since $\mat{\A^y}_{I,J}=\odot_{n=1}^{N/4} \mat{\phi^{1,1}}_{I_{1,n},J_{1,n}}$, and since the Kronecker product multiplies singular values (see~\cite{bellman1970introduction}), we have that all non-zero singular values of~$\mat{\A^y}_{I,J}$ are equal, as required.
}
This implies that $\sigma_{1}^2(\mat{\A^y}_{I,J})/(\sigma_{1}^2(\mat{\A^y}_{I,J})+\cdots+\sigma_{\min\{m,m'\}}^2(\mat{\A^y}_{I,J}))=\min\{r_0,M\}^{-S}$, and since we currently assume that $f_{\theta_1}{\ldots}f_{\theta_M}$ are orthonormal, eq.~\ref{eq:normalized_dist_formula_cac} applies and we obtain $D(h_y;I,J)=\sqrt{1-\min\{r_0,M\}^{-S}}$.
Maximizing over all possible weight settings, we arrive at the following lower bound for the normalized $L^2$ distance from separable functions brought forth by a deep convolutional arithmetic circuit:
\be
\sup_{\{\aaa^{l,\gamma}\}_{l,\gamma}~,~\aaa^{L,y}}D\left(h_y|_{\{\aaa^{l,\gamma}\}_{l,\gamma},\aaa^{L,y}}~;I,J\right)\geq\sqrt{1-\frac{1}{\min\{r_0,M\}^S}}
\label{eq:normalized_dist_max_lb}
\ee

\medskip

Turning to the general case, we omit the assumption that representation functions $f_{\theta_1}{\ldots}f_{\theta_M}{\in}L^2(\R^s)$ are orthonormal, and merely rely on their linear independence.
The latter implies that the dimension of $span\{f_{\theta_1}{\ldots}f_{\theta_M}\}$ is~$M$, thus there exist orthonormal functions $\varphi_1{\ldots}\varphi_M{\in}L^2(\R^s)$ that span it.
Let~$F\in\R^{M{\times}M}$ be a transition matrix between the bases~--~the matrix defined by $\varphi_c=\sum_{d=1}^{M}F_{c,d}{\cdot}f_{\theta_d},~\forall{c\in[M]}$.
Suppose now that we replace the original representation functions $f_{\theta_1}{\ldots}f_{\theta_M}$ by the orthonormal ones $\varphi_1{\ldots}\varphi_M$.
Using the latter, the lower bound in eq.~\ref{eq:normalized_dist_max_lb} applies, and there exists a setting for the linear weights of the network~--~$\{\aaa^{l,\gamma}\}_{l,\gamma},\aaa^{L,y}$, such that $D(h_y;I,J){\geq}\sqrt{1-\min\{r_0,M\}^{-S}}$.
Recalling the structure of convolutional arithmetic circuits (fig.~\ref{fig:nets_patterns}(a)), one readily sees that if we return to the original representation functions $f_{\theta_1}{\ldots}f_{\theta_M}$, while multiplying conv weights in hidden layer~$0$ by~$F^\top$ (\ie~mapping $\aaa^{0,\gamma}{\mapsto}F^\top\aaa^{0,\gamma}$), the overall function~$h_y$ remains unchanged, and in particular $D(h_y;I,J){\geq}\sqrt{1-\min\{r_0,M\}^{-S}}$ still holds.
We conclude that the lower bound in eq.~\ref{eq:normalized_dist_max_lb} applies, even if representation functions are not orthonormal.

\medskip

To summarize, we translated the lower bound from sec.~\ref{sec:analysis:deep} on the maximal separation rank realized by a deep convolutional arithmetic circuit, into a lower bound on the maximal normalized $L^2$ distance from separable functions (eq.~\ref{eq:normalized_dist_max_lb}).
This, along with the translation of upper bounds facilitated in app.~\ref{app:sep_rank_L2:ub_sep_rank}, implies that the analysis carried out in the paper, which studies correlations modeled by convolutional networks through the notion of separation rank, may equivalently be framed in terms of normalized $L^2$ distance from separable functions.
We note however that there is one particular aspect in our original analysis that does not carry through the translation.
Namely, in sec.~\ref{sec:analysis:sep2mat} it was shown that separation ranks realized by convolutional arithmetic circuits are maximal almost always, \ie~for all linear weight settings but a set of (Lebesgue) measure zero.
Put differently, for a given partition~$(I,J)$, the maximal separation rank brought forth by a network characterizes almost all functions realized by it.
An equivalent statement does not hold with the continuous measure of normalized $L^2$ distance from separable functions.
The behavior of this measure across the hypotheses space of a network is non-trivial, and forms a subject for future research.

\section{Implementation details} \label{app:impl}

In this appendix we provide implementation details omitted from the description of our experiments in sec.~\ref{sec:exp}.
Our implementation, available online at \url{https://github.com/HUJI-Deep/inductive-pooling}, is based on the SimNets branch (\cite{\deepsimnets}) of Caffe toolbox (\cite{jia2014caffe}).
The latter realizes convolutional arithmetic circuits in log-space for numerical stability.

When training convolutional arithmetic circuits, we followed the hyper-parameter choices made by~\cite{\tmm}.
In particular, our objective function was the cross-entropy loss with no $L^2$~regularization (\ie~with weight decay set to~$0$), optimized using Adam (\cite{kingma2014adam}) with step-size~$\alpha=0.003$ and moment decay rates~$\beta_1=\beta_2=0.9$.
$15000$~iterations with batch size~$64$ ($48$~epochs) were run, with the step-size~$\alpha$ decreasing by a factor of~$10$ after $12000$ iterations ($38.4$~epochs).
We did not use dropout~(\cite{srivastava2014dropout}), as the limiting factor in terms of accuracies was the difficulty of fitting training data (as opposed to overfitting)~--~see fig.~\ref{fig:exp_results_cac_pool4}.

For training the conventional convolutional rectifier networks, we merely switched the hyper-parameters of Adam to the recommended settings specified in~\cite{kingma2014adam} ($\alpha=0.001,\beta_1=0.9,\beta_2=0.999$), and set weight decay to the standard value of~$0.0001$.

\section{Morphological closure} \label{app:morph_closure}

The synthetic dataset used in our experiments (sec.~\ref{sec:exp}) consists of binary images displaying different shapes (blobs).
One of the tasks facilitated by this dataset is the detection of morphologically closed blobs, \ie~of images that are relatively similar to their morphological closure.
The procedure we followed for computing the morphological closure of a binary image is:
\begin{enumerate}
\item Pad the given image with background ($0$~value) pixels
\item Morphological dilation: simultaneously turn on (set to~$1$) all pixels that have a (left, right, top or bottom) neighbor originally active (holding~$1$)
\item Morphological erosion: simultaneously turn off (set to~$0$) all pixels that have a (left, right, top or bottom) neighbor currently inactive (holding~$0$)
\item Remove pixels introduced in padding
\end{enumerate}
It is not difficult to see that any pixel active in the original image is necessarily active in its closure.
Moreover, pixels that are originally inactive yet are surrounded by active ones will also be turned on in the closure, hence the effect of ``gap filling''.
Finally, we note that the particular sequence of steps described above represents the most basic form of morphological closure.
The interested reader is referred to~\cite{haralick1987image} for a much more comprehensive introduction.

\end{document}